\newtheorem{definition}{Definition}
\newtheorem{theorem}{Theorem}
\newtheorem{lemma}{Lemma}
\DeclareMathAlphabet{\mathcal}{OMS}{cmsy}{m}{n}
\newcommand{\mc}[1]{\textcolor{blue}{#1}} 
\newcommand{\nop}[1]{}
\newcommand{\wanglj}[1]{{\textbf{\color{orange} #1}}}
\newcommand{\figwidthone}{0.47} 
\def\BibTeX{{\rm B\kern-.05em{\sc i\kern-.025em b}\kern-.08em
    T\kern-.1667em\lower.7ex\hbox{E}\kern-.125emX}}
\begin{document}

\title{Exact and Consistent Interpretation of Piecewise Linear Models Hidden behind APIs: A Closed Form Solution}

\author{
\IEEEauthorblockN{Zicun Cong$^{*}$, Lingyang Chu$^{\S}$, Lanjun Wang$^{\S}$, Xia Hu$^{*}$, Jian Pei$^{*}$}
    \IEEEauthorblockA{
        $^{*}$Simon Fraser University, Burnaby, Canada \\
        $^{\S}$ Huawei Technologies Canada Co., Ltd., Burnaby, Canada\\
        Emails: \{zcong, huxiah, jpei\}@sfu.ca, \{lingyang.chu1, lanjun.wang\}@huawei.com
    }
}

\maketitle

\begin{abstract}
More and more AI services are provided through APIs on cloud where predictive models are hidden behind APIs. To build trust with users and reduce potential application risk, it is important to interpret how such predictive models hidden behind APIs make their decisions.
The biggest challenge of interpreting such predictions is that no access to model parameters or training data is available. 
Existing works interpret the predictions of a model hidden behind an API by heuristically probing the response of the API with perturbed input instances.
However, these methods do not provide any guarantee on the exactness and consistency of their interpretations.
In this paper, we propose an elegant closed form solution named \texttt{OpenAPI} to compute exact and consistent interpretations for the family of Piecewise Linear Models (PLM), which includes many popular classification models.
The major idea is to first construct a set of overdetermined linear equation systems with a small set of perturbed instances and the predictions made by the model on those instances. Then, we solve the equation systems to identify the decision features that are responsible for the prediction on an input instance. Our extensive experiments clearly demonstrate the exactness and consistency of our method.

\end{abstract}

\nop{
\begin{IEEEkeywords}
\mc{TO BE DECICED}
\end{IEEEkeywords}
}

\section{Introduction}
More and more machine learning systems are deployed as cloud services to make important decisions routinely in many application areas, such as medicine, biology, financial business, and autonomous vehicles~\cite{Goodfellow-et-al-2016}.
As more and more decisions in both number and importance are made, the demand on clearly interpreting these decision making processes is becoming ever stronger~\cite{goodman2016european}.
Accurately and reliably interpreting these decision making processes is the key to many essential tasks, such as detecting model failures~\cite{agrawal2016analyzing}, building trust with public users~\cite{ribeiro2016should}, and preventing models from unfairness~\cite{zemel2013learning}.

Many methods have been proposed to interpret a machine learning model (see Section~\ref{sec:rw} for a brief review).
Most of those methods are applicable only when they have full access to training data and model parameters.
Unfortunately,  they cannot interpret decisions made by machine learning models encapsulated by cloud services, because service providers always protect and hide their sensitive training data and predictive models as top commercial secrets~\cite{tramer2016stealing}. 
More often than not, only application program interfaces (APIs) are provided to public users.


The local perturbation methods~\cite{ribeiro2016should, ribeiro2018anchors, chen2017zoo, fong2017interpretable} are developed to interpret predictive models that only APIs but no training data or model parameters are known. The major idea is to identify the decision features of a model by analyzing the predictions on a set of perturbed instances that are generated by perturbing (i.e., slightly modifying) the features of an instance to be interpreted. However, since the space of possible feature perturbations is exponentially large with respect to the dimensionality of the feature space, those methods can only heuristically search a tiny portion of the perturbation space in a reasonable amount of time. There is no guarantee that the decision features found are exactly the decision features of the model to be interpreted~\cite{chu2018exact}. The reliability of the explanations still remains an unsolved big challenge~\cite{du2018techniques}. Poor interpretations may mislead users in many scenarios~\cite{ribeiro2018anchors}.

\emph{Can we compute exact and consistent interpretations of decisions made by predictive models hidden behind cloud service APIs?}
In this paper, affirmatively we provide an elegant closed form solution for the family of piecewise linear models. 
Here, a \textbf{piecewise linear model (PLM)} is a non-linear classification model whose classification function is a piecewise linear function.
In other words, a PLM consists of many locally linear regions, such that all instances in the same locally linear region are classified by the same locally linear classifier~\cite{chu2018exact}.
The family of PLM hosts many popular classification models, such as logistic model trees~\cite{landwehr2005logistic, sumner2005speeding} and the entire family of piecewise linear neural networks~\cite{chu2018exact} that use MaxOut~\cite{goodfellow2013maxout} or ReLU family~\cite{glorot2011deep, nair2010rectified,he2015delving} as activation functions. For example, the implementations of the AlexNet~\cite{krizhevsky2012imagenet}, the VGG Net~\cite{simonyan2014very}, and the ResNet~\cite{he2016deep} all belong to the family of PLM.
Due to the extensive applications~\cite{lecun2015deep} and tremendous practical successes~\cite{krizhevsky2012imagenet} of piecewise linear models, exact interpretations of piecewise linear models hidden behind APIs are greatly useful in many critical application tasks.


%


Our major technical contribution in this paper is to develop \texttt{OpenAPI},  a method to exactly interpret the predictions made by a PLM model behind an API without accessing model parameters or training data.
Specifically, \texttt{OpenAPI} identifies the decision feature, which is a vector
showing the importance degree of each feature, for an  instance to be interpreted by finding the closed form solutions to a set of overdetermined linear equation systems. The equation systems are simply constructed using a small set of sampled instances.
We prove that the decision features identified by \texttt{OpenAPI} are exactly the decision features of the PLM with probability $1$.
Our interpretations are consistent within each locally linear region, because
\texttt{OpenAPI} accurately identifies the decision features of a locally linear classifier, and those decision features are the same for all instances in the same locally linear region.
We conduct extensive experiments to demonstrate the exactness and consistency of our interpretations superior to five state-of-the-art interpretation methods~\cite{ribeiro2016should, chen2017zoo, simonyan2013deep, shrikumar2016not, sundararajan2017axiomatic}.

The rest of the paper is organized as follows. 
We review related works in Section~\ref{sec:rw}, and formulate our problem in Section~\ref{sec:prob}. We develop \texttt{OpenAPI} in Section~\ref{sec:oam}, and present the experimental results in Section~\ref{sec:exp}.  We conclude the paper in Section~\ref{sec:con}.



\section{Related Works}
\label{sec:rw}
How to interpret decisions made by predictive models is an emerging and challenging problem. There are four major groups of methods, briefly reviewed here.


First, the \textbf{instance attribution methods} find the training instances that significantly influence the prediction on an instance to be interpreted.
%
%
Wojnowicz~\emph{et~al.}~\cite{wojnowicz2016influence} used influence sketching to identify the training instances that strongly affect the fit of a regression model by efficiently estimating Cook's distance~\cite{cook1977detection}.
Koh~\emph{et~al.}~\cite{koh2017understanding} proposed influence functions to trace the prediction of a model and identify the training instances that are the most responsible for the prediction.
Bien~\emph{et~al.}~\cite{bien2011prototype} proposed a prototype selection algorithm to find a small set of representative training instances that capture the full variability of a class without confusing with the other classes.
Zhou~\emph{et~al.}~\cite{zhou2017interpreting} 
identified the instances that dominate the activation of the same hidden neuron of a convolutional neural network, and used the common labeled concept of those instances to interpret the semantic of the hidden neuron.


The instance attribution methods rely heavily on training data, which, unfortunately, is unavailable in most of the practical scenarios where only the APIs of the predictive models are provided.\nop{\wanglj{tbc}}









Second, the \textbf{model intimating methods}
train a self-explaining model to intimate the predictions of a deep neural network~\cite{ba2014deep, che2015distilling, hinton2015distilling}.
Hinton~\emph{et~al.}~\cite{hinton2015distilling} proposed to distill the knowledge of a large neural network by training a smaller network to imitate the predictions of the large network.
To make the distilled knowledge easier to understand, Frosst~\emph{et~al.}~\cite{frosst2017distilling} extended the distillation method~\cite{hinton2015distilling} by training a soft decision tree to mimic the predictions of a deep neural network.
Ba~\emph{et~al.}~\cite{ba2014deep} trained a shallow mimic network to distill the knowledge of one or more deep neural networks.
Wu~\emph{et~al.}~\cite{wu2017beyond} used a binary decision tree to mimic and regularize the prediction function of a deep time-series model.
Guo~\emph{et~al.}~\cite{guo2018explaining} trained a Dirichlet Process regression mixture model to approximate the decision boundary of the intimated model near an instance to be interpreted.


The model intimating methods produce understandable interpretations. 
They, however, cannot be directly applied to interpret models hidden behind APIs, because they cannot access training data to conduct mimic training.
Moreover, since a mimic model is not exactly the same as the intimated model, the interpretations may not exactly match the real behavior of the intimated model~\cite{chu2018exact}.








Third, the \textbf{gradient analysis methods}~\cite{simonyan2013deep, zhou2016learning, sundararajan2017axiomatic} find the important decision features for an instance to be interpreted by analyzing the gradient of the prediction score with respect to the instance.
Simonyan~\emph{et~al.}~\cite{simonyan2013deep} generated a class-saliency map and a class-representative image for each class of instances by computing the gradient of the class score with respect to an input instance.
Zhou~\emph{et~al.}~\cite{zhou2016learning} proposed CAM to find discriminative instance regions for each class using the global average pooling in Convolutional Neural Networks (CNN).
Selvaraju~\emph{et~al.}~\cite{selvaraju2016grad} generalized CAM~\cite{zhou2016learning} to Grad-CAM by identifying important regions of an image, i.e., a sub-matrix, and propagating class-specific gradients into the last convolutional layer of a CNN.
%
%
%
Smilkov~\emph{et~al.}~\cite{smilkov2017smoothgrad} proposed SmoothGrad to visually sharpen the gradient-based sensitivity map of an image to be interpreted.
Chu~\emph{et~al.}~\cite{chu2018exact} transformed a piecewise linear neural network into a set of locally linear classifiers, and interpreted the prediction on an input instance by analyzing the gradients of all neurons with respect to the instance.



The interpretations produced by the gradient analysis methods are faithful to the real behavior of the model to be interpreted. The computation of gradients, however,  requires full access to model parameters, which is usually not provided by the predictive models hidden behind APIs.



Last, the \textbf{local perturbation methods} interpret the behavior of a predictive model in a small neighborhood of the instance to be interpreted.
The key idea is to use a simple and interpretable model to analyze the predictions on a set of perturbed instances generated by perturbing  the features of the instance to be interpreted. 
Ribeiro~\emph{et~al.}~\cite{ribeiro2016should} proposed \texttt{LIME} to capture the decision features for an instance to be interpreted by training a linear model that fits the predictions on a sample of the perturbed instances.
They also proposed \texttt{Anchors}~\cite{ribeiro2018anchors}
to find the explanatory rules that dominate the predictions on a sample of perturbed instances.
%
%
Fong~\emph{et~al.}~\cite{fong2017interpretable} proposed to interpret the classification result of an image by finding the smallest pixel-deletion mask that causes the most significant drop of the prediction score. 

The local perturbation methods, on the one hand, generate interpretations easy to understand without accessing model parameters or training data. 
On the other hand, the interpretations may not be even correct, since the interpretation error is proportional to $f(\epsilon, n) + g(m)$, where the first component $f(\epsilon, n)$ represents the \emph{parameter related error}, $\epsilon$ being the perturbation distance and $n$ the number of perturbed instances, and the second component $g(m)$ is the \emph{approximate model related error} of the approximate model $m$.
Parameters not well selected may lead to a large error $f(\epsilon, n)$. The perturbation distances may be so large that the target model's behaviors on those perturbed instances are too complicated to be learned by a simple model. The \emph{approximate model related error} is due to the weaker approximation capabilities of simple models. For example, a linear model cannot exactly describe the non-linear behavior of a target model.

Although existing methods can decrease the errors in their interpretations using smaller neighborhoods, more perturbed instances, and better approximate models, the errors cannot be eliminated due to the following reason.
Different instances may have different applicable perturbation distances, that is, radii where the same interpretations still apply. The proper perturbation distance for an instance can be arbitrarily small, as the instance can be arbitrarily close to the boundary of the locally linear regions. 

Figure~\ref{fig:hardness} elaborates the subtlety. Suppose the 2-dimensional input space is separated into two regions by a PLM (the solid boundary). Each region has a unique linear classifier whose decision boundaries are the dashed lines. The two red solid boxes of the same size represent the neighborhoods of two instances, $A$ and $B$. As the neighborhood of $A$ completely falls into a class region of the PLM, the PLM behaves linearly there. Thus, the existing methods can obtain accurate interpretations for the prediction on $\mathbf{A}$ by applying a simple model to analyze the perturbed instances from the neighborhood. However, the neighborhood of $B$ overlaps the decision boundary and thus the PLM does not behave linearly in the neighborhood of $B$.  Consequently, the existing methods cannot find a simple model performing exactly the same as the PLM. 

The existing methods rely on a user defined perturbation distance. However, without accessing the parameters of a target model, it is impossible to find a universally applicable perturbation distance. One may wonder whether we can shrink the neighborhood size adaptively until the approximate models perfectly fit the perturbed instances. Unfortunately, the numerical optimization techniques used to train the approximate models, such as gradient descent, do not allow the current methods to reach the exact solutions~\cite{LR02}. 
\nop{Since the existing methods cannot guarantee the exactness of interpretations, users cannot tell whether an unexpected explanation is caused by the misbehavior of the model or by the limitations of the explanation methods~\cite{du2018techniques}.}
The fact that existing methods cannot guarantee  exactness of interpretations prevents them from being trusted by users. When the internal parameters of a target model are unavailable, users cannot verify the correctness of the interpretations. Therefore, users cannot tell whether an unexpected explanation is caused by the misbehavior of the model or by the limitations of the explanation methods~\cite{du2018techniques}.






In this paper, we develop \texttt{OpenAPI} to overcome the shortage. \texttt{OpenAPI} guarantees to find the exact decision features of the model to be interpreted with probability $1$, and thus leads to a significant advantage on producing exact and consistent interpretations for the PLMs hidden behind APIs.

\begin{figure}
    \centering
    \includegraphics[scale=0.08]{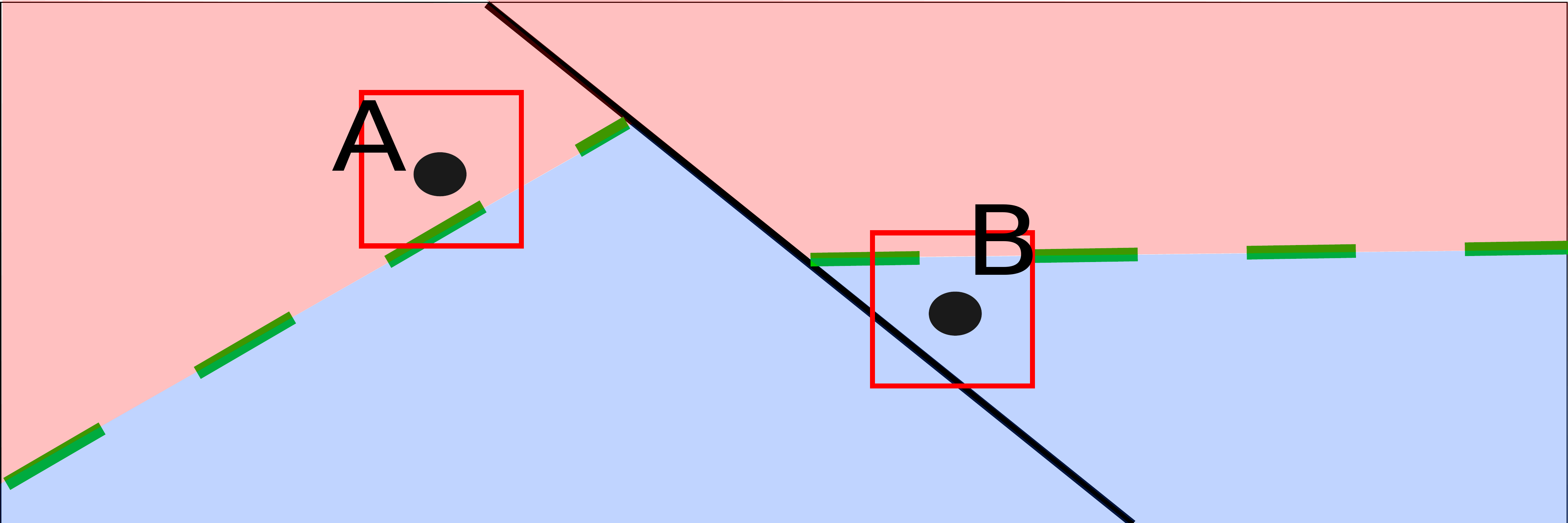}
    \caption{The hardness of getting exact interpretations for PLMs. }
    \label{fig:hardness}
\end{figure}




\section{Problem Definition}
\label{sec:prob}
Denote by $\mathcal{N}$ a piecewise linear model (PLM), and by $\mathbf{x}\in\mathcal{X}$ an \textbf{input instance} of $\mathcal{N}$, where $\mathcal{X}\in\mathbb{R}^d$ is a $d$-dimensional input space. $\mathbf{x}$ is also called an \textbf{instance} for short.
The \textbf{output} of $\mathcal{N}$ is $\mathbf{y}\in\mathcal{Y}$, where $\mathcal{Y}\in\mathbb{R}^C$ is a $C$-dimensional output space, and $C$ is the total number of classes.

A PLM works as a piecewise linear classification function $F:\mathcal{X}\rightarrow\mathcal{Y}$ that maps an input $\mathbf{x}\in\mathcal{X}$ to an output $\mathbf{y}\in\mathcal{Y}$. 
Denote by $\mathcal{X}_k\subset \mathcal{X}$ the $k$-th \textbf{locally linear region} of $\mathcal{X}$ such that $F(\cdot)$ operates as a locally linear classifier in $\mathcal{X}_k$. 

Denote by $K$ the number of all locally linear regions of $F(\cdot)$. Then, $\{\mathcal{X}_1, \ldots, \mathcal{X}_K\}$ forms a partition of $\mathcal{X}$, that is, $\cup_{k=1}^K \mathcal{X}_k = \mathcal{X}$, and $\mathcal{X}_k \cap \mathcal{X}_h = \emptyset$ when $k\neq h$. 
For common PLMs such as logistic model trees~\cite{landwehr2005logistic, sumner2005speeding} and piecewise linear neural networks~\cite{montufar2014number, chu2018exact, pascanu2013number}, the number of locally linear regions is finite.


Without loss of generality, we write the locally linear classifier in $\mathcal{X}_k$ as 
\begin{equation}\nonumber
	\sigma(W_k^{\top}\mathbf{x} + \mathbf{b}_k),
\end{equation}
where $W_k\in \mathbb{R}^{d\times C}$ is a $d$-by-$C$ dimensional coefficient matrix of $\mathbf{x}\in\mathcal{X}_k$, $\mathbf{b}_k\in\mathbb{R}^C$ is a $C$-dimensional bias vector, and $\sigma(\cdot)$ is a probabilistic scoring function, which can be sigmoid and softmax for binary classification and multi-class classification, respectively.
Since the softmax function is the general form of the sigmoid function, we assume $\sigma(\cdot)$ to be the softmax function by default, and write the complete form of $F(\cdot)$ as follows.
\begin{equation}\nonumber
    F(\mathbf{x}) =
    \begin{cases}
        \sigma(W_1^{\top}\mathbf{x} + \mathbf{b}_1) & \text{if $\mathbf{x} \in \mathcal{X}_1$}\\
        \sigma(W_2^{\top}\mathbf{x} + \mathbf{b}_2) & \text{if $\mathbf{x} \in \mathcal{X}_2$}\\
        \quad\quad\quad\vdots \\
        \sigma(W_K^{\top}\mathbf{x} + \mathbf{b}_K) & \text{if $\mathbf{x} \in \mathcal{X}_K$}\\
    \end{cases}
\end{equation}

Given an input instance $\mathbf{x}$, without loss of generality, denote by $\mathcal{X}_k$ the locally linear region that contains $\mathbf{x}$, the classification on $\mathbf{x}$ is uniquely determined by the locally linear classifier $\sigma(W_k^{\top}\mathbf{x} + \mathbf{b}_k)$. 
For the sake of simplicity, we omit the subscript $k$ when it is clear from the context, and write the classification result of $\mathbf{x}$ as 
\begin{equation}\nonumber
	\mathbf{y}=\sigma(W^\top\mathbf{x} + \mathbf{b})
\end{equation}


Following a principled approach of interpreting a machine learning model~\cite{bishop2007pattern, ribeiro2016should, chen2017zoo}, 
we regard an interpretation on the classification result of an input instance $\mathbf{x}$ as the decision features that classify $\mathbf{x}$ as one class and distinguish $\mathbf{x}$ from the other $C-1$ classes.
The formal definition of decision features will be discussed in Section~\ref{sec:decisionf}.
Formally, we define the task to interpret PLMs hidden behind APIs as follows.

\begin{definition}
Given the API of a PLM $\mathcal{N}$ and an input instance $\mathbf{x}\in\mathcal{X}$, for each class $c\in\{1, \ldots, C\}$, our task is to identify the decision features of $\mathcal{N}$ that classify $\mathbf{x}$ as class $c$ and distinguish it from the other $C-1$ classes.
\end{definition}
















\section{Interpretation Methods}
\label{sec:oam}
In this section, we first introduce the decision features of a PLM in classifying an instance. Then, we illustrate a naive method to compute the decision features of a PLM under an ideal case. Last, as the ideal case may not always appear, we introduce the \texttt{OpenAPI} method that computes the exact decision features without using any training data or model parameters.


\subsection{Decision Features of a PLM}
\label{sec:decisionf}

Some existing methods~\cite{ancona2018towards} interpret model predictions by computing the partial derivatives of model outputs with respect to input features. The partial derivatives are used as importance weights of features. However, those methods  do not work well for PLMs hidden behind APIs. First, to reliably compute the exact partial derivatives, the internal parameters of PLMs are needed. Second, for all instances in the same locally linear region, the weights of the corresponding features have to be consistent, as they are classified by the same locally linear classifier\cite{chu2018exact}. However, the feature weights computed by the gradient-based methods are different for different input instances.

Based on the coefficient matrices of the locally linear classifiers, we propose a new way to interpret the predictions made by PLMs. Our proposed interpretation not only describes the behaviors of PLMs exactly but is also consistent for the predictions made by the same locally linear classifiers.

Consider the output $\mathbf{y}$ of a PLM $\mathcal{N}$ on an instance $\mathbf{x}$.  For any class $c\in\{1, \ldots, C\}$, the $c$-th entry of $\mathbf{y}$, denoted by $\mathbf{y}_c$, is the probability to predict $\mathbf{x}$ as class $c$.
Denote by $W_c\in\mathbb{R}^d$ the $c$-th column of $W$ and by $\mathbf{b}_c\in\mathbb{R}$ the $c$-th entry of $\mathbf{b}$, we can expand the locally linear classifier $\mathbf{y}=\sigma(W^\top\mathbf{x} + \mathbf{b})$ and write the $c$-th entry of $\mathbf{y}$ as $\mathbf{y}_c\varpropto e^{W_c^\top \mathbf{x} + \mathbf{b}_c}$.

Following the routine of interpreting conventional linear classifiers, such as Logistic Regression and linear SVM \cite{bishop2007pattern}, $W_c$ is the vector of weights for all features in predicting $\mathbf{x}$ as class $c$. The features with positive (negative) weights in $W_c$ support (oppose) to predict $\mathbf{x}$ as class $c$.

Denote by $W_{c'}$, $c'\neq c$, the vector of weights for all features in predicting $\mathbf{x}$ as class $c'$. The difference between $W_c$ and $W_{c'}$, $D_{c, c'}=W_c - W_{c'}$, identifies the features that classify $\mathbf{x}$ as class $c$ and distinguishes $\mathbf{x}$ from class $c'$. To be specific, as $\mathbf{y}_c / \mathbf{y}_{c'} \varpropto e^{(W_c - W_{c'})^\top \mathbf{x} + \mathbf{b}_c - \mathbf{b}_{c'}}$, the input features of positive values in $D_{c, c'}$ increase the confidence of the model on class $c$ over class $c'$, and vice versa. As a result, $D_{c, c'}$ defines the \textbf{decision boundary} between class $c$ and class $c'$, thus is exactly the decision features of binary classification PLMs.

For general multi-class classification PLMs (i.e., $C\geq 2$), we interpret their predictions in the way of one-vs-the-rest. We can identify the decision features that classify $\mathbf{x}$ as class $c$ and distinguish it from the other $C-1$ classes by the average of the vectors $D_{c, c'}$ for all $c'\in\{1, \ldots, C\}\setminus c$.
Since $D_{c, c}=\mathbf{0}$, we can write this average of vectors as
\begin{equation}
\label{eqn:dc}
	D_c = \frac{1}{C-1}\sum_{c'=1}^C D_{c, c'}
\end{equation}

Here, the \textbf{decision features} $D_c$ are a $d$-dimensional vector that contains the importance weight of each feature in classifying $\mathbf{x}$ as class $c$. 
A feature with a larger absolute weight in $D_c$ is more important than one with a smaller absolute weight in classifying $\mathbf{x}$ as class $c$.
In addition, the signs of the weights in $D_c$ indicate the directions of the influences of the features on the prediction. The features of positive weights in $D_c$ support the predictions of the model on the class $c$ over any other classes, and vice versa.
In other words, $D_c$ is the answer to interpreting why a PLM classifies an instance $\mathbf{x}$ as class $c$ instead of some other classes. 
As $D_c$ is computed solely from the coefficient matrices of the locally linear classifiers, for two instances $\mathbf{x}$ and $\mathbf{x}'$ in the same locally linear region, they have the same $D_c$. This property enables our method to provide consistent interpretations for predictions made on instances from the same locally linear regions.


We can easily compute $D_c$ when the model parameters of a PLM are given.
For example, $D_c$ can be easily extracted from the model parameters of the conventional PLMs such as logistic model trees~\cite{landwehr2005logistic, sumner2005speeding}.
For piecewise linear neural networks, there is also an existing method~\cite{chu2018exact} that computes $D_c$ in polynomial time when the model parameters are given. 
However, none of the above methods can be used to compute $D_c$ when model parameters are unavailable.

\subsection{A Naive Method}
\label{method:naive}
To use only the API of a PLM to compute $D_c$ without accessing any model parameters, in this subsection, we introduce a naive method by solving $C-1$ determined linear equation systems. In an ideal case, the solution is exactly the same as $D_c$.

Given a tuple $(\mathbf{x}, \mathbf{y})$ where $\mathbf{x}\in\mathcal{X}$ is an input instance and $\mathbf{y}= \sigma(W^\top\mathbf{x} + \mathbf{b})$ is the prediction on $\mathbf{x}$, 
our goal is to compute $D_c$ for $\mathbf{x}$ by computing the set $\{D_{c, c'}\}$ such that $c'\in\{1, \ldots, C\}\setminus c$.





For $c$ and $c'$, denote by $B_{c, c'}=\mathbf{b}_{c} - \mathbf{b}_{c'}$ the difference between bias vectors $\mathbf{b}_{c}$ and $\mathbf{b}_{c'}$.
By decomposing the softmax function $\sigma(\cdot)$ of the locally linear classifier $\mathbf{y}=\sigma(W^\top\mathbf{x}+\mathbf{b})$, we have
\begin{equation}\nonumber
	\frac{\mathbf{y}_{c}}{\mathbf{y}_{c'}} 
	= \frac{e^{W_{c}^\top \mathbf{x} + \mathbf{b}_{c}}}{e^{W_{c'}^\top \mathbf{x} + \mathbf{b}_{c'}}}
	= e^{{D_{c,c'}}^\top \mathbf{x} + B_{c,c'}},
\end{equation}
which can be transformed into the following linear equation
\begin{equation}
\label{eqn:lineareq}
	{D_{c,c'}}^\top \mathbf{x} + B_{c,c'} = \ln(\frac{\mathbf{y}_{c}}{\mathbf{y}_{c'}})
\end{equation}

Since $\mathbf{x}$, $\mathbf{y}_c$ and $\mathbf{y}_{c'}$ are known variables, Equation~\ref{eqn:lineareq} contains $d+1$ unknown variables, which are the entries of the $d$-dimensional vector $D_{c,c'}\in\mathbb{R}^d$ and the scalar $B_{c, c'}\in \mathbb{R}$.

Tuple $(D_{c,c'}, B_{c,c'})$ fully characterizes the behavior of a locally linear classifier $\mathbf{y}=\sigma(W^\top\mathbf{x}+\mathbf{b})$ in classifying classes $c$ and $c'$.
If two locally linear classifiers have exactly the same $(D_{c,c'}, B_{c,c'})$ for every pair $c$ and $c'$, they produce exactly the same output $\mathbf{y}$ for the same input instance $\mathbf{x}$.
As a result, we call $(D_{c,c'}, B_{c,c'})$ the \textbf{core parameters} of a locally linear classifier in classifying classes $c$ and $c'$.
The core parameters of the locally linear classifier for an instance $\mathbf{x}$ is also said to be the core parameters of $\mathbf{x}$ for short.

For any pair $c$ and $c'$, a \textbf{naive method} to compute the core parameters of $\mathbf{x}$ is to construct and solve a determined linear equation system, denoted by $\Omega_{d+1}^{c, c'}$, that consists of $d+1$ linearly independent linear equations with the same core parameters as $\mathbf{x}$.

Since we already obtain one of these linear equations from $(\mathbf{x}, \mathbf{y})$, we only need to build another $d$ linear equations by independently and uniformly sampling $d$ instances in the neighborhood of $\mathbf{x}$. A $d$-dimension \textbf{hypercube} with edge length $2r$ and $\mathbf{x}$ as the center is defined as $\{\mathbf{p}\ |\ \forall i\ |\mathbf{p}_i - \mathbf{x}_i| \leq r,\  \mathbf{p}\in\mathbb{R}^d\}$, where $\mathbf{x}_i$ is the $i$-th entry of $\mathbf{x}$. In this paper, the \textbf{neighborhood} of $\mathbf{x}$ refers to the hypercube centered at $\mathbf{x}$.
%
We will illustrate how to compute $r$ later in Algorithm~\ref{algo:interpret}.





Denote by $\mathbf{x}^i, i\in\{1, \ldots, d\}$, the $i$-th sampled instance in the neighborhood of $\mathbf{x}$, and by $\mathbf{y}^i$ the prediction on $\mathbf{x}^i$. Obviously, $\mathbf{y}^i$ can be easily obtained by feeding $\mathbf{x}^i$ into the API of a PLM.
Tuple $(\mathbf{x}^i, \mathbf{y}^i)$ is used to build the $i$-th linear equation of $\Omega_{d+1}^{c, c'}$ in the same way as Equation~\ref{eqn:lineareq}. 


In the \textbf{ideal case} where the core parameters of all sampled instances are the same as the core parameters of $\mathbf{x}$, all linear equations in $\Omega_{d+1}^{c, c'}$ are linear equations of the same core parameters as $\mathbf{x}$.
Therefore, we can write $\Omega_{d+1}^{c, c'}$ as 
\begin{equation}
\label{eqn:lineareqsys}
\Omega_{d+1}^{c,c'} =
    \begin{cases}
    	{D_{c,c'}}^\top \mathbf{x}^0 + B_{c,c'}  = \ln(\frac{\mathbf{y}^0_c}{\mathbf{y}^0_{c'}}) \\
	{D_{c,c'}}^\top \mathbf{x}^1 + B_{c,c'}  = \ln(\frac{\mathbf{y}^1_c}{\mathbf{y}^1_{c'}}) \\
	\quad\quad\quad\vdots \\
	{D_{c,c'}}^\top \mathbf{x}^d + B_{c,c'}  = \ln(\frac{\mathbf{y}^d_c}{\mathbf{y}^d_{c'}}) \\
     \end{cases}
\end{equation}
where $(D_{c,c'}, B_{c,c'})$ is the core parameters of $\mathbf{x}$ in classifying classes $c$ and $c'$, and $\mathbf{x}$, $\mathbf{y}_c$ and $\mathbf{y}_{c'}$ are rewritten as $\mathbf{x}^0$, $\mathbf{y}^0_c$ and $\mathbf{y}^0_{c'}$, respectively, for notational consistency.

Next, we prove that the linear equations in $\Omega_{d+1}^{c,c'}$ are linearly independent.

Denote by $\mathbf{x}^i_j\in\mathbb{R}$ the $j$-th entry of $\mathbf{x}^i$. We write the coefficient matrix of $\Omega_{d+1}^{c,c'}$ as a $(d+1)$-by-$(d+1)$ dimensional square matrix
\begin{equation}\nonumber
A=
	\begin{bmatrix}
		1 & \mathbf{x}^0_1 & \mathbf{x}^0_2 & \dots & \mathbf{x}^0_d \\
		1 & \mathbf{x}^1_1 & \mathbf{x}^1_2 & \dots & \mathbf{x}^1_d \\
		\vdots  & \vdots  & \vdots & \vdots \\
		1 & \mathbf{x}^d_1 & \mathbf{x}^d_2 & \dots & \mathbf{x}^d_d \\
	\end{bmatrix},
\end{equation}
where the first column stores the coefficients for variable $B_{c,c'}$.
We prove that the linear equations in $\Omega_{d+1}^{c,c'}$ are linearly independent by showing that $A$ is a full rank matrix with probability $1$.
\begin{lemma}
\label{lma:fullrank}
When the perturbed instances are independently and uniformly sampled from a hypercube, the coefficient matrix $A$ of $\Omega_{d+1}^{c, c'}$ is a full rank matrix with probability $1$.
\end{lemma}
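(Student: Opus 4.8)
The plan is to view $\det(A)$ as a polynomial in the coordinates of the sampled instances and show that it vanishes only on a set of Lebesgue measure zero. First I would treat the $d^2$ entries $\mathbf{x}^i_j$ (for $i,j\in\{1,\dots,d\}$) of the sampled points as indeterminates, while $\mathbf{x}^0=\mathbf{x}$ stays fixed as the center. The Leibniz expansion of the determinant makes the structure transparent: every entry of $A$ is either the constant $1$ or a single coordinate $\mathbf{x}^i_j$, so $\det(A)$ is a polynomial $P$ of degree at most $d$ in these sampled coordinates.

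The crucial step is to argue that $P$ is \emph{not} the identically-zero polynomial, for which it suffices to exhibit one configuration of points making $\det(A)\neq 0$. I would choose $\mathbf{x}^i = \mathbf{x}^0 + e_i$, where $e_i$ is the $i$-th standard basis vector. Subtracting the first row of $A$ from each of the remaining rows (an operation that preserves the determinant) turns row $i$ into $[0,\ e_i^\top]$, so the matrix becomes block triangular with top-left entry $1$ and bottom-right $d\times d$ block equal to the identity; expanding along the first column gives $\det(A)=\det(I_d)=1\neq 0$. Geometrically, $\det(A)$ is, up to the factor $d!$, the signed volume of the simplex with vertices $\mathbf{x}^0,\dots,\mathbf{x}^d$, so $A$ has full rank exactly when these $d+1$ points are affinely independent, i.e., do not all lie on a common hyperplane.

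Having established that $P$ is a nonzero polynomial, I would invoke the standard fact that the zero set of a nonzero polynomial on $\mathbb{R}^{d^2}$ is a proper algebraic variety and therefore has Lebesgue measure zero. Because the instances $\mathbf{x}^1,\dots,\mathbf{x}^d$ are drawn independently and uniformly from the hypercube, their joint law is absolutely continuous with respect to the Lebesgue measure on that hypercube; hence the probability that the sample lands in the measure-zero set $\{P=0\}$ is $0$. Consequently $\det(A)\neq 0$, and $A$ has full rank, with probability $1$, which in turn shows that the linear equations of $\Omega_{d+1}^{c,c'}$ are linearly independent.

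The part I expect to require the most care is the non-vanishing of $P$: one must ensure that the column of leading $1$'s does not force a hidden linear dependency among the rows, which is precisely why an explicit witness configuration (the axis-aligned simplex above) is the cleanest way to close the argument rather than a purely symbolic claim. The passage from measure zero to probability zero is then routine, needing only the observation that the uniform distribution on the hypercube is absolutely continuous.
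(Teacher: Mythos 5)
Your proof is correct, but it follows a genuinely different route from the paper's. You treat $\det(A)$ as a polynomial in the $d^2$ sampled coordinates, exhibit the witness configuration $\mathbf{x}^i=\mathbf{x}^0+e_i$ to show the polynomial is not identically zero, and then invoke the fact that the zero set of a nonzero polynomial has Lebesgue measure zero together with absolute continuity of the uniform distribution. The paper instead argues by contradiction row-by-row: if $A$ were rank deficient, the last row would be a linear combination of the others with coefficients summing to $1$ (because of the column of ones), so the tail $\overline{A_d}=\mathbf{x}^d$ would have to lie in a fixed $(d-1)$-dimensional affine subspace spanned by the differences $\overline{A_i}-\overline{A_{d-1}}$, an event of probability $0$ since $\mathbf{x}^d$ is sampled uniformly from a $d$-dimensional hypercube. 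The paper's argument is more elementary (only linear algebra plus ``a hyperplane has measure zero'') and carries the geometric intuition directly, but it silently assumes the dependent row is the \emph{last} one; making that rigorous requires a union bound over rows, and the row $A_0$ corresponding to the fixed instance $\mathbf{x}^0$ needs separate handling since it is not itself sampled. Your determinant-based argument handles all rows simultaneously and so avoids that gap entirely; it also generalizes immediately to any sampling distribution that is absolutely continuous with respect to Lebesgue measure, with uniformity on a hypercube playing no special role. The price is reliance on the (standard, but less self-contained) fact that proper algebraic varieties in $\mathbb{R}^{d^2}$ are Lebesgue-null, and the need for an explicit witness, which you correctly note is the step demanding the most care --- and which your axis-aligned simplex settles cleanly regardless of whether that configuration lies inside the sampling hypercube, since non-vanishing at any single point suffices to show the polynomial is nonzero.
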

\begin{proof}

Denote by $A_i\in\mathbb{R}^{d+1}, i\in\{0, \ldots, d\}$, the $i$-th row of $A$, by $\overline{A_i}$ the sub-vector containing the last $d$ entries of $A_i$, that is, $\overline{A_i} = [x_1^i, x_2^i, \ldots, x_d^i]= x^i$.  Next, we prove the lemma by contradiction.

Assume the rank of matrix $A$ is not full. The last row of matrix $A$ must be a linear combination of the other rows. Denote by $\alpha_0, \dots, \alpha_{d-1}$ the weights of a linear combination, we write $A_d = \alpha_0 * A_0 + \alpha_1 * A_1 + \dots + \alpha_{d-1} * A_{d-1}$.

Since the first entry of every row vector $A_i$ is $1$, $\alpha_0 + \alpha_1 + \dots + \alpha_{d-1} = 1$.
Recall that $\overline{A_i}$ is a subvector of $A_i$ for all $i\in\{0, \ldots, d\}$, we have
\begin{equation}
    \label{eq:adbar}
    \overline{A_d} = \alpha_0 * \overline{A_0} + \alpha_1 * \overline{A_1} + \dots + \alpha_{d-1} * \overline{A_{d-1}}
\end{equation}

By plugging $\alpha_{d-1} = 1 - (\alpha_0 + \alpha_1 + \dots + \alpha_{d-2})$ into Equation \ref{eq:adbar}, we can derive
\begin{equation}
    \label{eq:ad}
    \overline{A_d} = \alpha_0 * (\overline{A_0} - \overline{A_{d-1}}) + \dots + \alpha_{d-2} * (\overline{A_{d-2}} - \overline{A_{d-1}}) + \overline{A_{d-1}}
\end{equation}

\nop{
Recall that $x_i$ is independently and uniformly sampled from a $d$-dimensional subspace, since $\overline{A_i} = x_i, i\in\{0, \ldots, d\}$, $\overline{A_i} - \overline{A_{d-1}}$ is also independently and uniformly sampled from a $d$-dimensional subspace.
}

Since Equation \ref{eq:ad} only contains the $d-1$ free variables $\alpha_0, \ldots, \alpha_{d-2}$, $\overline{A_d}$ is contained in the $(d-1)$-dimensional subspace $\mathcal{V}$ spanned by 
$(\overline{A_0} - \overline{A_{d-1}})$, $\dots$, $(\overline{A_{d-2}} - \overline{A_{d-1}})$.

Since $\overline{A_d}=x^d$, $\overline{A_d}$ is independently and uniformly sampled from a $d$-dimensional continuous space, and the probability that $\overline{A_d}$ is sampled from the $(d-1)$-dimensional subspace $\mathcal{V}$ is $0$. Therefore, the probability that Equation~\ref{eq:ad} holds is 0, which means $A_d$ cannot be represented as a linear combination of the other rows. This contradicts with the assumption that $A$ is not a full rank matrix. In sum, $A$ is a full rank matrix with probability $1$.
\end{proof}

\nop{\begin{proof}
Denote by $A_i\in\mathbb{R}^{d+1}, i\in\{0, \ldots, d\}$, the $i$-th row of $A$. Let $\overline{A_i}$ be the sub-vector that is a slice of the last $d$ entries of vector $A_i$. For example, $\overline{A_0} = [\mathbf{x}^0_1 , \mathbf{x}^0_2 , \dots , \mathbf{x}^0_d]$. Since $\overline{A_i}$ and $\mathbf{x}^i$ have the same entries, $\overline{A_i}$ can be regarded as being sampled independently and uniformly from a \mc{(or the?)} $d$-dimensional continuous space as $\mathbf{x}^i$ is sampled. Next, we prove the lemma by contradiction.

Assume the rank of matrix $A$ is not full. The last row of matrix $A$ must be a linear combination of the other rows. Let $\alpha_0, \dots, \alpha_{d-1}$ be the combination weights, such that $A_d = A_0 * \alpha_0 + A_1 * \alpha_1 + \dots + A_{d-1} * \alpha_{d-1}$. Since the first entries of all row vectors are $1$, $\alpha_0 + \alpha_1 + \dots + \alpha_{d-1} = 1$. As $\overline{A_d} = \overline{A_0} * \alpha_0 + \overline{A_1} * \alpha_1 + \dots + \overline{A_{d-1}} * \alpha_{d-1}$,  $\overline{A_d}$ is in the affine hull formed by $\overline{A_0}, \dots,  \overline{A_{d-1}}$.

Denote by $Space$ the affine hull formed by $\overline{A_0}, \dots, \overline{A_{d-1}}$. As $Space$ is formed by $d$ vectors sampled from a $d$-dimensional space \mc{(or subspace?)}, the dimensionality of $Space$ is at most $d-1$. Since $\overline{A_d}$ is randomly sampled from a $d$-dimensional continuous space, the probability that $\overline{A_d}$ is sampled from the $(d-1)$-dimensional subspace is $0$. Therefore, $A_d$ cannot be represented as a linear combination of the other rows. This contradicts with the assumption that $A$ is not a full rank matrix. Thus, $A$ is a full rank matrix with probability $1$.
\end{proof}}




Lemma~\ref{lma:fullrank} holds as long as the perturbed instances are independently and uniformly sampled from a hypercube. By Lemma~\ref{lma:fullrank}, $\Omega_{d+1}^{c,c'}$ is a determined linear equation system that is guaranteed to have a unique solution with probability $1$.

By solving each of the $C-1$ linear equation systems in $\{\Omega_{d+1}^{c,c'} \mid c'\in\{1, \ldots, C\}\setminus c\}$, we can easily determine the core parameters of $\mathbf{x}^0$ for each pair of $c$ and $c'$. 
Then, we can apply Equation~\ref{eqn:dc} to compute $D_c$.


The naive method introduced above is applicable when all sampled instances and the instance $\mathbf{x}^0$ have the same core parameters. 
However, since we do not know the model parameters of the PLM, 
we cannot guarantee that those instances have the same core parameters.
In other words, \nop{the ideal case may not hold in practice} the ideal case may not always hold in practice.  In sequel, the naive method cannot accurately compute $D_c$ all the time.
Indeed, when the ideal case assumption does not hold, the performance of the naive method can be arbitrarily bad.

\begin{theorem}
\label{thm:impossible}
Denote by $\beta^*$ the solution of $\Omega_{d+1}^{c,c'}$.
When the ideal case does not hold, the probability that $\beta^*$ is the core parameters of $\mathbf{x}^0$ is $0$ for at least one pair of classes $c$ and $c'$.
\end{theorem}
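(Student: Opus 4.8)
The plan is to leverage the uniqueness of the solution guaranteed by Lemma~\ref{lma:fullrank} together with the absolute continuity of the sampling distribution, arguing by contradiction. First I would fix a witness to the failure of the ideal case: since the ideal case does not hold, there exist an index $j\in\{1,\ldots,d\}$ and a class pair $(c,c')$ for which the locally linear classifier governing the sampled instance $\mathbf{x}^j$ has core parameters $(D',B')$ that differ from the true core parameters $(D_{c,c'},B_{c,c'})$ of $\mathbf{x}^0$. I would prove the claim for this pair $(c,c')$, which already yields the ``at least one pair'' conclusion.

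Next I would record a purely deterministic implication. By Lemma~\ref{lma:fullrank} the system $\Omega_{d+1}^{c,c'}$ has a unique solution $\beta^*$, so if $\beta^*$ coincided with the true core parameters $(D_{c,c'},B_{c,c'})$ of $\mathbf{x}^0$, then $(D_{c,c'},B_{c,c'})$ would have to satisfy every equation of $\Omega_{d+1}^{c,c'}$, in particular the $j$-th one. Since $\mathbf{y}^j$ is produced by the classifier with parameters $(D',B')$, the $j$-th right-hand side equals $D'^\top\mathbf{x}^j+B'$ by Equation~\ref{eqn:lineareq}; equating this with $D_{c,c'}^\top\mathbf{x}^j+B_{c,c'}$ forces $(D_{c,c'}-D')^\top\mathbf{x}^j+(B_{c,c'}-B')=0$. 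Because $(D_{c,c'},B_{c,c'})\neq(D',B')$, the admissible set for $\mathbf{x}^j$ is a hyperplane when $D_{c,c'}\neq D'$, and is empty when $D_{c,c'}=D'$ (as then $B_{c,c'}\neq B'$ makes the constant constraint unsatisfiable); in either case it has Lebesgue measure zero.

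To pass from ``measure zero'' to ``probability zero'' I would eliminate the dependence of $(D',B')$ on the random point $\mathbf{x}^j$ by using finiteness. A common PLM has only finitely many locally linear regions, so the core parameters realized over the hypercube take only finitely many values. The event that $\beta^*$ equals $(D_{c,c'},B_{c,c'})$ while $\mathbf{x}^j$ sits in a region with differing core parameters is therefore contained in a finite union of \emph{fixed} hyperplanes, indexed by those finitely many parameter values, and hence in a set of measure zero. Since each $\mathbf{x}^j$ is sampled independently and uniformly from the hypercube, its law is absolutely continuous and assigns probability zero to this set; a union bound over the finitely many indices $j$ and class pairs then shows that the joint event $\{\text{ideal case fails}\}\cap\{\beta^*=(D_{c,c'},B_{c,c'})\text{ for every pair}\}$ has probability zero, which is the stated conclusion.

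The main obstacle I anticipate is precisely the coupling in the last step: the ``bad'' hyperplane along which $\beta^*=(D_{c,c'},B_{c,c'})$ could occur is not fixed in advance but is itself determined by which region the random $\mathbf{x}^j$ falls into. The finiteness of the number of locally linear regions is the crucial ingredient that lets me replace this data-dependent hyperplane by a finite union of fixed ones and then invoke absolute continuity of the uniform distribution. I would also state the result as a claim about the joint event above, so that the conditioning on ``the ideal case does not hold'' is handled cleanly without dividing by a possibly zero-probability event.
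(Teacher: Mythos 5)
Your proposal is correct and follows essentially the same route as the paper's own proof: fix a witness instance $\mathbf{x}^j$ whose core parameters differ from those of $\mathbf{x}^0$, observe that $\beta^*$ coinciding with the core parameters of $\mathbf{x}^0$ forces $\mathbf{x}^j$ to satisfy $(D_{c,c'}-D')^\top\mathbf{x}^j+(B_{c,c'}-B')=0$ (the paper's Equation~\ref{eqn:out3}), split into the two cases $D_{c,c'}=D'$ (constraint set empty) and $D_{c,c'}\neq D'$ (constraint set a hyperplane), and conclude from the absolute continuity of the uniform sampling distribution. Where you go beyond the paper is the final step: the paper simply asserts that the probability of $\mathbf{x}^j$ landing on the hyperplane is $0$ because $\mathbf{x}^j$ is uniformly sampled, implicitly treating the hyperplane as fixed, even though its normal vector $D_{c,c'}-D'$ is determined by which locally linear region the random point $\mathbf{x}^j$ happens to fall into. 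Your union over the finitely many locally linear regions (hence finitely many possible values of the core parameters, as guaranteed for common PLMs in Section~\ref{sec:prob}), together with phrasing the conclusion as a statement about the joint event rather than as a probability conditioned on the possibly awkward event that the ideal case fails, closes exactly this measurability and coupling gap. So your argument is not a different method, but it is a strictly more rigorous rendering of the paper's argument, and the finiteness of the number of regions is indeed the right ingredient to make the paper's hyperplane step airtight.
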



\begin{proof}
Denote by $\beta^i = (D_{c,c'}^i, B_{c,c'}^i), i\in\{0, \ldots, d\}$, the core parameters of $\mathbf{x}^i$, and by $\mathbb{P}(\beta^*=\beta^0)$ the probability of $\beta^* = \beta^0$.
We only need to show $\mathbb{P}(\beta^*=\beta^0)=0$ for at least one pair of $c$ and $c'$.

When the ideal case does not hold,
there is at least one sampled instance, denoted by $\mathbf{x}^i, i\in\{1, \ldots, d\}$, that does not have the same core parameters as $\mathbf{x}^0$. 
Therefore, $\beta^i\neq\beta^0$ for at least one pair of classes $c$ and $c'$.

By the definition of $\beta^i$, $\mathbf{x}^i$ satisfies 
\begin{equation}\nonumber
	{D_{c,c'}^i}^\top \mathbf{x}^i + B_{c,c'}^i = \ln(\frac{\mathbf{y}^i_c}{\mathbf{y}^i_{c'}}).
\end{equation}
If $\beta^*=\beta^0$, then $\mathbf{x}^i$ satisfies
\begin{equation}\nonumber
	{D_{c,c'}^0}^\top \mathbf{x}^i + B_{c,c'}^0 = \ln(\frac{\mathbf{y}^i_c}{\mathbf{y}^i_{c'}}).
\end{equation}
Therefore, a necessary condition for $\beta^*=\beta^0$ is that $\mathbf{x}^i$ satisfies
\begin{equation}
\label{eqn:out3}
	(D_{c,c'}^i - D_{c,c'}^0)^\top \mathbf{x}^i + (B_{c,c'}^i - B_{c,c'}^0) = 0.
\end{equation}
As a result, $\mathbb{P}(\beta^*=\beta^0)$ cannot be larger than the probability $P$ that $\mathbf{x}^i$ satisfies Equation~\ref{eqn:out3}.

Recall that $\beta^i\neq \beta^0$ for at least one pair of $c$ and $c'$.
The value of $P$ must fall into one of the following two cases.

Case 1: if $D_{c,c'}^i = D_{c,c'}^0$, then $B_{c,c'}^i\neq B_{c,c'}^0$.
In this case, no $\mathbf{x}^i$ satisfies Equation~\ref{eqn:out3}, thus $P=0$.

Case 2: if $D_{c,c'}^i \neq D_{c,c'}^0$, then $P$ is the probability that $\mathbf{x}^i$ is located on the hyperplane defined by Equation~\ref{eqn:out3}.
In this case, $P$ is still 0 because $\mathbf{x}^i$ is independently uniformly sampled from a $d$-dimensional hypercube.

In summary, $\mathbb{P}(\beta^*=\beta^0)\leq P=0$. The theorem follows.
\end{proof}


\nop{
In summary, 

the output of the naive method is correct only when the ideal assumption holds. Oth
}

In summary, the naive method only works in the idea case where all perturbed instances have the same core parameters as the input instance $\mathbf{x}^0$. The extremely strong assumption limits the method to be usable in practice. First, as discussed in Section~\ref{sec:rw}, it is impossible for users to heuristically select a perturbation distance that works for all instances. Second, if the perturbed instances have different core parameters, according to Lemma~\ref{lma:fullrank} and Theorem~\ref{thm:impossible}, the naive method may not obtain a correct interpretation. \nop{Third, as we cannot access the internal information of the PLMs, there is no method to assess the correctness of the results. In summary, the naive method is not usable in practice.}  Next, we develop \texttt{OpenAPI} to overcome these weaknesses.

\nop{
According to Theorem~\ref{thm:impossible}, when the ideal assumption does not hold, the probability for the naive method to accurately compute $D_c$ is 0.
Thus, the output of the naive method is unreliable. In practice, we cannot determine whether the ideal assumption holds or not.}

\nop{
the naive method will always compute an output, which, however, is impossible to be correct when the ideal assumption does not hold.
}

\nop{
there is at least one pair of classes $c$ and $c'$ such that we cannot accurately compute the core parameters of $\mathbf{x}^0$.
In this case, the $D_c$ computed by the naive method is not correct. 
}

\nop{
but the problem is that we have no method to check the correctness of the result.
}

\nop{
In summary, we know from Lemma~\ref{lma:fullrank} that $\Omega_{d+1}^{c,c'}$ always have a unique solution. However, according to Theorem~\ref{thm:impossible}, the solution of $\Omega_{d+1}^{c,c'}$ 
}

\nop{
Theorem~\ref{thm:impossible} shows that the naive method is not applicable when the ideal assumption does not hold.
Next, we propose the \texttt{OpenAPI} method to compute the exact $D_c$ with probability equal to one in general cases where the ideal assumption does not hold.
}

\nop{
For the general case where the ideal assumption does not hold, we propose \texttt{OpenAPI} to compute the exact $D_c$ with probability equal to one by solving an overdetermined linear equation system.
}

\nop{
$\mathbf{x}^i$ must be located on the hyperplane defined by Equation~\ref{eqn:out3}.
}
\nop{
In this case, the probability that $D_{c,c'}^0$ and $B_{c,c'}^0$ is the solution of $\Omega_{d+1}^{c,c'}$ is 0.
}

\nop{
Since the locally linear classifiers for $\mathbf{x}^0$ and $\mathbf{x}^i$ are not the same, $\mathbf{x}^i$ must be located on the \textbf{intersection}, denoted by $I$, of the two different hyperplanes defined by Equation~\ref{eqn:out1} and Equation~\ref{eqn:out2}.

Denote by $P$ the probability that $D_{c,c'}^0$ and $B_{c,c'}^0$ is the solution of $\Omega_{d+1}^{c,c'}$.
If $I=\emptyset$, then $P=0$. If $I\neq\emptyset$, then the $P$ is exactly the probability that $\mathbf{x}^i$ is contained in $I$, which is still 0 because $\mathbf{x}^i$ is independently uniformly sampled from a $(d+1)$-dimensional hypercube.
In summary, $P=0$ and the theorem follows.
}

\nop{
For $n \in K$, denote by $\hat{w}$ and $\hat{b}$,  solutions of the equation system $\Omega_{S}^{mn}$. Let $D_m = W_p[m] - W_p[n]$ and $D'_m = W_q[m] - W_q[n]$ represent the true decision features difference of the LLCs of $x_0$ and $x_i$ respectively. 

We claim that $\mathcal{X}_q$ must be a $d$-dimensional body in the input space. Otherwise, as $x_i$ is uniformly sampled from the input space, the probability it is sampled from a specific plane is 1.

Since $\hat{w}$ and $\hat{b}$ is the solution of the equation system, $\hat{w}^{\top} x_i + \hat{b} = (W_q[m] - W_q[n]) ^{\top} x_i + (B_q[m] - B_q[n])$. Rearranging the equation, we get $(\hat{w} - D'_m)^{\top}x_i = (B_i[m] - B_i[n]) - \hat{b}$. If the estimated feature difference is exact, which means $\hat{w} == D_m$, $x_i$ must be sampled from a specific plane that is defined by the parameters of the two LLCs, $w_i$, $w_0$, $b_i$, and $b_0$.  Since the point $x_i$ is uniformly sampled from the $d$-dimensional body, the probability it is sampled from a specific plane is 0. Therefore, the probability that $\hat{w}$ is equal to $D_m$ is 0. As a result, the probability for us to get an exact interpretation by solving the above linear equation system is 0.}

\nop{
Due to the reason that the classifier $\mathcal{N}$ is a remote API, very limited information can be accessed. There is no known way for us to check if the instances all come from the same region of $x_0$. In addition, given an interpretation $\hat{w}$ computed by an existing interpretation method for the instance $x_0$, there is also no known way for users to check if the interpretation is reliable. Thus, getting exact interpretations for API of PWC is very hard and critical.
}

\nop{
Given a fixed piecewise linear classifier $\mathcal{N}$, we interpret the classification result of an input instance $x_0$ as follows.
}

\nop{
Consider a matrix in the following form.

\[A = 
 \begin{bmatrix}
  1 & a_{0, 1} & a_{0,2} & ... & a_{0, d} \\
  1 & a_{1, 1} & a_{1,2} & ... & a_{1,d} \\
  \vdots  & \vdots  & \vdots & \vdots \\
  1 & a_{d, 1} & a_{d,2} & ... & a_{d, d} \\
 \end{bmatrix} \in R^{(d + 1) \times (d + 1)}
\], where $a_{i, j}$ are randomly sampled from a uniform distribution on the interval $[p, q]$.

\begin{lemma}
The probability that the matrix $A$ has full rank is 1.
\label{lemma}
\end{lemma}
\begin{proof}
We use $A[i], i \geq 0,$ to represent the i-th column of the matrix $A$.  Since each value in $A[i]$, $i>0$, is uniformly sampled from a continuous interval, the vector $A[i]$ is uniformly sampled from a $(d+ 1)$-dimensional hypercube.

For $1 \leq i \leq d$, the linear combinations of the first $i$ columns, $A[0], ..., A[i - 1]$, form at most a $i$-dimensional subspace. As $A_i$ is uniformaly sampled from a $(d+1)$-dimensional hypercube, the probability that it's sampled from the specific $i$-dimensional subspace is 0. Therefore, the probability that the matrix $A$ has full rank is 1.
\end{proof}
}

\nop{
The only difference between linear equation systems of \texttt{OpenAPI} and the naive method is that we sample one more tuple, denoted by $(\mathbf{x}^{d+1}, \mathbf{y}^{d+1})$, to build the $(d+2)$-th linear equation in $\Omega_{d+2}^{c,c'}$.
}

\nop{
 that consists of $d+2$ linear equations.
We build all linear equations in $\Omega_{d+2}^{c,c'}$ in the same way as the naive method. 
The only difference between \texttt{OpenAPI} and the naive method is that we sample one more tuple, denoted by $(\mathbf{x}^{d+1}, \mathbf{y}^{d+1})$, to build the $(d+2)$-th linear equation in $\Omega_{d+2}^{c,c'}$.
}

\nop{
$\Omega_{d+2}^{c,c'}$ is constructed using $d+2$ tuples $\{(\mathbf{x}^0, \mathbf{y}^0), \ldots, (\mathbf{x}^d, \mathbf{y}^d), (\mathbf{x}^{d+1}, \mathbf{y}^{d+1})\}$, where $\mathbf{x}^{d+1}$
}

\nop{
\mc{Denote by $S = \{x_0, x_1, ..., x_{d+1}\}$, a set that contains $x_0$ and another $d+1$ random points}, which are uniformly sampled from the hypercube with center $x_0$ and the edge length $r$. Let $0$ and $m$ be the predicted class of $x_0$ and any other class respectively. Since there are more equations than the unknowns, $\Omega_{S}^{0m}$ is an overdetermined equation system. Adopt the assumption widely used in the area of machine learning that the observed data are randomly drawn from the underlying distribution \cite{nasrabadi2007pattern}, we regard the input instance $x_0$ as being uniformly sampled from the input space. Based on the assumption, we prove in Theorem~\ref{theorem} that when the overdetermined equation system $\Omega_{S}^{0m}$ is solvable, the probability that the solution of the equation system $\hat{w}$ is equal to the true weight difference $D_{m}$ is 1.
}

\nop{

\begin{theorem}
Given a fixed piecewise linear classifier $\mathcal{N}$ and an input instance $x_0$ whose predicted class is $0$. Let $S=\{x_0, x_1,...,x_{d+1}\}$, where $x_1,...,x_{d+1}$ are uniformly sampled from the hypercube with center $x_0$ and the edge lenght of $r$. Let $m \in K$ be a class. If the overdetermined equation system $\Omega_{S}^{0m}$ is solvable, the probability that the solution $\hat{w}$ is equal to $x_0$'s true weight difference $D_{m}$ is 1.
\label{theorem}
\end{theorem}
\begin{proof}
Let $x_i$ be a randomly selected point from the set $S$ and $S'= S \setminus \{x_i\}$.

We represent the equation system $\Omega_{S'}^{0m}$ in the matrix form as follows.

\[\Omega_{S'} =
        \begin{bmatrix}
        \hat{w}_1\\
        \hat{w}_2\\
        \vdots\\
        \hat{w}_{d}\\
        b
        \end{bmatrix}^{\top}
        \cdot
        \begin{bmatrix}
        x_{0, 1} & x_{1, 1} & \hdots & x_{d+1, 1}\\
        x_{0, 2} & x_{1, 2} & \hdots & x_{d+1, 2}\\
        \vdots & \vdots & \hdots & \vdots\\
        x_{0, d} & x_{1, d} & \hdots & x_{d+1, d}\\
        1 & 1 & \hdots & 1\\
        \end{bmatrix} = \begin{bmatrix}
        z_0[0] - z_0[m]\\
        z_1[0] - z_0[m]\\
        \vdots\\
        z_{d+1}[0] - z_{d+1}[m]
        \end{bmatrix}^{\top}
\], where $z_{j}[0] - z_{j}[m]$ is computed by $ln(\frac{y_j[0]}{y_j[m]})$.

As proved in~\ref{lemma}, the matrix on the left-hand side has full rank with probability 1. Thus, the linear equation system $\Omega_{S'}$ has a unique solution with the probability 1. Denote by $\hat{w}$, and $\hat{b}$, the solution of $\Omega_{S'}^{0m}$. 

Let $f_i(x) = w_i^{\top}x + b_i$ be the local classifier related to $x_i$'s region. Since $S' \subset S$, as long as the equation system $\Omega_{S}^{0m}$ is solvable, it must have the same solution as $\Omega_{S'}^{0m}$. Therefore, we have the following equation, $(w_i[0] - w_i[m])^{\top}x_i + (b_i[0] - b_i[m]) = \hat{w}^{\top}x_i + \hat{b}$. Rearranging the equation, we get $(D'_m - \hat{w})^{\top}x_i = \hat{b} - (b_i[0] - b_i[m])$, where $D'_m = w_i[0] - w_i[m]$.

If $\hat{w} \neq D_m$, the instance $x_i$ is lying on a plane defined by $w_0$, $w_m$, $\hat{w}$, $b_0$, $b_m$, and $\hat{b}$. As proved in Theorem~\ref{thorem:probelm_hard}, the input region of $x_i$ is a $d$-dimensional body, so the instance $x_i$ is uniformly sampled from the $d$-dimensional subspace. Therefore, the probability that $x_i$ is sampled from a specific plane is 0. In other words, the probability that the estimated feature weight difference $\hat{w}$ is equal to  the true feature weights difference $D_m$ is 1.

Since $x_i$ is an instance randomly selected from the set $S$, when the equation system $\Omega_{S}^{0m}$ has solution, the probability that the feature weight difference $D_m$ is equal to the equation system's solution $\hat{w}$ is 1.
\end{proof}
}

\subsection{The \texttt{OpenAPI} Method}
\label{sec:openapi}
Now we are ready to introduce the \texttt{OpenAPI} method to reliably and accurately compute $D_c$. Different from the naive method, \texttt{OpenAPI} adaptively shrinks the perturbation distance until it computes the exact interpretations with probability $1$.

\nop{
making sure that the ideal assumption holds when \texttt{OpenAPI} produces an output.
}

For any two classes $c$ and $c'$, 
\texttt{OpenAPI} computes the core parameters $\beta^0$ of $\mathbf{x}^0$ by solving an \textbf{overdetermined} linear equation system with $d+2$ linear equations.
Denote by $\Omega_{d+2}^{c,c'}$ the overdetermined linear equation system.
We build the first $d+1$ linear equations of $\Omega_{d+2}^{c,c'}$ in the same way as the naive method. 
The $(d+2)$-th linear equation of $\Omega_{d+2}^{c,c'}$ is built by sampling an extra instance $\mathbf{x}^{d+1}$ in the neighborhood of the input instance $\mathbf{x}^0$.

Denote by $\beta^i=(D_{c,c'}^i, B_{c,c'}^i)$, $i\in\{0, \ldots, d+1\}$, the core parameters of $\mathbf{x}^i$ in classifying classes $c$ and $c'$.  We now show that, when $\Omega_{d+2}^{c,c'}$ has at least one solution, the solution is unique and is equal to every $\beta^i$ with probability $1$.

\nop{
 a valid solution of $\Omega_{d+2}^{c,c'}$ is uniquely $\beta^0$ with probability equal to 1 as follows.
}

\nop{
if $\Omega_{d+2}^{c,c'}$ has at least one solution, then the solution is unique and it is the real $(D_{c,c'}, B_{c,c'})$ with probability equal to 1.
}

\nop{

must also satisfy the linear equation with respect to $(\mathbf{x}^0, \mathbf{y}^0)$.

 the solution is uniquely $(D_{c,c'}^*, B_{c,c'}^*)$.

Therefore, if $\Omega_{d+2}^{c,c'}$ has at least one solution,

We represent the equation system $\Omega_{S'}^{0m}$ in the matrix form as follows.

\[\Omega_{S'} =
        \begin{bmatrix}
        \hat{w}_1\\
        \hat{w}_2\\
        \vdots\\
        \hat{w}_{d}\\
        b
        \end{bmatrix}^{\top}
        \cdot
        \begin{bmatrix}
        x_{0, 1} & x_{1, 1} & \hdots & x_{d+1, 1}\\
        x_{0, 2} & x_{1, 2} & \hdots & x_{d+1, 2}\\
        \vdots & \vdots & \hdots & \vdots\\
        x_{0, d} & x_{1, d} & \hdots & x_{d+1, d}\\
        1 & 1 & \hdots & 1\\
        \end{bmatrix} = \begin{bmatrix}
        z_0[0] - z_0[m]\\
        z_1[0] - z_0[m]\\
        \vdots\\
        z_{d+1}[0] - z_{d+1}[m]
        \end{bmatrix}^{\top}
\], where $z_{j}[0] - z_{j}[m]$ is computed by $ln(\frac{y_j[0]}{y_j[m]})$.

As proved in~\ref{lemma}, the matrix on the left-hand side has full rank with probability 1. Thus, the linear equation system $\Omega_{S'}$ has a unique solution with the probability 1. Denote by $\hat{w}$, and $\hat{b}$, the solution of $\Omega_{S'}^{0m}$. 

Let $f_i(x) = w_i^{\top}x + b_i$ be the local classifier related to $x_i$'s region. Since $S' \subset S$, as long as the equation system $\Omega_{S}^{0m}$ is solvable, it must have the same solution as $\Omega_{S'}^{0m}$. Therefore, we have the following equation, $(w_i[0] - w_i[m])^{\top}x_i + (b_i[0] - b_i[m]) = \hat{w}^{\top}x_i + \hat{b}$. Rearranging the equation, we get $(D'_m - \hat{w})^{\top}x_i = \hat{b} - (b_i[0] - b_i[m])$, where $D'_m = w_i[0] - w_i[m]$.

If $\hat{w} \neq D_m$, the instance $x_i$ is lying on a plane defined by $w_0$, $w_m$, $\hat{w}$, $b_0$, $b_m$, and $\hat{b}$. As proved in Theorem~\ref{thorem:probelm_hard}, the input region of $x_i$ is a $d$-dimensional body, so the instance $x_i$ is uniformly sampled from the $d$-dimensional subspace. Therefore, the probability that $x_i$ is sampled from a specific plane is 0. In other words, the probability that the estimated feature weight difference $\hat{w}$ is equal to  the true feature weights difference $D_m$ is 1.

Since $x_i$ is an instance randomly selected from the set $S$, when the equation system $\Omega_{S}^{0m}$ has solution, the probability that the feature weight difference $D_m$ is equal to the equation system's solution $\hat{w}$ is 1.
}

\nop{
Now, we prove $\mathbb{P}(\beta^*\neq\beta^0)=0$.

 Our goal is to prove the probability of $\beta^*\neq\beta^0$, denote by $\mathbb{P}(\beta^*\neq\beta^0) = 0$.

Given the condition that Equation~\ref{eqn:out4} holds

Equation~\ref{eqn:out4} always holds because $\beta^*$ is the unique solution of $\Omega_{d+2}^{c,c'}$.
}

\nop{
is a necessary condition for $\Omega_{d+2}^{c,c'}$ to have at least one solution.

Denote by $H$ the event that $\Omega_{d+2}^{c,c'}$ has at least one solution.

consists of the last $d+1$ linear equations in $\Omega_{d+2}^{c,c'}$.

That is, $\Theta_{d+1}^{c,c'}$ excludes only the linear equation with respect to $\mathbf{x}^0$.
}

\nop{
Since $\Omega_{d+2}^{c,c'}$ has at least one solution, the solution must be the unique solution of $\Theta_i^{c,c'}$.

$\beta^*$ is also the solution of $\Theta_i^{c,c'}$.

$\Theta_i^{c,c'}$. 
Since $\Omega_{d+2}^{c,c'}$ has at least one solution, 
the solution of $\Omega_{d+2}^{c,c'}$ is uniquely $\beta^*$. 

Second, we prove the probability of $\beta^*=\beta^0$, denoted by $\mathbb{P}(\beta^*=\beta^0)$, is equal to 1 by proving $\mathbb{P}(\beta^*\neq\beta^0) = 0$.

$\beta^*$ must satisfy the linear equation with respect to $\mathbf{x}^0$, that is

Given the condition that Equation~\ref{eqn:out4} holds, $\mathbb{P}(\beta^*\neq \beta^0)$ is exactly the probability that $\mathbf{x}^0$ satisfies Equation~\ref{eqn:out4} when $\beta^*\neq \beta^0$.

is unique, and it is exactly the solution of $\Theta_i^{c,c'}$.

}

\begin{theorem}
\label{thm:main}
For any two classes $c$ and $c'$, if $\Omega_{d+2}^{c,c'}$ has at least one solution, then the solution is unique and is exactly $\beta^i$ for any $i\in\{0, \ldots, d+1\}$ with probability $1$.
\nop{
The proof of Theorem~\ref{thm:main} is given in Appendix~\ref{apd:3}.
}

\begin{proof}
Denote by $\Theta_i^{c,c'}$ the linear equation system that is constructed by removing the linear equation with respect to $\mathbf{x}^i$ from $\Omega_{d+2}^{c,c'}$ and keeping the rest $d+1$ linear equations.
Obviously, any solution of $\Omega_{d+2}^{c,c'}$ is a solution of $\Theta_i^{c,c'}$.

According to Lemma~\ref{lma:fullrank}, the coefficient matrix of $\Theta_i^{c,c'}$ is a full rank square matrix with probability $1$. 
This means that the probability that $\Theta_i^{c,c'}$ has a unique solution is $1$. 

Since $\Omega_{d+2}^{c,c'}$ has at least one solution and any solution of $\Omega_{d+2}^{c,c'}$ is a solution of $\Theta_i^{c,c'}$, the solution of $\Omega_{d+2}^{c,c'}$ is unique, and is equal to the solution of $\Theta_i^{c,c'}$.
Next, we prove that, with probability $1$, the solution of $\Omega_{d+2}^{c,c'}$ is exactly $\beta^i$ for any $i\in\{0, \ldots, d+1\}$.

Denote by $\beta^* = (D_{c,c'}^*, B_{c,c'}^*)$ the unique solution of $\Omega_{d+2}^{c,c'}$, and by $\mathbb{P}(\beta^*=\beta^i)$ the probability of $\beta^*=\beta^i$. 
We only need to show $\mathbb{P}(\beta^*\neq\beta^i)=0$ for any $i\in\{0, \ldots, d+1\}$.

By the definition of $\beta^i$, $\mathbf{x}^i$ satisfies 
\begin{equation}\nonumber
	{D_{c,c'}^i}^\top \mathbf{x}^i + B_{c,c'}^i = \ln(\frac{\mathbf{y}^i_c}{\mathbf{y}^i_{c'}}).
\end{equation}
Since $\beta^*$ is the unique solution of $\Omega_{d+2}^{c,c'}$, $\mathbf{x}^i$ also satisfies 
\begin{equation}\nonumber
	{D_{c,c'}^*}^\top \mathbf{x}^i + B_{c,c'}^*  = \ln(\frac{\mathbf{y}^i_c}{\mathbf{y}^i_{c'}}).
\end{equation}
Therefore, $\mathbf{x}^i$ must satisfy 
\begin{equation}
\label{eqn:out4}
	(D_{c,c'}^* - D_{c,c'}^i)^\top \mathbf{x}^i + (B_{c,c'}^* - B_{c,c'}^i) = 0
\end{equation}

Consequently, a necessary condition for $\beta^*\neq \beta^i$ is that $\mathbf{x}^i$ is located on the hyperplane $\mathcal{H}$ defined by Equation~\ref{eqn:out4}.
Therefore, $\mathbb{P}(\beta^*\neq\beta^i)$ cannot be larger than the probability $P$ that $\mathbf{x}^i$ is located on $\mathcal{H}$.
The value of $\mathbb{P}(\beta^*\neq\beta^i)$ must fall into one of the following three cases.

Case 1: if $D_{c,c'}^* = D_{c,c'}^i$ and $B_{c,c'}^* = B_{c,c'}^i$, then $\beta^*=\beta^i$, which means $\mathbb{P}(\beta^*\neq\beta^i)=0$.

Case 2: if $D_{c,c'}^* = D_{c,c'}^i$ and $B_{c,c'}^* \neq B_{c,c'}^i$, then no $\mathbf{x}^i$ satisfies Equation~\ref{eqn:out4}, which means $P=0$. 
Thus, $\mathbb{P}(\beta^*\neq\beta^i)\leq P=0$.

Case 3: if $D_{c,c'}^* \neq D_{c,c'}^i$, 
because $\mathbf{x}^0$ is drawn from an underlying continuous distribution in the $d$-dimensional space $\mathcal{X}$~\cite{nasrabadi2007pattern}, and each $\mathbf{x}^i, i\in\{1, \ldots, d+1\}$, is uniformly sampled from a $d$-dimensional hypercube, we have $P=0$.
Therefore, $\mathbb{P}(\beta^*\neq\beta^i)\leq P=0$.

In summary, $\mathbb{P}(\beta^*\neq\beta^i)=0$ and the theorem follows.
\end{proof}
\end{theorem}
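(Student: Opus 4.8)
The plan is to establish the two assertions of the statement—uniqueness of the solution, and its coincidence with every true core‑parameter tuple $\beta^i$—separately, reducing each to Lemma~\ref{lma:fullrank} together with a single measure‑zero argument. The organizing device throughout is the reduced system $\Theta_i^{c,c'}$ obtained by deleting from $\Omega_{d+2}^{c,c'}$ the one equation associated with $\mathbf{x}^i$.

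For uniqueness I would argue as follows. For each index $i$, the system $\Theta_i^{c,c'}$ consists of $d+1$ equations in the $d+1$ unknowns $(D_{c,c'}, B_{c,c'})$, and its coefficient matrix has exactly the form of the matrix $A$ of Lemma~\ref{lma:fullrank}: it is built from $d+1$ instances each drawn from a continuous distribution. The argument underlying Lemma~\ref{lma:fullrank} needs only continuous sampling (so that the last row avoids the $(d-1)$‑dimensional span of the others with probability $1$), so it applies verbatim to any $d+1$ of our $d+2$ instances, whether or not $\mathbf{x}^0$ is among them. Hence $\Theta_i^{c,c'}$ has full rank, and so a unique solution, with probability $1$. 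Since every solution of the larger system $\Omega_{d+2}^{c,c'}$ is in particular a solution of $\Theta_i^{c,c'}$, the hypothesis that $\Omega_{d+2}^{c,c'}$ is solvable forces that solution to be unique; call it $\beta^* = (D_{c,c'}^*, B_{c,c'}^*)$.

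For the identification $\beta^* = \beta^i$, I would fix $i$ and compare the defining equation of the genuine core parameters, $(D_{c,c'}^i)^\top \mathbf{x}^i + B_{c,c'}^i = \ln(\mathbf{y}^i_c/\mathbf{y}^i_{c'})$, with the equation of $\Omega_{d+2}^{c,c'}$ evaluated at $\mathbf{x}^i$, which $\beta^*$ satisfies by definition. Subtracting yields $(D_{c,c'}^* - D_{c,c'}^i)^\top \mathbf{x}^i + (B_{c,c'}^* - B_{c,c'}^i) = 0$, so $\beta^* \neq \beta^i$ forces $\mathbf{x}^i$ onto the affine set $\mathcal{H}$ cut out by this equation. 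A three‑way split then closes the argument: if $D_{c,c'}^* = D_{c,c'}^i$ and $B_{c,c'}^* = B_{c,c'}^i$ we already have $\beta^*=\beta^i$; if $D_{c,c'}^* = D_{c,c'}^i$ but $B_{c,c'}^* \neq B_{c,c'}^i$ the equation is unsatisfiable, so the event has probability $0$; and if $D_{c,c'}^* \neq D_{c,c'}^i$ then $\mathcal{H}$ is a genuine hyperplane, a set of Lebesgue measure zero in $\mathbb{R}^d$, which a continuously distributed $\mathbf{x}^i$ meets with probability $0$. Thus $\mathbb{P}(\beta^* \neq \beta^i)=0$ for each fixed $i$, and since there are only $d+2$ indices a finite union bound upgrades this to $\beta^* = \beta^i$ holding simultaneously for all $i$ with probability $1$.

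The step I expect to be genuinely delicate is the measure‑zero argument in the third case, because $\mathcal{H}$ is not fixed in advance: both $\beta^*$ and $\beta^i$ may depend on $\mathbf{x}^i$—the former through the fitted solution, the latter through the region into which $\mathbf{x}^i$ falls—so one cannot naively assert that a predetermined hyperplane is missed almost surely. The clean resolution is again the reduction through $\Theta_i^{c,c'}$: since $\Theta_i^{c,c'}$ omits the equation at $\mathbf{x}^i$, its unique solution, and hence $\beta^*$, is a function of the remaining instances only and is conditionally independent of $\mathbf{x}^i$; conditioning additionally on the locally linear region containing $\mathbf{x}^i$ freezes $\beta^i$ as well. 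Then $\mathcal{H}$ is a fixed measure‑zero set while $\mathbf{x}^i$ retains a conditionally absolutely continuous law, and the probability‑zero conclusion applies rigorously. As a conceptual byproduct, the theorem shows that whenever $\Omega_{d+2}^{c,c'}$ is solvable every sampled instance shares the core parameters of $\mathbf{x}^0$—that is, solvability certifies that the ideal case of Section~\ref{method:naive} holds—which is exactly what licenses \texttt{OpenAPI} to shrink the neighborhood until a solution appears.
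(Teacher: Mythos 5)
Your proposal follows essentially the same route as the paper's own proof: the same reduction to the deleted subsystems $\Theta_i^{c,c'}$, the same appeal to Lemma~\ref{lma:fullrank} to get uniqueness from solvability, and the same three-case hyperplane/measure-zero argument for the identification $\beta^*=\beta^i$. The only place you go beyond the paper is in making Case~3 rigorous---conditioning on the instances other than $\mathbf{x}^i$ and on the locally linear region containing $\mathbf{x}^i$, so that $\mathcal{H}$ is frozen before invoking absolute continuity---a dependence subtlety (both $\beta^*$ and $\beta^i$ are random, not fixed in advance) that the paper's proof passes over silently.
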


\nop{
Equation~\ref{eqn:out4} does not necessarily mean $\beta^*=\beta^i$, because $\beta^*$ can be different from $\beta^i$ if $\mathbf{x}^i$ is located on the hyperplane $\mathcal{H}$ defined by Equation~\ref{eqn:out4}.
Therefore, a necessary condition for $\beta^*\neq \beta^i$ is that $\mathbf{x}^i$ is located on $\mathcal{H}$.

Denote by $\mathbb{P}(\mathbf{x}^i \in \mathcal{H})$ the probability that $\mathbf{x}^i$ is located on $\mathcal{H}$, we have $\mathbb{P}(\beta^*\neq \beta^i)\leq \mathbb{P}(\mathbf{x}^i \in \mathcal{H})$.
}

\nop{
Denote by $P$ the probability that $\mathbf{x}^i$ is located on $\mathcal{H}$, $\mathbb{P}(\beta^*\neq\beta) \leq P$.

Therefore, $\mathbb{P}(\beta^*\neq \beta^i)$ is no larger than the probability that $\mathbf{x}^i$ is located on $\mathcal{H}$, denoted by $\mathbb{P}(\mathbf{x}^i \in \mathcal{H})$.
}

\nop{
Next, we discuss the value of $P$ when $\beta^*\neq\beta^i$ in following two cases.
}

\nop{$\mathbf{x}^i$ satisfies Equation~\ref{eqn:out4} when $\beta^*\neq \beta^i$.}

\nop{
Since $\mathbf{x}^i$ must satisfy the previous two equations, $\mathbb{P}(\beta^*\neq \beta^i)$ is exactly the probability that $\mathbf{x}^i$ satisfies Equation~\ref{eqn:out4} when $\beta^*\neq \beta^i$.}

\nop{
\begin{equation}
\label{eqn:out4}
	(D_{c,c'}^* - D_{c,c'}^i)^\top \mathbf{x}^i + (B_{c,c'}^* - B_{c,c'}^i) = 0
\end{equation}
}

\nop{
Denote by $\mathbb{P}(\mathbf{x}^i \in \mathcal{H})$ the probability that $\mathbf{x}^i$ is located on $\mathcal{H}$, 
}

\nop{
then $\mathbb{P}(\beta^*\neq\beta^i)$ is no larger than the probability that $\mathbf{x}^i$ is located on $\mathcal{H}$.
In this case, $\mathbb{P}(\beta^*\neq\beta^i)$ is still 0, because 
each $\mathbf{x}^i, i\in\{1, \ldots, d+1\}$ is independently uniformly sampled from a $d$-dimensional hypercube, and
$\mathbf{x}^0$ is randomly drawn from an underlying continuous distribution in the $d$-dimensional space of $\mathcal{X}$~\cite{nasrabadi2007pattern}, the probability for $\mathbf{x}^i, i\in\{0, \ldots, d+1\}$ to be located on the hyperplane defined by Equation~\ref{eqn:out4} is 0.

In summary, $\mathbb{P}(\beta^*\neq\beta^i)=0$ for any $i\in\{0, \ldots, d+1\}$. The theorem follows.
}

\nop{
we can randomly sample a new set of instances $\{\mathbf{x}^\}$

The Theorem~\ref{theorem} provides us with a straightforward way to interpret the classification result of each individual instance by solving a set of linear equation systems. The Algorithm~\ref{algo:interpret} summarizes the our proposed method, \textbf{\texttt{OpenAPI}}, which computes $\mathcal{I}$ as the exact interpretation for the prediction of the input instance.
}

\begin{algorithm}[t]
\SetAlgoLined
\setcounter{AlgoLine}{0}
\KwIn{$\mathcal{A}\coloneqq$ the API of a PLM, $c\coloneqq$ the class $c$ to interpret, $\mathbf{x}^0\coloneqq$ the instance to interpret, $m\coloneqq$the maximum number of iterations.}
\KwOut{$D^*_c:=$ the value of $D_c$ computed by \texttt{OpenAPI}, $r:=$ the hypercube edge length.}

Initialize: $r\leftarrow 1.0$, $\mathcal{I}\leftarrow\emptyset$, $D^*_{c} \leftarrow null$.

\While{$m\neq 0$}{
    Sample $d+1$ points in the hypercube with edge length $r$: $S\leftarrow\{\mathbf{x}^1, \ldots, \mathbf{x}^{d+1} \}$.
    
    \For{each $c' \in \{1, \ldots, C\}\setminus c$}{
	Construct $\Omega_{d+2}^{c,c'}$ by $d+2$ points in $S\cup\mathbf{x}^0$.
	
    	\textbf{If} $\Omega_{d+2}^{c,c'}$ has a solution $\beta^*$ \textbf{then} 
		$\mathcal{I}\leftarrow \mathcal{I}\cup\beta^*$.
    }
    
    \eIf{$|\mathcal{I}| < C-1$}{
    	$\mathcal{I}\leftarrow\emptyset$, $r\leftarrow r/2$.
    }{
    	Compute $D^*_c$ from $\mathcal{I}$ by Equation~\ref{eqn:dc}, and break.
    }
    $m\leftarrow m - 1$
}

\Return $D^*_c$, $r$.
\caption{\texttt{OpenAPI}($\mathcal{A}, c, \mathbf{x}^0, m$)}
\label{algo:interpret}
\end{algorithm}

According to Theorem~\ref{thm:main}, if $\Omega_{d+2}^{c,c'}$ has a solution, 
then it is the core parameters $\beta^0$ of $\mathbf{x}^0$ with probability $1$. 
In this case, we can directly compute $\beta^0$ as the closed form solution to $\Omega_{d+2}^{c,c'}$.
If $\Omega_{d+2}^{c,c'}$ has no solution, we can reconstruct it by randomly sampling a new set of instances in the neighborhood of $\mathbf{x}^0$, and solve the corresponding linear equation system. 
This iteration of reconstructions continues until we sample a set of instances that have the same core parameters as $\mathbf{x}^0$. Then, we can find the solution to $\Omega_{d+2}^{c,c'}$, which is $\beta^0$ with probability $1$.

Recall that all instances within the same locally linear region have the same core parameters. If we sample instances from a \textbf{proper hypercube} that is contained in the locally linear region of $\mathbf{x}^0$, then the instances sampled certainly have the same core parameters as $\mathbf{x}^0$, and we are sure to find the valid solution $\beta^0$.

\nop{
An input instance $\mathbf{x}^0$ does not have a proper hypercube with $r>0$ if it is located on the boundary of a locally linear region. 
}

\nop{
With probability equal to 0, a $\mathbf$

a proper hypercube does not exist when $\mathbf{x}^0$ locates on the boundary of a locally linear region.

When $\mathbf{x}^0$ locates on the boundary of a locally linear region, it is impossible to 
}

Intuitively, a hypercube with smaller edge length $r$ is more likely to be contained by the locally linear region of $\mathbf{x}^0$. 
However, it is impractical to empirically set one value of $r$ to fit all PLMs and arbitrary instances to be interpreted, because the sizes of locally linear regions vary significantly for different PLMs, and the maximum $r$ of a proper hypercube can be arbitrarily small for an input instance that is very close to the boundary of a locally linear region.
Therefore, as described in Algorithm~\ref{algo:interpret}, \texttt{OpenAPI} adaptively finds a proper hypercube by reducing the edge length $r$ by half in each iteration of reconstruction.

\nop{
The only special case that \texttt{OpenAPI} may not work well is when $\mathbf{x}^0$ locates on the boundary of a locally linear region.
In this case, no hypercube with $r>0$ can be contained by the locally linear region. 
However, 
}

As long as $\mathbf{x}^0$ is contained in a locally linear region,
\texttt{OpenAPI} eventually can find a proper hypercube and compute a valid output, denoted by $D^*_c$.
If $\mathbf{x}^0$ is located on the boundary of a locally linear region, then there is no proper hypercube with $r>0$ for $\mathbf{x}^0$, and \texttt{OpenAPI} may fail to return a valid output.
However, since the probability that $\mathbf{x}^0$ is located on the boundary of a locally linear region is 0, the probability that \texttt{OpenAPI} returns the valid $D^*_c$ is still $1$.  To guarantee \texttt{OpenAPI} terminates even in the unlikely case that $\mathbf{x}^0$ is located on the boundary of a locally linear region, \texttt{OpenAPI} stops after a certain number of iterations, which is a system parameter. In our experiments, we set the maximum number of iterations for \texttt{OpenAPI} as  100. However, since the probability that $\mathbf{x}^0$ is located on the boundary of a locally linear region is 0, the  non-terminating case never happened in our experiments, and \texttt{OpenAPI} always terminates in less than 20 iterations. If \texttt{OpenAPI} cannot find a proper hypercube within the maximum number of iterations, the smallest edge length $r$, which is constructed at the last iteration, will be returned.

Since \texttt{OpenAPI} adaptively finds a proper hypercube, the initial value of $r$ has little influence on the accuracy of \texttt{OpenAPI}. 
Thus, we simply initialize it as $r=1.0$ in our experiments.

\nop{
Following the empirical perturbation range of the baseline method \texttt{ZOO}~\cite{chen2017zoo}, we simply initialize $r$ to be $10^{-4}$ for the fairness of comparison.
}

\nop{
We will illustrate how to set $r^*$ in our experiments to conduct fair comparison between \texttt{OpenAPI} and the other baseline methods.
}
\nop{
As a result, we refer to the empirical perturbation range of \texttt{ZOO}~\cite{chen2017zoo} simply initialize $r$ to be $10^{-4}$ in Algorithm~\ref{algo:interpret}.
}

\nop{
Following the routine of data preprocessing, we assume that all instances are normalized such that the maximum range for each entry of an instance is $[-0.5, 0.5]$.
Then, we simply initialize $r$ to be $2.0$ to make sure the initial hypercube covers the full range of all instances.
}

\nop{
The details of \texttt{OpenAPI} is described in Algorithm~\ref{algo:interpret}.
}

\nop{
The variables $h$ and $l$ are the upper bound and lower bound of the edge length $r$, respectively. We initialize $l$ to be 0, and $h$ to be the 
}

\nop{
uses binary search to find the maximum edge length $r$ of the proper hypercube, which effectively reduces the number of reconstructions.
}

\nop{
effectively reduces the number of reconstructions by searching the maximum edge length $r$ of the hypercube .

In \texttt{OpenAPI}, we effectively reduce the number of reconstructions by estimating the maximum edge length $r$ of the $d$-dimensional hypercube, such that the hypercube is contained in the locally linear region of $\mathbf{x}^0$.

 In this case, the sampled instances will have the same core parameters as $\mathbf{x}^0$, and we will find the valid solution $\beta^0$.
}
\nop{
before sampling a new set of instances.
When the hypercube is contained in the locally linear region of $\mathbf{x}^0$, }

\nop{
Following the ideas above, we propose \texttt{OpenAPI} in Algorithm~\ref{algo:interpret}.
Specifically,

Moreover, we can even compute the largest 
}

\nop{
find a valid solution, which is exactly $\beta^0$ with probability equal to 1.

The number of times to reconstruct $\Omega_{d+2}^{c,c'}$ depends on whether we can quickly sample a proper set of instances that have the same core parameters as $\mathbf{x}^0$.

This guarantees the correctness of $\beta^0$.

}

\nop{

Therefore, we can guarantee the correctness of $\beta^0$, by returning

$D_c$ by returning $D_c$ as an output of \texttt{OpenAPI} only when $\Omega_{d+2}^{c,c'}$ has a solution.

Specifically, when a linear equation system $\Omega_{d+2}^{c,c'}$ has no solution, 
we can reconstruct it by randomly sampling a new set of instances in the neighborhood of $\mathbf{x}^0$. 
This process continues until the reconstructed system has a valid solution, which is exactly the core parameters of $\mathbf{x}^0$ with probability equal to 1.
As a result, \texttt{OpenAPI} only produce an output when the output is correct.
}

\nop{
In the algorithm, we use binary search method to find a proper edge length of the sample space. As shown in the Algorithm~\ref{algo:interpret}, the variable $h$ and $l$ represent the upper bound and lower bound of the edge length, respectively. If not all of the equation systems are solvable, we shrink the sample space by decreasing the value of $r$, such that the sampled instances are more likely to come from the region of $x_0$. $\Delta$ is the stopping criteria of the binary search operation. In our implementation, we set it to $10^{-12}$. Main cost the algorithm is solving the linear equation system. The computing complexity of \textbf{\texttt{OpenAPI}} is $O(K (d+1) ^ 3)$. 
}

\texttt{OpenAPI} has three major advantages. 
First, \texttt{OpenAPI} computes interpretations in closed form, and provides a solid theoretical guarantee on the exactness of interpretations.
Second, our interpretation is consistent for all instances in the same locally linear region. This is because all instances contained in the same locally linear region have the same decision features, which are accurately identified by \texttt{OpenAPI}.
Last, \texttt{OpenAPI} is highly efficient, of time complexity $O( T \cdot C (d+2) ^ 3)$, where $d$ and $C$ are constants for a PLM, and $T$ is the number of iterations of reconstruction. 

\nop{
This leads to a significant advantage of experimental performance comparing to the heuristic methods such as \texttt{LIME}~\cite{ribeiro2016should} and \texttt{ZOO}~\cite{chen2017zoo}.
}

\nop{
the $D_c$ computed by \texttt{OpenAPI} is exactly the weights of decision features for the classification of an input instance. 
}

\nop{
it computed $\mathcal{I}$ by solving a set of linear equation systems. There are a lot mature techniques can be applied to speed up the computation.
}

\nop{
Initialization $h:=2*$maximum valid value of input instance, $l:=2*$minimum valid value of input instance, $n:=$the predicted class of $x_0$
}

\nop{

\begin{algorithm}[t]
\SetAlgoLined
\setcounter{AlgoLine}{0}
\KwIn{$\mathcal{N}\coloneqq$ the API of a PLM, $c\coloneqq$ the class $c$ to interpret, $\mathbf{x}^0\coloneqq$ the instance to interpret.}
\KwOut{$D_c:=$ the $D_c$ of $\mathbf{x}^0$.}

Initialize: $r\leftarrow 1.0$, $\mathcal{I}\leftarrow\emptyset$.

\While{$h - l \leq \delta$}{
    Update the edge length $r$ of the hypercube: $r \leftarrow (h + l) / 2$.
    
    Sample $d+1$ points in the hypercube: $S\leftarrow\{\mathbf{x}^1, \ldots, \mathbf{x}^{d+1} \}$.
    
    \For{each $c' \in \{1, \ldots, C\}\setminus c$}{
	Construct $\Omega_{d+2}^{c,c'}$ by the $d+2$ points in $S\cup\mathbf{x}^0$.
	
    	\eIf {$\Omega_{d+2}^{c,c'}$ has a solution $\beta^*$} {
		$\mathcal{I}\leftarrow \mathcal{I}\cup\beta^*$.
	}{
		$h\leftarrow r$, and break.	
	}
    }
    
    \textbf{If} $h\neq r$ \textbf{then} $l\leftarrow r$. 
    
    Apply Equation~\ref{eqn:dc} to compute $D_c$ from $\mathcal{I}$.
    
\nop{
    \eIf{All equation system are solvable} {
    	$l:=r$
    } {
    	$h:=r$
    }
    }
}
\Return $D_c$.
\caption{\texttt{OpenAPI}($\mathcal{N}, c, \mathbf{x}^0$)}
\label{algo:interpret}
\end{algorithm}
}

\nop{

by adding and subtracting a heuristically fixed \textbf{perturbation distance} $h$ to $\mathbf{x}^0$ along every axis of the $d$-dimensional input space $\mathcal{X}$
}

\section{Experiments}
\label{sec:exp}

In this section, we evaluate the performance of \texttt{OpenAPI} by investigating the following four questions: (1) Can \texttt{OpenAPI} effectively explain model predictions? (2) Are the interpretations consistent? (3) How well are the perturbed instances being used for interpretations? (4) Are the computed interpretations exact?

To demonstrate that \texttt{OpenAPI} can effectively interpret the predictions of PLMs, we compare \texttt{OpenAPI} with four baseline interpretation methods, \texttt{Saliency Maps}~\cite{simonyan2013deep}, \texttt{Gradient * Input}~\cite{shrikumar2016not},  \texttt{Integrated Gradient}~\cite{sundararajan2017axiomatic}, and \texttt{LIME}~\cite{ribeiro2016should}. The first three gradient-based methods~\cite{ancona2018towards} require to access the model parameters. \texttt{LIME} can interpret the predictions of PLMs with only API access. 

\texttt{Saliency Maps} interprets a prediction by taking the absolute value of the partial derivative of the prediction with respect to the input features.
\texttt{Gradient * Input} uses the feature-wise product between the partial derivative and the input itself as the interpretation for a prediction.
Rather than computing the partial derivative of the input instance $\mathbf{x}^0$, \texttt{Integrated Gradient} computes the average partial derivatives when the input varies along a linear path from a baseline point to $\mathbf{x}^0$.
\texttt{LIME} interprets the predictions of a classifier by training an interpretable model on the outputs of the classifier in a heuristically selected neighborhood of the input instance. We adopt the same experiment settings used in~\cite{shrikumar2017learning} and~\cite{chu2018exact} for \texttt{Integrated Gradient} and \texttt{LIME}, respectively.

\nop{\mc{In the last two subsections, we show that \texttt{OpenAPI} can exactly compute our proposed interpretation $D_c$ with only API access to the PLMs.}}

To evaluate the capability of interpretation with only API access to PLMs, in addition to the naive method discussed in Section~\ref{method:naive}, we design two more baselines by slightly extending  \texttt{ZOO}~\cite{chen2017zoo} and \texttt{LIME}~\cite{ribeiro2016should} as follows. 

\texttt{ZOO} is a zeroth-order approximation method approximating the gradients of functions. It first samples $d$ pairs of instances by perturbing $\mathbf{x}^0$ back-and-forth along each axis of $\mathbb{R}^d$ for a heuristically fixed \textbf{perturbation distance} $h$. Then, it estimates the gradient of a model with respect to $\mathbf{x}^0$ by computing the symmetric difference quotient~\cite{lax2014calculus} between each pair of sampled instances. Equation~\ref{eqn:lineareq} clearly shows that the derivative of $\ln(\frac{\mathbf{y}_c}{\mathbf{y}_{c'}})$ with respect to $\mathbf{x}$ is exactly $D_{c, c'}$. Thus, it is natural to use \texttt{ZOO} to estimate $D_{c, c'}$. Then $D_c$ is computed from the estimated $D_{c, c'}$ in the same way as Equation~\ref{eqn:dc}.

\texttt{LIME} interprets predictions in the one-vs-the-rest way~\cite{ribeiro2016should}. It is easy to extend \texttt{LIME} such that it uses $D_c$ as its interpretations. Rather than training a linear model to approximate the predicted probability $\mathbf{y}_c$ of a perturbed instance, the extended \texttt{LIME} tries to fit $\ln(\frac{\mathbf{y}_c}{\mathbf{y}_{c'}})$ of the perturbed instances. Because of the linear relationship between an instance $\mathbf{x}$ and the corresponding value $\ln(\frac{\mathbf{y}_c}{\mathbf{y}_{c'}})$, the coefficients of the linear model are an approximation to $D_{c, c'}$. Similarly to \texttt{ZOO}, $D_{c}$ is computed from the estimated $D_{c, c'}$.
In our experiments, two types of linear regression models are used as approximators. The one using regular linear regression is called \texttt{Linear Regression LIME} and the one using ridge regression is called \texttt{Ridge Regression LIME}.

\nop{All the three methods can interpret a PLM hidden behind an API, since they do not access training data or model parameters.
None of them, however, can effectively handle the classical problem of softmax saturation. When an input instance $\mathbf{x}^0$ is classified with a probability extremely close to $1$, the saturated softmax function loses a lot of gradient information. In such a case, \texttt{LIME} and \texttt{ZOO} have little gradient information to accurately approximate the $D_c$ of a PLM~\cite{chu2018exact}, and the linear equation system of \texttt{OpenAPI} is ill-conditioned.
To obtain informative comparison results, we evaluate the performance of all three methods on the instances with unsaturated softmax function, and ignore the instances that are classified with a probability larger than $0.9999$.}

\nop{
which makes it impossible for any method to recover $D_c$ from the API of a PLM.

Specifically, when the softmax function is saturated, 
\texttt{LIME} and \texttt{ZOO} will have little gradient information to accurately approximate the $D_c$ of a PLM~\cite{chu2018exact}, and the linear equation system of \texttt{OpenAPI} will be ill-conditioned.
As a result, we evaluate the performance of all methods on the instances with unsaturated softmax function, and ignore the instances that are classified with a probability larger than 0.9999.
}
\nop{
Evaluation in such a common bad case of all methods doesn't produce informative comparison results, 
}

We use the published Python codes of \texttt{Integrated Gradient}\footnote{\url{https://github.com/ankurtaly/Integrated-Gradients}}, \texttt{LIME}\footnote{\url{https://github.com/marcotcr/lime}} and \texttt{ZOO}\footnote{\url{https://github.com/huanzhang12/ZOO-Attack}}. The remaining algorithms are implemented using the PyTorch library~\cite{paszke2017automatic}.
All experiments are conducted on a server with two Xeon(R) Silver 4114 CPUs (2.20GHz), four Tesla P40 GPUs, 400GB main memory, and a 1.6TB SSD running Cenos 7 OS.
\nop{We will publish the code of \texttt{OpenAPI} as well as the data sets to public upon acceptance of our paper.} Our source code is published at GitHub $\langle$\url{https://github.com/researchcode2/OpenAPI}$\rangle$.

\nop{
Due to the numerical limitations of Python's floating point operations, the inverse of \emph{sigmoid} at values extremely close to 1 will induce large numerical errors. 
Therefore, in all of the experiments, the instances with a predicted probability larger than 0.9999 are omitted.  
}

\nop{
OpenBox is able to compute the exact $D_c$ of a PLNN by transforming it into a mathematically equivalent set of locally linear classifiers in closed form.
}

\nop{
transforms a PLNN into a mathematically equivalent set of linear classifiers, and use the decision features and polytope 

Different from \texttt{LIME} and \texttt{ZOO} which are black-box methods, OpenBox is a white-box interpretation method for PLNNs. It transforms a PLNN into a mathematically equivalent set of linear classifiers, then use the decision features and the polytope boundary features of locally linear classifiers to provide exact and consistent interpretations on the overall behavior of a PLNN.
}

\nop{
is to estimate the feature weights of a classifier by perturbing each dimension of $\mathbf{x}^0$ with a heuristically fixed \textbf{perturbation distance} $h$.
More specifically, for any $i \in \{1,\ldots,d\}$, the estimated weight of the $i$-th feature in $W$ is as follows
\begin{equation*}
W[i] = \frac{\sigma^{-1}(\mathbf{y}_+) - \sigma^{-1}(\mathbf{y}_-)}{2h}
\end{equation*}
where $\mathbf{y}_+ = \sigma(W^T(\mathbf{x}+h\mathbf{e}_i)+\mathbf{b})$, $\mathbf{y}_- = \sigma(W^T(\mathbf{x}-h\mathbf{e}_i)+\mathbf{b})$, and $\mathbf{e}_i$ is a standard basis vector with the $i$-th component as 1.
}

\nop{
We also use three baseline methods to compare with \texttt{OpenAPI}: \texttt{LIME}~\cite{ribeiro2016should}, \textcolor{red}{\texttt{ZOO}}~\cite{chen2017zoo} and OpenBox~\cite{chu2018exact}.  \texttt{LIME} explains the predictions of a classifier by sampling instances around $\mathbf{x}^0$ and then learning an interpretable local model.  \texttt{ZOO} is to estimate the feature weights of a classifier by perturbing each dimension of $\mathbf{x}^0$ with a predefined distance $h$. For any $i \in \{1,\ldots,d\}$, the estimated weight of the $i$-th feature in $W$ is as follows
\begin{equation*}
W[i] = \frac{\sigma^{-1}(\mathbf{y}_+) - \sigma^{-1}(\mathbf{y}_-)}{2h}
\end{equation*}
where $\mathbf{y}_+ = \sigma(W^T(\mathbf{x}+h\mathbf{e}_i)+\mathbf{b})$, $\mathbf{y}_- = \sigma(W^T(\mathbf{x}-h\mathbf{e}_i)+\mathbf{b})$, and $\mathbf{e}_i$ is a standard basis vector with the $i$-th component as 1. 
}

\nop{
Different from \texttt{LIME} and \texttt{ZOO} which are black-box methods, OpenBox is a white-box interpretation method for PLNNs. It transforms a PLNN into a mathematically equivalent set of linear classifiers, then use the decision features and the polytope boundary features of locally linear classifiers to provide exact and consistent interpretations on the overall behavior of a PLNN.
}

\nop{
\begin{table}[t]
    \caption{Detailed description of data sets}
    \label{tab:data_desc}
    \centering
    \begin{tabular}{|c|c|c|c|c|}
    \hline
    \multirow{2}{*}{Data Sets} & \multicolumn{2}{c|}{Training Data} & \multicolumn{2}{c|}{Testing Data} \\
    \cline{2-5}
    & \# Positive & \# Negative & \# Positive & \# Negative \\
    \hline
        FMNIST-1 & 6000 & 6000 & 1000 & 1000 \\
    \hline
        FMNIST-2 & 6000 & 6000 & 1000 & 1000 \\
    \hline
    \end{tabular}
\end{table}}

\begin{table}[t]
    \centering
    \begin{tabular}{|c|c|c|c|c|}
    \hline
    \multirow{2}{*}{Data Sets} & \multicolumn{2}{c|}{FMNIST} & \multicolumn{2}{c|}{MNIST} \\
    \cline{2-5}
    & Train & Test & Train & Test \\
    \hline
        PLNN & 0.888 & 0.865 & 0.980 & 0.971 \\
    \hline
        LMT & 0.950 & 0.870 & 0.991 & 0.949 \\
    \hline
    \end{tabular}
    \caption{The training and testing accuracies of all models}
    \label{tab:my_label}
\end{table}

We conduct all experiments across two public datasets, FMNIST \cite{xiao2017/online} and MNIST \cite{lecun-mnisthandwrittendigit-2010}. FMNIST contains fashion images in 10 categories and MNIST contains images of handwritten digits from 0 to 9. Both datasets consist of a training set of 60,000 examples and a test set of 10,000 examples.
\nop{We built two data sets, named FMNIST-1 and FMNIST-2, from the Fashion MNIST data set~\cite{xiao2017/online} in a similar way as Chu~\emph{et~al.}~\cite{chu2018exact}. FMNIST-1 contains the images of \emph{Coat} and \emph{Pullover}, and FMNIST-2 consists of the images of \emph{T-shirt} and \emph{Shirt}. }
We represent each of the 28-by-28 gray scale images by cascading the 784 pixel values into a 784-dimensional feature vector. The pixel values are normalized to the range $[0, 1]$.
\nop{Table~\ref{tab:data_desc} gives the details of FMNIST-1 and FMNIST-2.}

\nop{
The Fashion MNIST data set is available online\footnote{https://github.com/zalandoresearch/fashion-mnist}.
}

On each dataset, we train a Logistic Model Tree (LMT)~\cite{landwehr2005logistic} and a Piecewise Linear Neural Network (PLNN)~\cite{chu2018exact} as the target PLMs to interpret.
The classification performance of all models are shown in Table~\ref{tab:my_label}.

\nop{To train a LMT, we first use the standard C4.5 algorithm~\cite{quinlan2014c4} to train a decision tree, and then train a sparse logistic regression classifier}

Following the design in~\cite{landwehr2005logistic}, we use the standard C4.5 algorithm~\cite{quinlan2014c4} to select the pivot feature for each node and a sparse multinomial logistic regression classifier is trained on each leaf node of the tree. To prevent overfitting, we adopt two stopping criteria. A node is not further split if it contains less than 100 training instances or the accuracy of the regression classifier is greater than 99\%.
Since every leaf node of a LMT is a locally linear classifier, the leaf node itself corresponds to a locally linear region, and we can directly extract the ground truth decision features for an input instance $\mathbf{x}^0$ from the multinomial logistic regression classifier of the leaf node containing $\mathbf{x}^0$.

To train a PLNN, we use the standard back-propagation to train a fully-connected network that adopts the widely used activation function ReLU~\cite{glorot2011deep}.
The numbers of neurons from the input layer to the output layer are 784, 256, 128, 100 and 10, respectively. This network is used as a baseline model on the website $\langle$\url{https://github.com/zalandoresearch/fashion-mnist}$\rangle$ of FMNIST.
We use OpenBox~\cite{chu2018exact} to compute the locally linear regions and the ground truth decision features $D_c$ for an input instance of a PLNN.

Since \texttt{LIME} is too slow to process all instances in 24 hours, for each of FMNIST and MNIST, we uniformly sample 1000 instances from the testing set, and conduct all experiments for \nop{\texttt{LIME}}all methods on the sampled instances.

\nop{
Every image is a 28-by-28 gray scale image, which is represented by 
}

\nop{The Fashion MNIST data set is available online \footnote{https://github.com/zalandoresearch/fashion-mnist}.}

\nop{
and use \textbf{FMNIST-1 and FMNIST-2} as data sets. As it is shown in Table~\ref{tab:data_desc}, each data set contains two classes of images FMNIST-1 consists of images of \emph{Coat} and \emph{Pullover}, and FMNIST-2 consists of \emph{T-shirt top} and \emph{Shirt}. All images in these two data sets are 28-by-28 gray scale images. We represent an image by cascading the 784 pixel values into a 784-dimensional feature vector. The Fashion MNIST data set is available online \footnote{https://github.com/zalandoresearch/fashion-mnist}.
}

\nop{
To combine two data sets and two candidate PLMs, all tests to tackle the above questions are implemented in four models and the classification performance of these models is shown in Table~\ref{tab:my_label}. 
}

\nop{
\textcolor{blue}{wanglj: tbd as the following step is a common component in multiple experiments. } 

In this step, we suppose the parameters and structures of PLMs are known, which is practical in the experiments. 

For LMTs, since each leaf node represents a locally linear region, the true input region and the true decision features can be retrieved from the leaf node containing the sample. 

For PLNNs, there is no explicit representation of either regions or locally linear classifiers even we know the structure and parameters.  Fortunately, we can leverage the white box method, OpenBox ~\cite{Chu2018exact}. In the PLNN related experiments, we use OpenBox to extract the decision features and the polytope boundaries of the locally linear classifier for each sample in $S$.
}
\nop{
We also use three baseline methods to compare with \texttt{OpenAPI}: \texttt{LIME}~\cite{ribeiro2016should}, \textcolor{red}{\texttt{ZOO}}~\cite{chen2017zoo} and OpenBox~\cite{chu2018exact}.  \texttt{LIME} explains the predictions of a classifier by sampling instances around $\mathbf{x}^0$ and then learning an interpretable local model.  \texttt{ZOO} is to estimate the feature weights of a classifier by perturbing each dimension of $\mathbf{x}^0$ with a predefined distance $h$. For any $i \in \{1,\ldots,d\}$, the estimated weight of the $i$-th feature in $W$ is as follows
\begin{equation*}
W[i] = \frac{\sigma^{-1}(\mathbf{y}_+) - \sigma^{-1}(\mathbf{y}_-)}{2h}
\end{equation*}
where $\mathbf{y}_+ = \sigma(W^T(\mathbf{x}+h\mathbf{e}_i)+\mathbf{b})$, $\mathbf{y}_- = \sigma(W^T(\mathbf{x}-h\mathbf{e}_i)+\mathbf{b})$, and $\mathbf{e}_i$ is a standard basis vector with the $i$-th component as 1. Different from \texttt{LIME} and \texttt{ZOO} which are black-box methods, OpenBox is a white-box interpretation method for PLNNs. It transforms a PLNN into a mathematically equivalent set of linear classifiers, then use the decision features and the polytope boundary features of locally linear classifiers to provide exact and consistent interpretations on the overall behavior of a PLNN.

The python codes of \texttt{LIME}\footnote{https://github.com/marcotcr/lime} and \texttt{ZOO}\footnote{https://github.com/huanzhang12/\texttt{ZOO}-Attack} are published by their authors. All experiments are implemented in python with the PyTorch library~\cite{paszke2017automatic}, and conducted on a server with two Xeon(R) Silver 4114 CPUs (2.20GHz), four Tesla P40 GPUs, 400GB main memory, and a 1.6TB SSD running Cenos 7 OS.
}

\nop{
The remaining number of testing data is shown in Table~\ref{tab:remain_test_data}.
}

\nop{
\begin{table}[t]
    \centering
    \begin{tabular}{|c|c|c|c|c|}
    \hline
    \multirow{2}{*}{Data Sets} & \multicolumn{2}{c|}{FMNIST-1} & \multicolumn{2}{c|}{FMNIST-2} \\
    \cline{2-5}
    & \# Positive & \# Negative & \# Positive & \# Negative \\
    \hline
        PLNN & 0.895 & 0.87 & 0.849 & 0.827 \\
    \hline
        LMT & 0.906:q & 0.861 & 0.893 & 0.84 \\
    \hline
    \end{tabular}
    \caption{Remaining Instance Number}
    \label{tab:remain_test_data}
\end{table}
}


\nop{
As illustrated in Section~\ref{sec:openapi}, the key for \texttt{OpenAPI} to accurately compute $D_c$ is that \texttt{OpenAPI} samples a set of proper instances that have the same core parameters as the input instance $\mathbf{x}^0$.
We also proved in Theorem~\ref{thm:impossible} that when the , the set of sampled instances will have the same set of core parameters as the input instance. 
}

\nop{
the key for \texttt{OpenAPI} to accurately compute $D_c$ is sampling a set of \textbf{proper instances} that have the same core parameters as the input instance $\mathbf{x}^0$.
}

\nop{
we design two evaluation metrics, such as Region Difference and Weight Difference, to demonstrate the above claims. 
}

\nop{
As claimed in Section~\ref{sec:openapi}, the set of instances sampled by \texttt{OpenAPI} are contained in the same locally linear region as the input instance $\mathbf{x}^0$, and all sampled instances have the same core parameters as $\mathbf{x}^0$.
These properties of the sampled instances are the key for \texttt{OpenAPI} to accurately compute the $D_c$ of a PLM.
We demonstrate these properties by following evaluation metrics.
}

\nop{
 \texttt{OpenAPI} and \texttt{ZOO} can accurately compute the $D_c$ of the interpreted PLM.
}

\nop{

. As a result, the accuracy in computing $D_c$ will be high.

The quality of the sampled instances largely affects the accuracy of all compared methods in computing the $D_c$ of an input instance $\mathbf{x}^0$.

A set of sampled instances will 

 are contained in the same locally linear region as $\mathbf{x}^0$, and have the same core parameters as $\mathbf{x}^0$.
}

\subsection{Can \texttt{OpenAPI} effectively Interpret Model Predictions?}

Good interpretations should be easily understood by human being. In this subsection, we first conduct a case study to illustrate the effectiveness of the interpretations. Then, we quantitatively evaluate the effectiveness of the interpretations given by \texttt{OpenAPI} and the four baseline methods. The three gradient-based methods are allowed to use the parameter information of the PLMs to compute their interpretations. \texttt{LIME} and \texttt{OpenAPI} are only allowed to use the APIs of the PLMs.

\newcommand{\parawidthExpone}{15mm}
\newcommand{\colorbarw}{4.2mm}
\begin{figure}[t]
\centering
\subfloat[Boot]{
\includegraphics[page=1, width=\parawidthExpone]{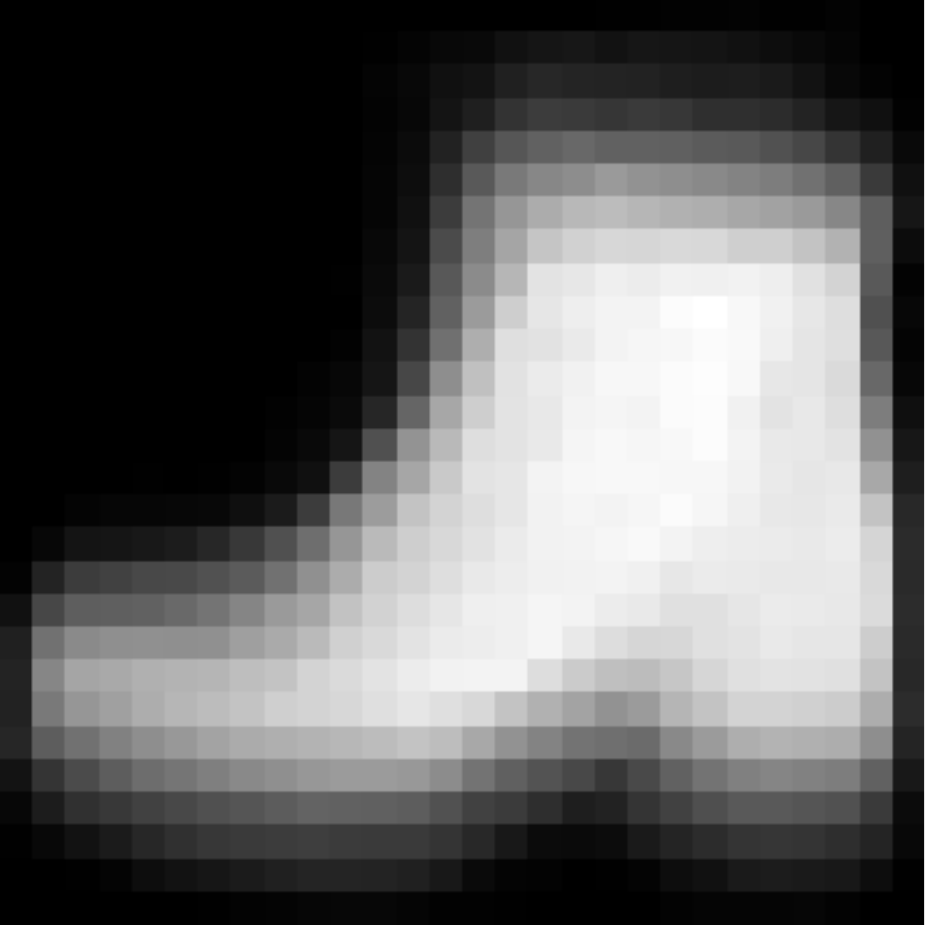}
}
\subfloat[Pullover]{
\includegraphics[page=3, width=\parawidthExpone]{Figures/DecisionFeature/MLP_FMNIST_GroundTruth_100_FMNIST.pdf}
}
\subfloat[Coat]{
\includegraphics[page=9, width=\parawidthExpone]{Figures/DecisionFeature/MLP_FMNIST_GroundTruth_100_FMNIST.pdf}
}
\subfloat[Sneaker]{
\includegraphics[page=13, width=\parawidthExpone]{Figures/DecisionFeature/MLP_FMNIST_GroundTruth_100_FMNIST.pdf}
}
\subfloat[T-shirt]{
\includegraphics[page=19, width=\parawidthExpone]{Figures/DecisionFeature/MLP_FMNIST_GroundTruth_100_FMNIST.pdf}
}

\subfloat[P, Boot]{
\includegraphics[page=2, width=\parawidthExpone]{Figures/DecisionFeature/MLP_FMNIST_GroundTruth_100_FMNIST.pdf}
}
\subfloat[P, Pull.]{
\includegraphics[page=4, width=\parawidthExpone]{Figures/DecisionFeature/MLP_FMNIST_GroundTruth_100_FMNIST.pdf}
}
\subfloat[P, Coat]{
\includegraphics[page=10, width=\parawidthExpone]{Figures/DecisionFeature/MLP_FMNIST_GroundTruth_100_FMNIST.pdf}
}
\subfloat[P, Sneak.]{
\includegraphics[page=14, width=\parawidthExpone]{Figures/DecisionFeature/MLP_FMNIST_GroundTruth_100_FMNIST.pdf}
}
\subfloat[P, T-shirt]{
\includegraphics[page=20, width=\parawidthExpone]{Figures/DecisionFeature/MLP_FMNIST_GroundTruth_100_FMNIST.pdf}
}

\subfloat[L, Boot]{
\includegraphics[page=2, width=\parawidthExpone]{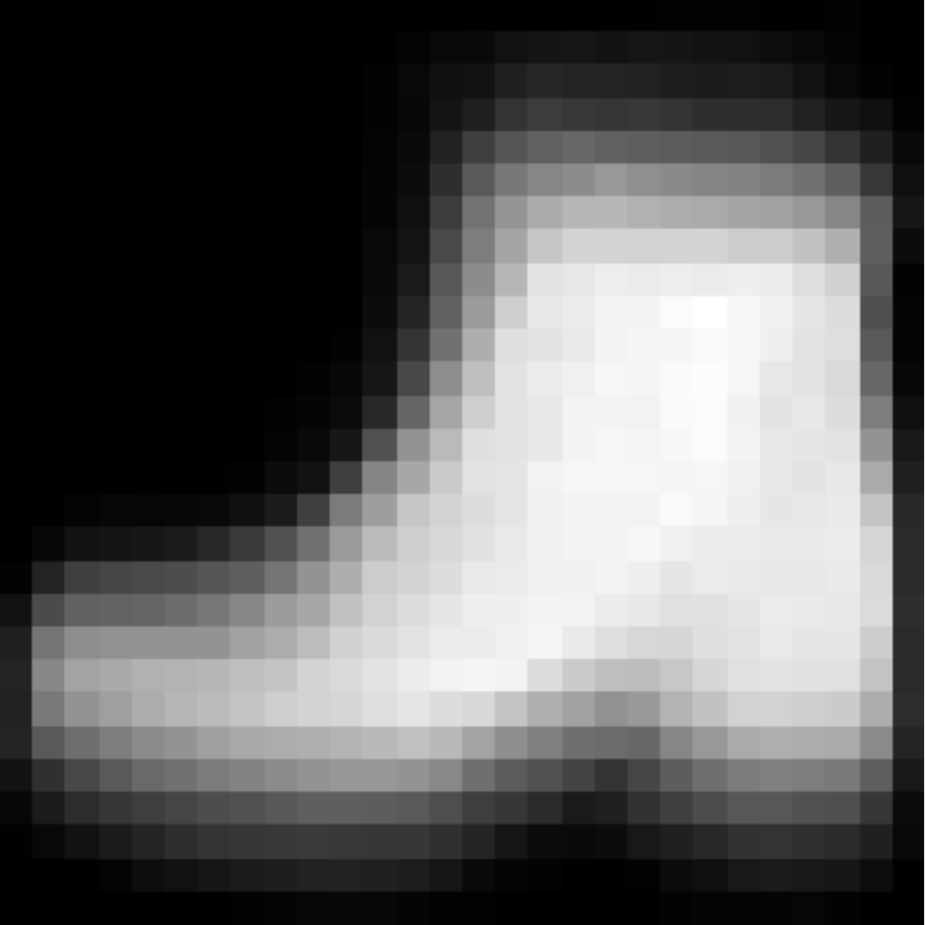}
}
\subfloat[L, Pull.]{
\includegraphics[page=4, width=\parawidthExpone]{Figures/DecisionFeature/LMT_FMNIST_GroundTruth_100_FMNIST.pdf}
}
\subfloat[L, Coat]{
\includegraphics[page=10, width=\parawidthExpone]{Figures/DecisionFeature/LMT_FMNIST_GroundTruth_100_FMNIST.pdf}
}
\subfloat[L, Sneak.]{
\includegraphics[page=14, width=\parawidthExpone]{Figures/DecisionFeature/LMT_FMNIST_GroundTruth_100_FMNIST.pdf}
}
\subfloat[L, T-shirt]{
\includegraphics[page=20, width=\parawidthExpone]{Figures/DecisionFeature/LMT_FMNIST_GroundTruth_100_FMNIST.pdf}
}

\caption{The averaged images of the selected FMNIST classes and their averaged decision features of PLNN (P) and LMT (L) computed by \texttt{OpenAPI}. ``Pull." and ``Sneak." are short for pullover and sneaker, respectively}
\label{fig:dc}
\end{figure}

Following the tradition of interpretation visualization~\cite{ancona2018towards}, we show the decision features as heatmaps, where red and blue colors indicate respectively features that contribute positively to the activation of the target output and features having a suppressing effect. The first row of Figure~\ref{fig:dc} shows the averaged images of five selected classes from FMNIST. For each  class, its averaged decision features of the trained PLNN and LMT are shown in the second and third rows, respectively.

Comparing the heatmaps with their corresponding averaged original images, it is clear that the decision features legibly highlight the image parts with strong semantical meanings, like the heal of boots, the shoulder of pullovers, the collar of coats, the surface of sneakers, and the short sleeves of T-shirts. A closer look at the averaged images suggests that the highlighted parts describe the differences between one type of objects against the others. 

Since the LMT is trained with sparse constraints, the decision features of the LMT are sparser than the ones of the PLNN. As a result, the PLNN captures more details of the objects.  Since both the LMT and the PLNN are trained on the same training data, 
the decision features learnt by the LMT highlight similar image patterns as the decision features of the PLNN.
This demonstrates the robustness of our proposed decision features in accurately interpreting general PLMs.

To quantitatively evaluate the effectiveness of interpretations, we adopt the evaluation method used by Ancona \emph{et~al.}~\cite{ancona2018towards}. The method assumes that a good interpretation model should identify features that are more relevant to the predictions. Therefore, modifications on those relevant features should result in sensible variations on the predictions. Following this idea, we modify the input features according to their weights in the computed interpretations as follows.

For each interpretation method, given an input instance $\mathbf{x}^0$ with predicted label $c$, we sort the input features in the descending order of their absolute weights. Based on the ranking, we proceed iteratively altering the input features one at a time and up to 200 features. As the features having positive (negative) weights support (opposite) to predict $\mathbf{x}^0$ as $c$, to decrease the confidence of a PLM on class $c$, we replace the input features of positive and negative weights by 0 and 1, respectively.\nop{we depreciate the support effects and enhance the suppressing effects of the features by replacing the input features with positive (negative) weights by 0 (1).} The changes on the predictions are evaluated by two metrics, the \textbf{change of prediction probability (CPP)} and the \textbf{number of label-changed instance (NLCI)}~\cite{chu2018exact}. \textbf{CPP} is the absolute change of the probability of classifying $\mathbf{x}^0$ as $c$ and \textbf{NLCI} is the number of instances whose predicted labels change after their features being altered.

As shown in Figure~\ref{fig:effect}, \texttt{Saliency Maps} performs worst among all methods. The result is consistent with the conclusion in \cite{ancona2018towards} that the instances may have features that opposite the predictions of some classes. Those features play an important role in interpreting the model predictions and can only be detected by signed interpretation methods.
As shown in Figure~\ref{fig:effect} and mentioned by Ancona \emph{et~al.}~\cite{ancona2018towards}, \texttt{Gradient * Input} captures important features better than \texttt{Integrated Gradient}. The latter involves the gradients of the unrelated instances into interpretations, therefore cannot precisely interpret the predictions.
As expected, \texttt{LIME} performs poorer than most of the gradient-based methods due to the fact that \texttt{LIME} has no access to the model parameters. The lack of internal information prevents it from getting accurate interpretations.
However, only with API access to the PLMs, \texttt{OpenAPI} outperforms the other methods most of the time, because our method computes the decision features that are exactly used by the PLMs in prediction. The good performance demonstrates the effectiveness of our method. 


\begin{figure}[h]
\centering
\subfloat[FMNIST (LMT)]{
\includegraphics[page=1, width=\figwidthone\linewidth]{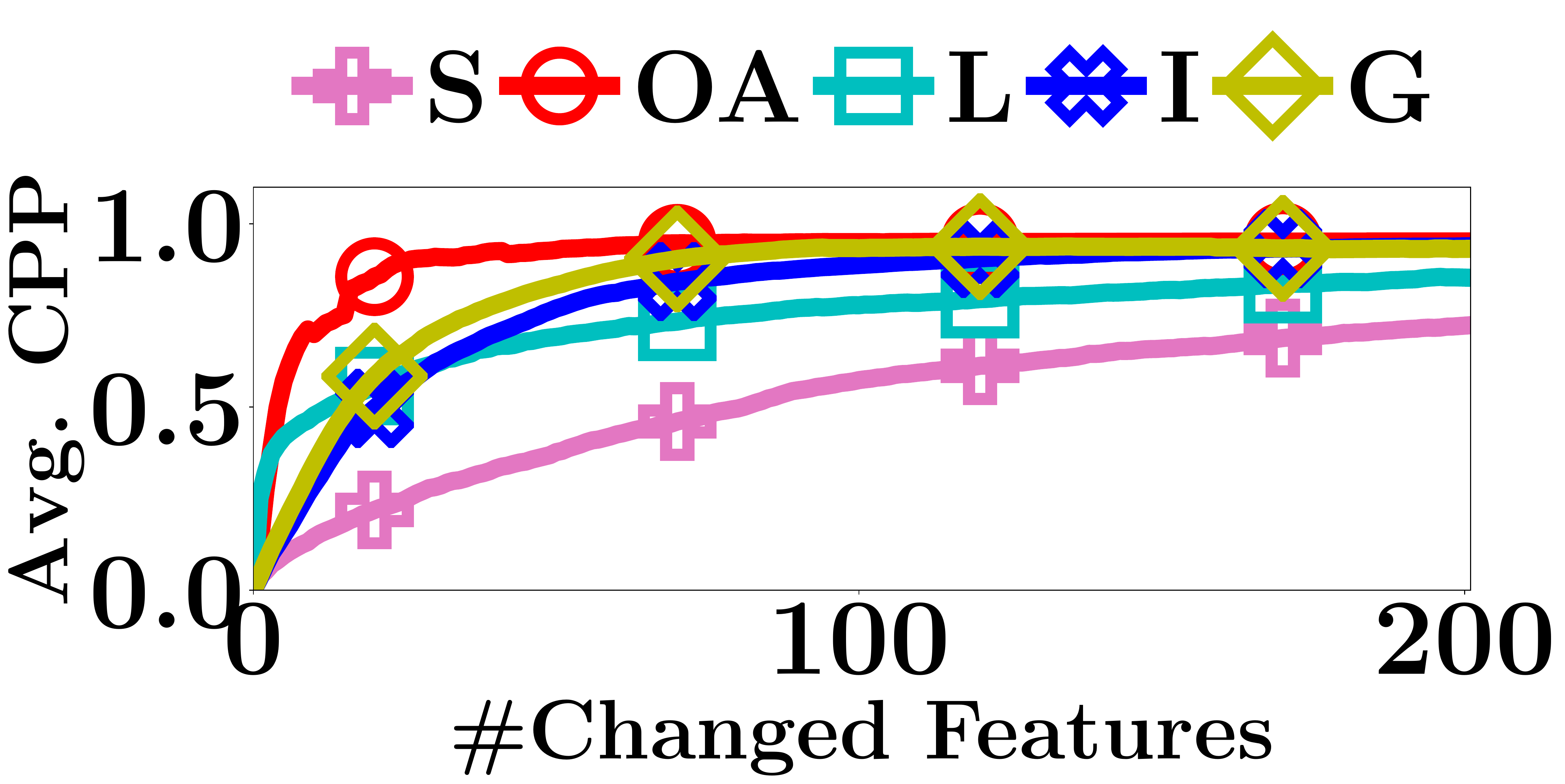}
}
\subfloat[FMNIST (PLNN)]{
\includegraphics[page=1, width=\figwidthone\linewidth]{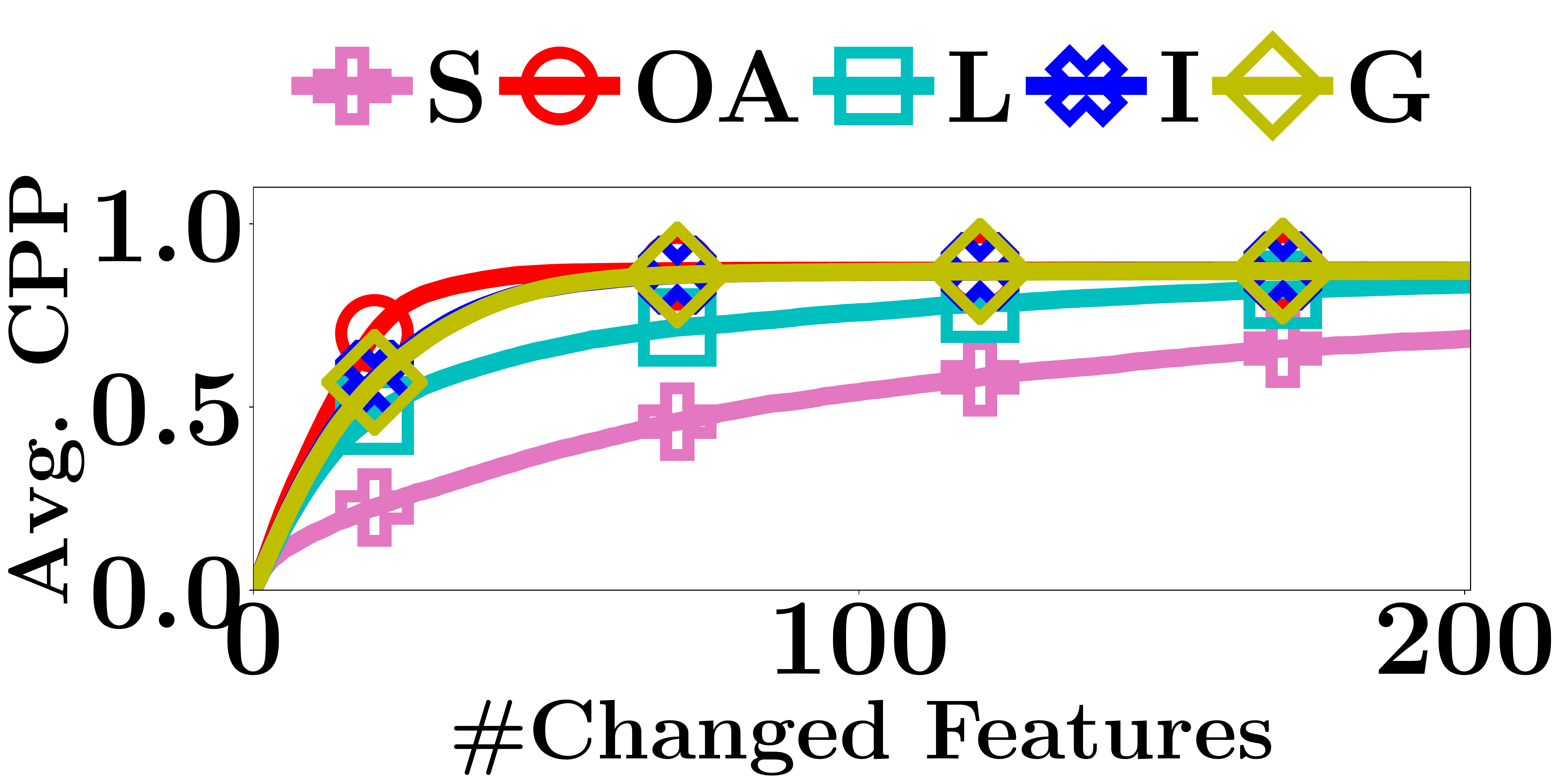}
}
\qquad
\subfloat[MNIST (LMT)]{
\includegraphics[page=1, width=\figwidthone\linewidth]{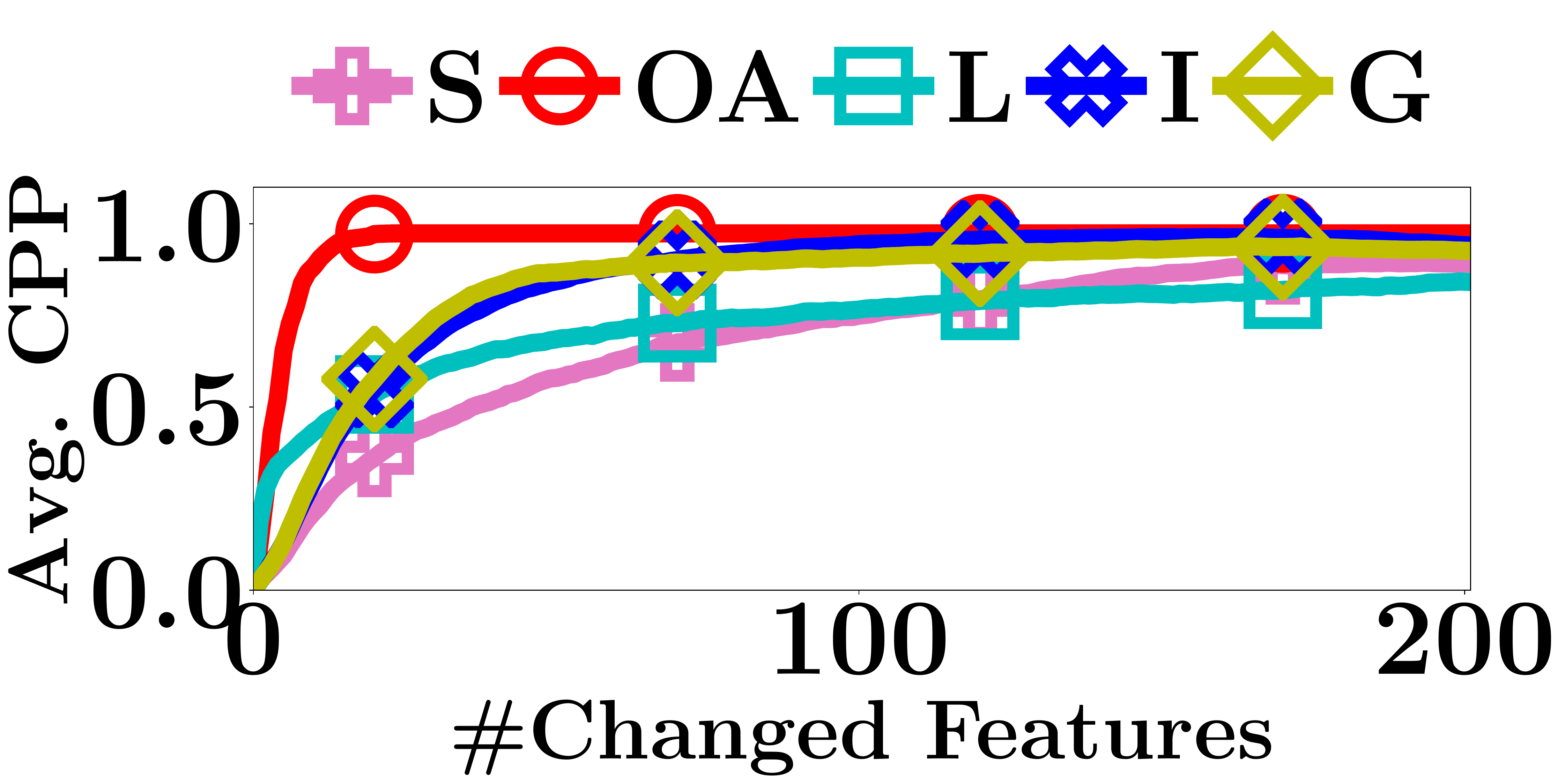}
}
\subfloat[MNIST (PLNN)]{
\includegraphics[page=1, width=\figwidthone\linewidth]{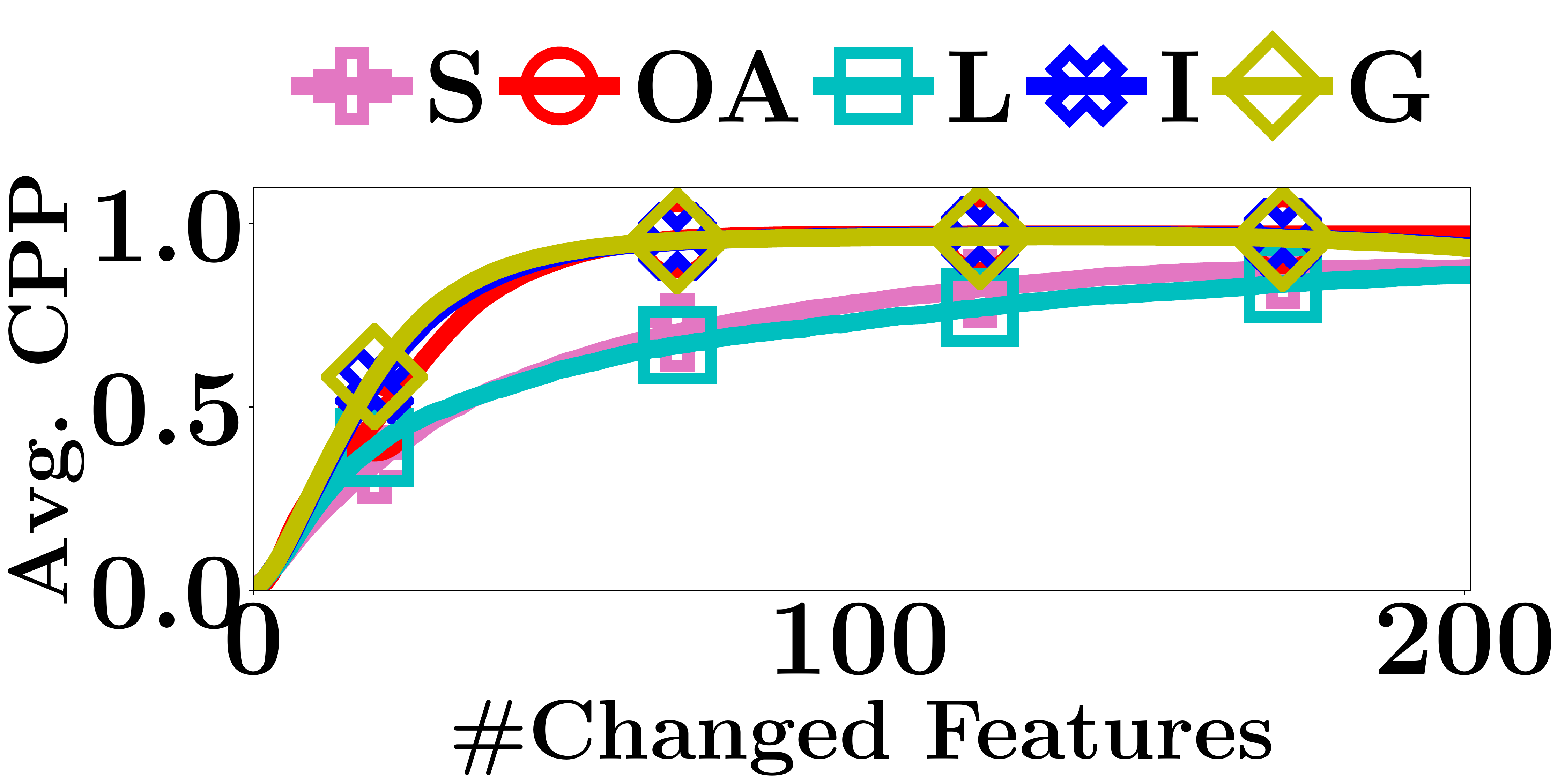}
}
\qquad
\subfloat[FMNIST (LMT)]{
\includegraphics[page=2, width=\figwidthone\linewidth]{Figures/DecisionFeature/LMT_FMNIST_100_EXP6.pdf}
}
\subfloat[FMNIST (PLNN)]{
\includegraphics[page=2, width=\figwidthone\linewidth]{Figures/DecisionFeature/MLP_FMNIST_100_EXP6.pdf}
}
\qquad
\subfloat[MNIST (LMT)]{
\includegraphics[page=2, width=\figwidthone\linewidth]{Figures/DecisionFeature/LMT_MNIST_100_EXP6.pdf}
}
\subfloat[MNIST (PLNN)]{
\includegraphics[page=2, width=\figwidthone\linewidth]{Figures/DecisionFeature/MLP_MNIST_100_EXP6.pdf}
}

\centering
\caption{The effectiveness of different interpretation methods}\nop{the interpretations computed by different methods. (a)-(d) and (e)-(h) show the average CPP and average NLCI of all methods, respectively. ``S", ``OA", ``I", ``G", and ``L" are short for \texttt{Saliency Maps}, \texttt{OpenAPI}, \texttt{Integrated Gradient}, \texttt{Gradient * Input}, and \texttt{LIME} respectively. Each curve represents a method, and is plotted using 200 data points.  We use different markers to make the curves more legible. }

\label{fig:effect}
\end{figure}

\begin{figure}[t]
\centering
\subfloat[FMNIST (LMT)]{
\includegraphics[page=1, width=\figwidthone\linewidth]{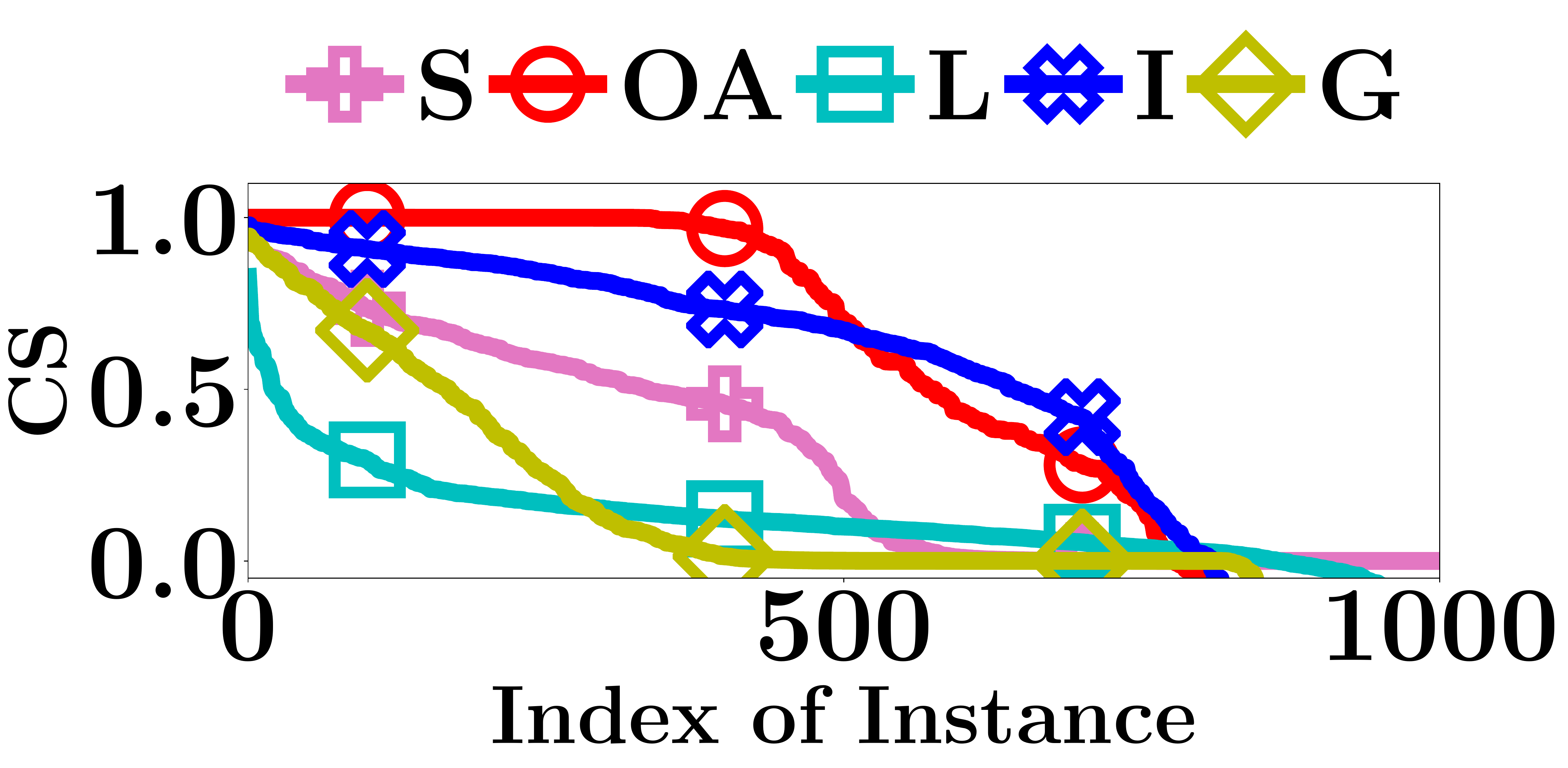}
}
\subfloat[FMNIST (PLNN)]{
\includegraphics[page=1, width=\figwidthone\linewidth]{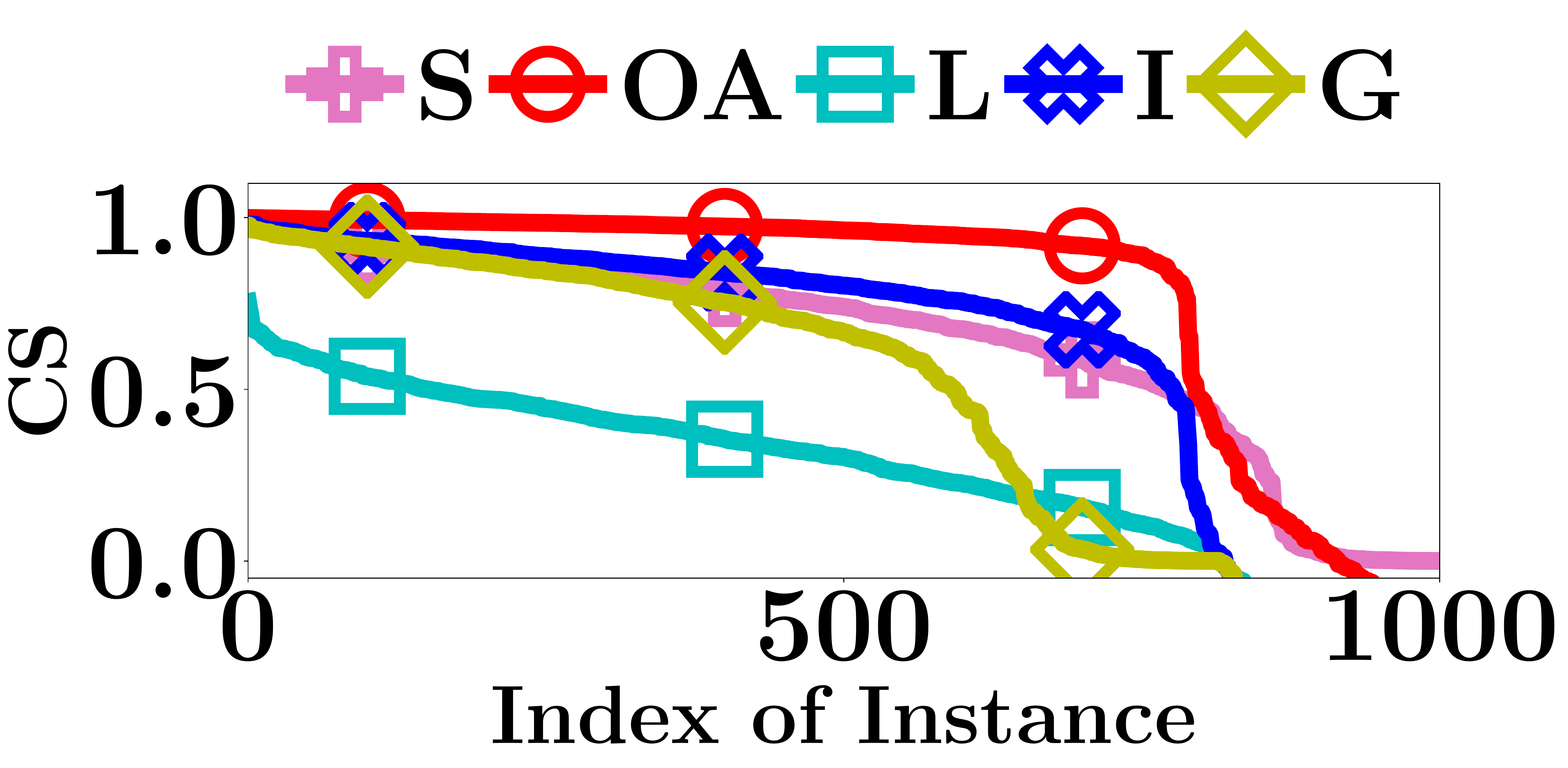}
}
\qquad
\subfloat[MNIST (LMT)]{
\includegraphics[page=1, width=\figwidthone\linewidth]{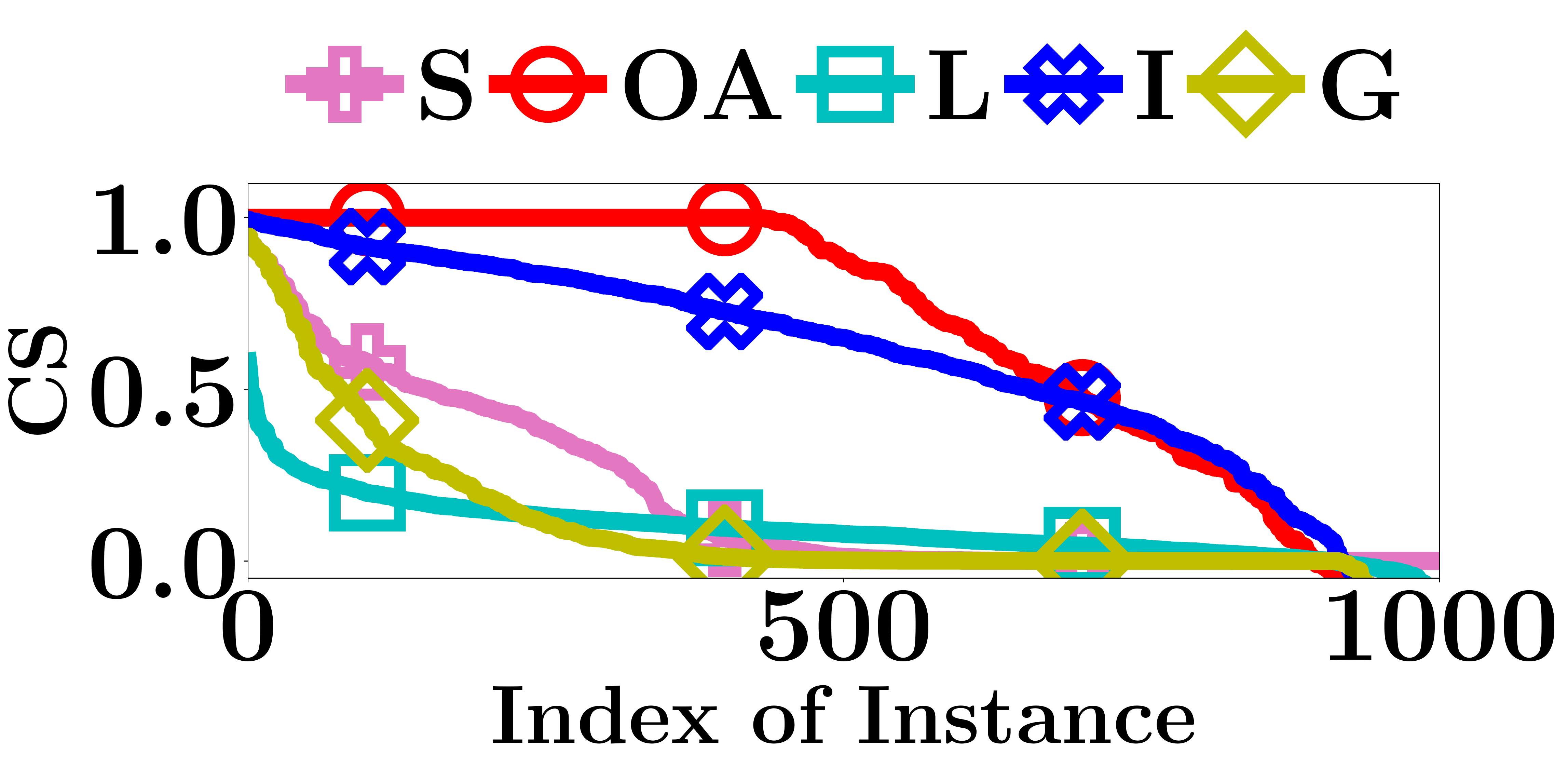}
}
\subfloat[MNIST (PLNN)]{
\includegraphics[page=1, width=\figwidthone\linewidth]{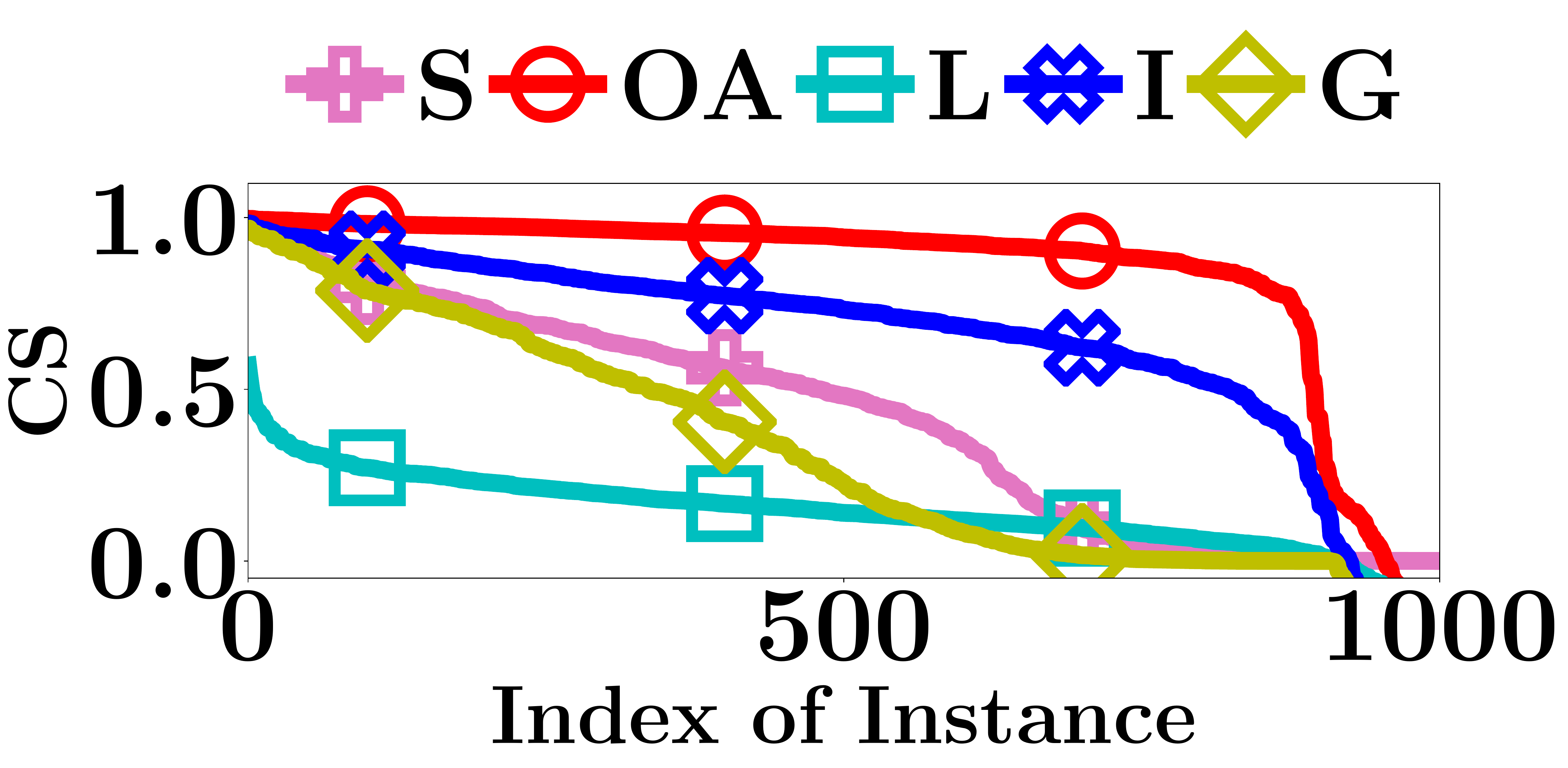}
}
\centering
\caption{The cosine similarity (CS for short) between the interpretations of each instance and its nearest neighbor. The results are separately sorted in the descending order of cosine similarity. ``S", ``OA", ``I", ``G", and ``L" have the same meaning as in Figure~\ref{fig:effect}. Each curve represents a method, and is plotted using 1000 data points.  We use different markers to make the curves more legible. }
\label{fig:cs}
\end{figure}

\subsection{Are the Interpretations Consistent?}
Consistent interpretation methods provide similar interpretations for similar input instances, and produce fewer contradictions between interpretations. Consistency is important.  For example, it is confusing if, for the instances in a locally linear region, the weights of their corresponding features are not the same, because those instances are classified by the same locally linear classifier of the PLM.

Using the same experiment settings as Chu \emph{et~al.}~\cite{chu2018exact}, 
we comprehensively analyze the consistency of the interpretations produced by \texttt{Saliency Maps}, \texttt{Integrated Gradient}, \texttt{Gradient * Input}, and \texttt{OpenAPI} by comparing the decision features of similar input instances.

Denote by $\mathbf{x}^0$ an input instance classified as class $c$, and by $\mathbf{x}^1$ the testing instance that is the nearest neighbour of $\mathbf{x}^0$ in Euclidean distance. 
\nop{The consistency of an interpretation on $\mathbf{x}^0$ is measured by the \textbf{Cosine Similarity} (CS) between the decision features of $\mathbf{x}^0$ and $\mathbf{x}^1$~\cite{chu2018exact}.}
For an interpretation method, we measure the interpretation consistency by \nop{the cosine similarity}the \textbf{Cosine Similarity} (\textbf{CS}) between the computed \nop{decision features}interpretations of $\mathbf{x}^0$ and $\mathbf{x}^1$.
Apparently, a larger CS indicates a better interpretation consistency.

The CS of all compared methods are evaluated on the testing data sets of FMNIST and MNIST.
As shown in Figure~\ref{fig:cs}, the interpretations given by \texttt{Integrated Gradient} are more consistent than the other two gradient based methods. \texttt{Integrated Gradient} smooths the differences between the interpretations for similar instances using the average partial derivatives of a set of instances to compute its interpretations. At the same time, the smooth operation also decreases the accuracy of its interpretations.
The CS of \texttt{OpenAPI} is better than all other methods on all PLMs and datasets. All instances contained in the same locally linear region have exactly the same decision features, and thus the CS of \texttt{OpenAPI} should always equal to 1 on those instances. As an input instance $\mathbf{x}^0$ and its nearest neighbor in the test set may not always belong to the same locally linear region, the CS of \texttt{OpenAPI} is not equal to 1 for all instances in our experiments. The poor performances of the baseline methods can be anticipated, since their interpretations rely on the gradients of the input instances, and they tend to provide distinct interpretations for individual instances.

\nop{The interpretations given by \texttt{Integrated Gradients} is more consistent than the other two gradient based methods. This is mainly due to the reason that it uses the averaged partial derivatives of a set of instances to compute its interpretations. This allows it smooths the differences between the interpretations for similar instances. On the other hand, the smooth operation also decreases the accuracy of its interpretations.}

\nop{As shown in Figure~\ref{fig:cs}, 
the CS of \texttt{LIME} is significantly lower than those of \texttt{ZOO} and \texttt{OpenAPI}, because \texttt{LIME} produces a unique interpretation for every single input instance by applying large perturbations.
Therefore, the interpretation consistency of \texttt{LIME} is significantly lower than \texttt{ZOO} and \texttt{OpenAPI}.}

\nop{
since \texttt{LIME} produces a unique interpretation for every single input instance by applying large perturbations, . 
}
\nop{
Therefore, the interpretation consistency of \texttt{LIME} is significantly lower than \texttt{ZOO} and \texttt{OpenAPI}.
}

\nop{The CS of \texttt{ZOO} is as good as GT when $h=10^{-4}$, and degenerates quickly on LMT when $h=10^{-2}$. 
Again, these results demonstrate the high sensitivity of \texttt{ZOO} with respect to the perturbation distance $h$, which makes it difficult to find an optimal $h$ that universally achieves good interpretation consistency on all PLMs and instances.

The CS of \texttt{OpenAPI} is always as good as GT on all PLMs and data sets.
This is largely attributed to the fact that \texttt{OpenAPI} exactly finds the ground truth decision features with probability $1$.
Since all instances contained in the same locally linear region have exactly the same decision features, the CS of \texttt{OpenAPI} is always equal to 1 on these instances. 
However, since an input instance $\mathbf{x}^0$ and its nearest neighbor $\mathbf{x}^1$ in the test set may not always be contained in the same locally linear region, the CS of \texttt{OpenAPI} and GT is not always equal to 1 for all instances.}

In summary, the interpretation consistency of \texttt{OpenAPI} is significantly better than the other baseline methods.

\nop{
not all nearest neighbors (i.e., $\mathbf{x}^0$ and $\mathbf{x}^1$) are contained in the same locally linear region,
}

\nop{ The results are reported in Figure~\ref{fig:cs}.}

\nop{
For an interpretation method, denote by $D^{*,0}_c,\ D^{*,1}_c \in \mathbb{R}^{d}$ its computed decision features for $\mathbf{x}^0$ and $\mathbf{x}^1$, respectively. The CS for this

For an interpretation method, denote by $D^{*,0}_c,\ D^{*,1}_c \in \mathbb{R}^{d}$ its computed decision features for $\mathbf{x}^0$ and $\mathbf{x}^1$, respectively.
We measure the consistency of the interpretation method for $\mathbf{x}^0$ by the \textbf{Cosine Similarity} (CS) between $D^{*,0}_c$ and $D^{*,1}_c$.
The ground truth CS for $\mathbf{x}^0$ is the cosine similarity between the ground truth decision features of $\mathbf{x}^0$ and $\mathbf{x}^1$.
Apparently, a larger CS indicates a better interpretation consistency.
}

\nop{
Denote by $\mathbf{x}^0$ an input instance classified as class $c$, by $\mathbf{x}^1$ the nearest neighbor of $\mathbf{x}^0$ in Euclidean distance. 
For an interpretation method, denote by $D^{*,0}_c,\ D^{*,1}_c \in \mathbb{R}^{d}$ its computed decision features for $\mathbf{x}^0$ and $\mathbf{x}^1$, respectively.
We measure the consistency of the interpretation method for $\mathbf{x}^0$ by the \textbf{Cosine Similarity} (CS) between $D^{*,0}_c$ and $D^{*,1}_c$.
The ground truth CS for $\mathbf{x}^0$ is the cosine similarity between the ground truth decision features of $\mathbf{x}^0$ and $\mathbf{x}^1$.
Apparently, a larger CS indicates a better interpretation consistency.
}

\nop{
Denote by $\mathbf{x}^0$ an input instance classified as class $c$, by $\mathbf{x}^1$ the nearest neighbor of $\mathbf{x}^0$ in Euclidean distance. 
For an interpretation method, denote by $D^{*,0}_c,\ D^{*,1}_c \in \mathbb{R}^{d}$ its computed decision features for $\mathbf{x}^0$ and $\mathbf{x}^1$, respectively.
We measure the consistency of the interpretation on $\mathbf{x}^0$ by the \textbf{Cosine Similarity} (CS) between $D^{*,0}_c$ and $D^{*,1}_c$.
Apparently, a larger CS indicates a better interpretation consistency.

We also compute the ground truth of interpretation consistency for $\mathbf{x}^0$ by the cosine similarity between the ground truth decision features of $\mathbf{x}^0$ and $\mathbf{x}^1$.

The CS of all compared methods are evaluated on the testing data sets of FMNIST-1 and FMNIST-2. The results are shown in Figure~\ref{fig:cs}.
}

\nop{
We evaluate the CS of \texttt{ZOO}, \texttt{LIME} and \texttt{OpenAPI} on the testing data sets of FMNIST-1 and FMNIST-2.
The CS of all compared methods and the ground truth are reported in Figure~\ref{fig:cs}.
}

\nop{
 $D^{*,0}_c$ and $D^{*,1}_c$ are computed by an interpretation method. 
}

\nop{, and the best CS of an interpretation method is the CS of the ground truth.
}
\nop{
Denote by $D_c^0, D_c^1\in\mathbb{R}^d$ the ground truth decision features of $\mathbf{x}^0$ and $\mathbf{x}^1$, respectively.
}

\nop{
Denote by $D_c^0, D_c^1\in\mathbb{R}^d$ the ground truth decision features of the instances $\mathbf{x}^0$ and $\mathbf{x}^1$, we use the cosine similarity between $D_c^0$ and $D_c^1$ as the ground truth CS performance.
}

\nop{
For each data set, we use every testing instance as the input instance and evaluate the CS of its interpretation.
}

\nop{
Since all inputs in the same region share the same locally linear classifier, the interpretation should be consistent per region. Inspired by \cite{chu2018exact}, we study the interpretation consistency of \texttt{\texttt{OpenAPI}}, \texttt{\texttt{ZOO}}, and \texttt{\texttt{LIME}} by analyzing the similarity between the interpretations of similar instances. 
}

\nop{
For an instance $\mathbf{x}^0$, denote by $\mathbf{x}^1$ the nearest neighbor of $\mathbf{x}^0$ by Euclidean distance, by $D^0_c,\ D^1_c \in \mathbb{R}^{d}$ the importance weight of each feature in classifying $\mathbf{x}^0$ and $\mathbf{x}^1$ as class $c$, respectively. We measure the consistency of interpretation by the cosine similarity between $D^0_c$ and $D^1_c$, where a larger cosine similarity indicates a better interpretation consistency. \textcolor{blue}{wanglj: @Zicun, pls make sure the above revise}
}

\nop{
In the following experiments, the parameters of each method were set the same as section~\ref{sec:sample}. We executed the experiments on the testing test of the two data set as follows: first, the nearest neighbors of each instance were found
}

\nop{zicun: 1. The LMT has larger local regions than PLNN. This can be shown in the Figure 2, as \texttt{ZOO} with $10^{-4}$ has already achieves no errors on the estimation. While for PLNN, the best perturb distance is $10^{-8}$. 2. Performance of \texttt{ZOO} depends on how the data is distributed in the polytope. If the data is distributed very close to the boundary of the polytope, no matter how large the polytope is, a very small perturbation can move the point out of the polytope 3. While training the LMT, I observed many internal nodes with the test conditions like "If i-th pixel <= 0.001 go left else go right". If the perturbation is 0.01 and i-th pixel of an image is 0, than the image will be moved out of the original region because 0 + 0.01 > 0.001, so the perturbed image will be move to the right branch. 4. The LMT is not continuous. The locally linear classifiers have no correlation between each other. While, locally linear classifiers of nearby polytopes are correlated with each other.}

\subsection{How Well Are the Perturbed Instances?}
\label{sec:sample}
The accuracy of all compared methods in computing $D_c$ of an input instance $\mathbf{x}^0$ largely depends on the \textbf{quality} of the set of sampled instances.
Here, the quality of a set of instances is good if they are contained in the same locally linear region as $\mathbf{x}^0$, and thus those instances have the same core parameters as $\mathbf{x}^0$ and significantly improve the accuracy in computing $D_c$. 

To comprehensively evaluate the performance of the compared methods in sampling a set of good instances, we measure the quality of the sampled instances by the following two metrics.

The \textbf{Region Difference (RD)} measures the consistency of the locally linear regions of the sampled instances. 
For any input instance $\mathbf{x}^0$, if all sampled instances are contained in the same locally linear region as $\mathbf{x}^0$, then $\text{RD}=0$; otherwise, $\text{RD}=1$.

\nop{
Denote by $D_c^i=\frac{1}{C-1} \sum_{c'=1}^C D_{c,c'}^{i}$ the average of the core parameters of an instance $\mathbf{x}^i, i\in\{0, \ldots, d+1\}$.
We define \textbf{Weight Difference (WD)} as 
\begin{equation}\nonumber
	\text{WD} = \frac{\sum_{i=1}^{d+1} || D_c^{0} - D_c^{i} ||_{1}}{d+1},
\end{equation}
to measure the average L1 distance between the $D_c^0$ of the input instance $\mathbf{x}^0$ and the $D_c^i$ of the $i$-th sampled instance $\mathbf{x}^i, i\in\{1, \ldots, d+1\}$.
}

The \textbf{Weight Difference (WD)} is defined as 
\begin{equation}\nonumber
	\text{WD} = \frac{\sum_{c'=1}^C\sum_{i=1}^{|S|} || D_{c,c'}^{0} - D_{c,c'}^{i} ||_{1}}{(C-1)|S|},
\end{equation}
which measures the average L1 distance between $D_{c,c'}^0$ of the input instance $\mathbf{x}^0$ and $D_{c,c'}^i$ of the $i$-th instance $\mathbf{x}^i$ in the set of sampled instances $S=\{\mathbf{x}^1, \ldots, \mathbf{x}^{m}\}$.

Apparently, a RD that is equal to $0$ indicates a perfect consistency among the locally linear regions of all sampled instances.
A small WD means a high similarity between the core parameters of $\mathbf{x}^0$ and the sampled instances.
If RD and WD are both small, the quality of the sampled instances is good, and $D_c$ of $\mathbf{x}^0$ can be accurately computed.

\nop{We evaluate RD and WD of \texttt{ZOO} and \texttt{OpenAPI} on the testing data sets FMNIST-1 and FMNIST-2.}
We evaluate RD and WD of \texttt{ZOO}, \texttt{Linear Regression LIME}, \texttt{Ridge Regression LIME}, the naive method,  and \texttt{OpenAPI} on the testing data sets of FMNIST and MNIST.
For each data set, we use every testing instance as the input instance $\mathbf{x}^0$ once, and evaluate RD and WD of the corresponding set of sampled instances. Figures~\ref{fig:wsrd} and~\ref{fig:wswd}, respectively, show the average RD and WD of all testing instances.
\nop{The performance of \texttt{LIME} is not evaluated, because the published code of \texttt{LIME} does not return any sampled instances.}

The performance of the baseline methods in RD and WD relies heavily on the heuristic perturbation distance $h$.
Since there is no effective method to set $h$, we evaluate the performance of the baseline methods with respect to a wide range of $h$. Specifically,  we test $h=10^{-2}$, $10^{-4}$, and $10^{-8}$.

As shown in Figure~\ref{fig:wsrd}, the average RD of the baseline methods increases when $h$ increases.  The results verify our claim in Section~\ref{sec:openapi} that a smaller hypercube is more likely to be contained in the locally linear region of an input instance.

\begin{figure}[t]
\centering

\subfloat[FMNIST (LMT)]{
\includegraphics[page=2, width=\figwidthone\linewidth]{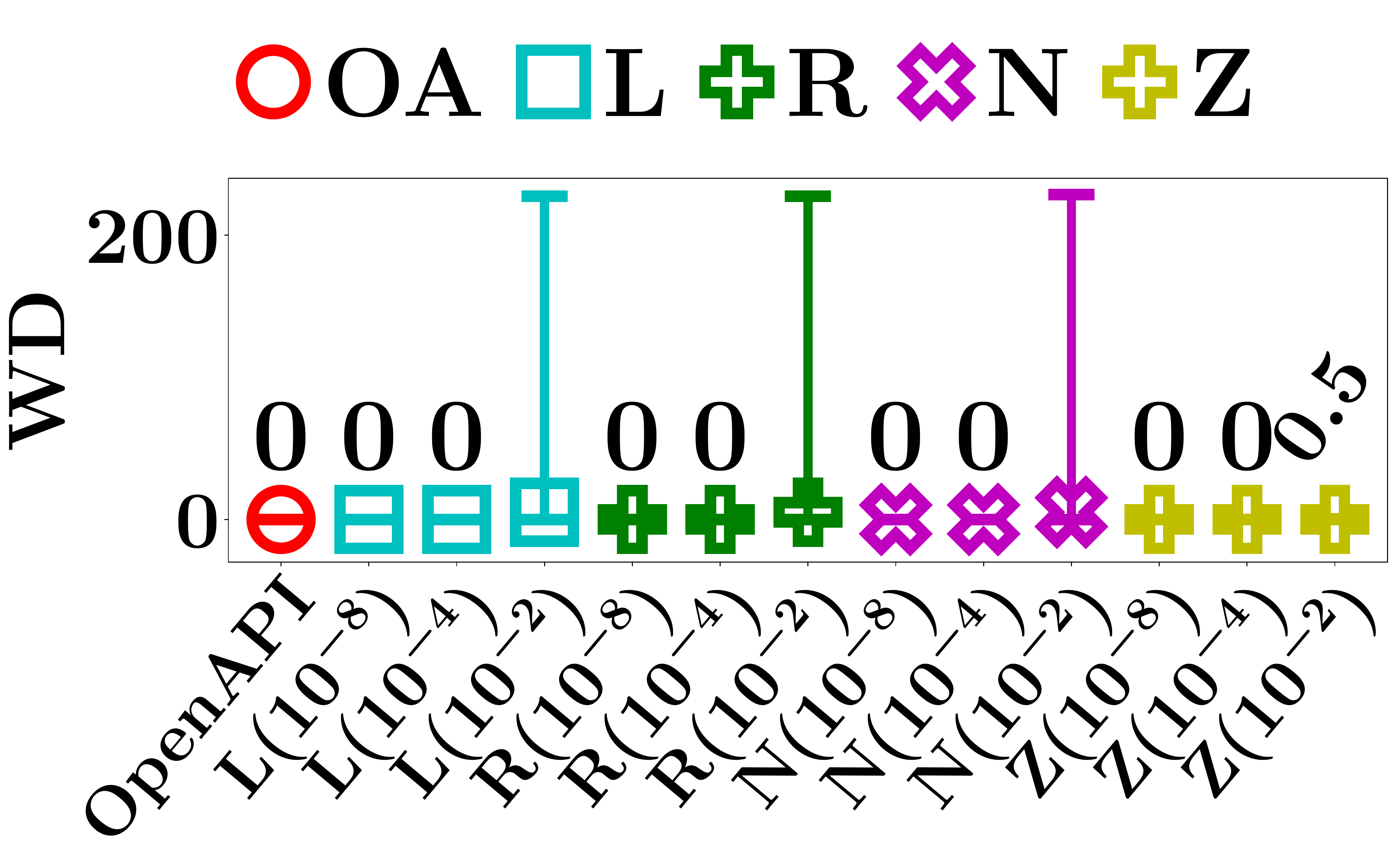}
}
\subfloat[FMNIST (PLNN)]{
\includegraphics[page=2, width=\figwidthone\linewidth]{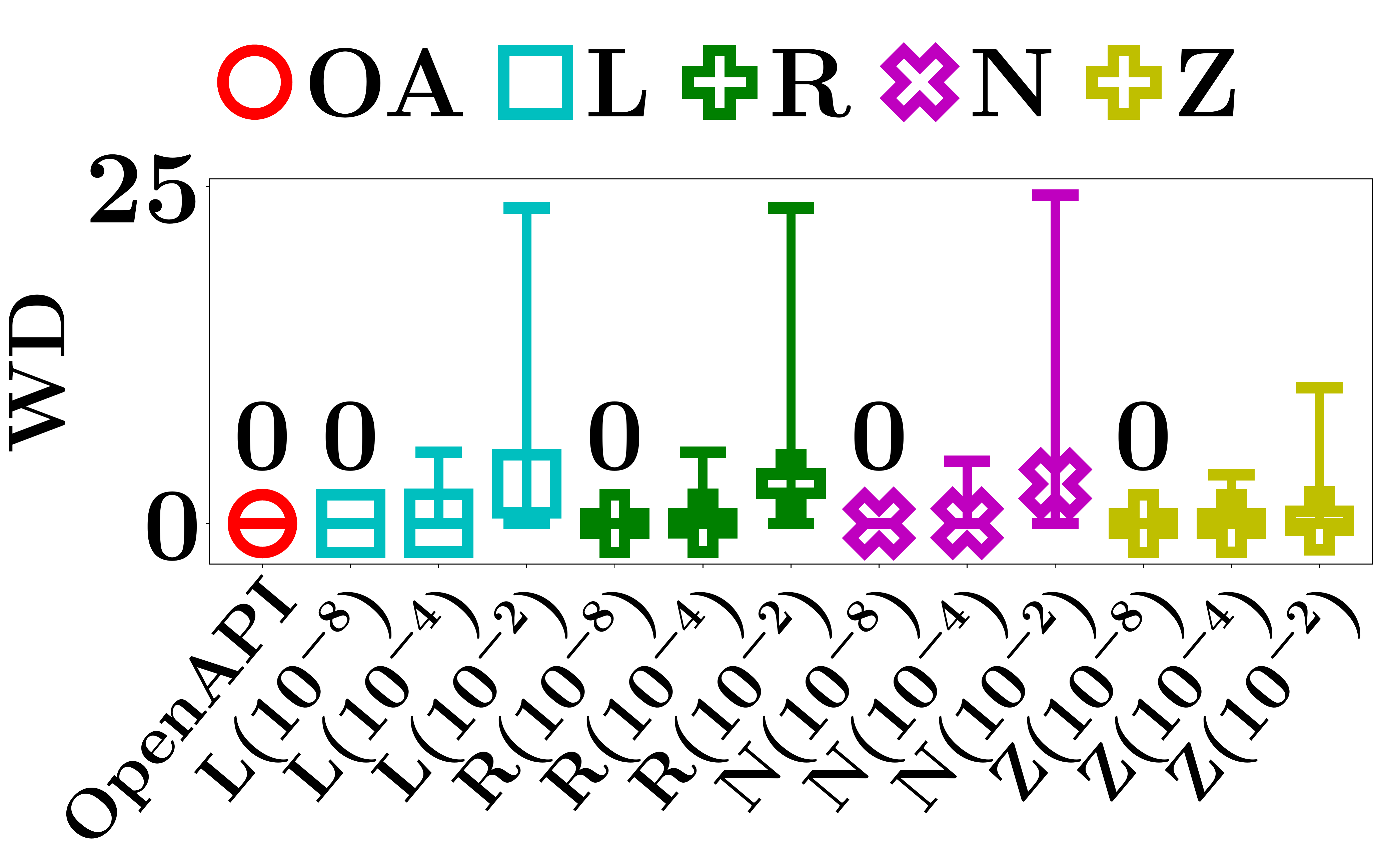}
}

\qquad
\subfloat[MNIST (LMT)]{
\includegraphics[page=2, width=\figwidthone\linewidth]{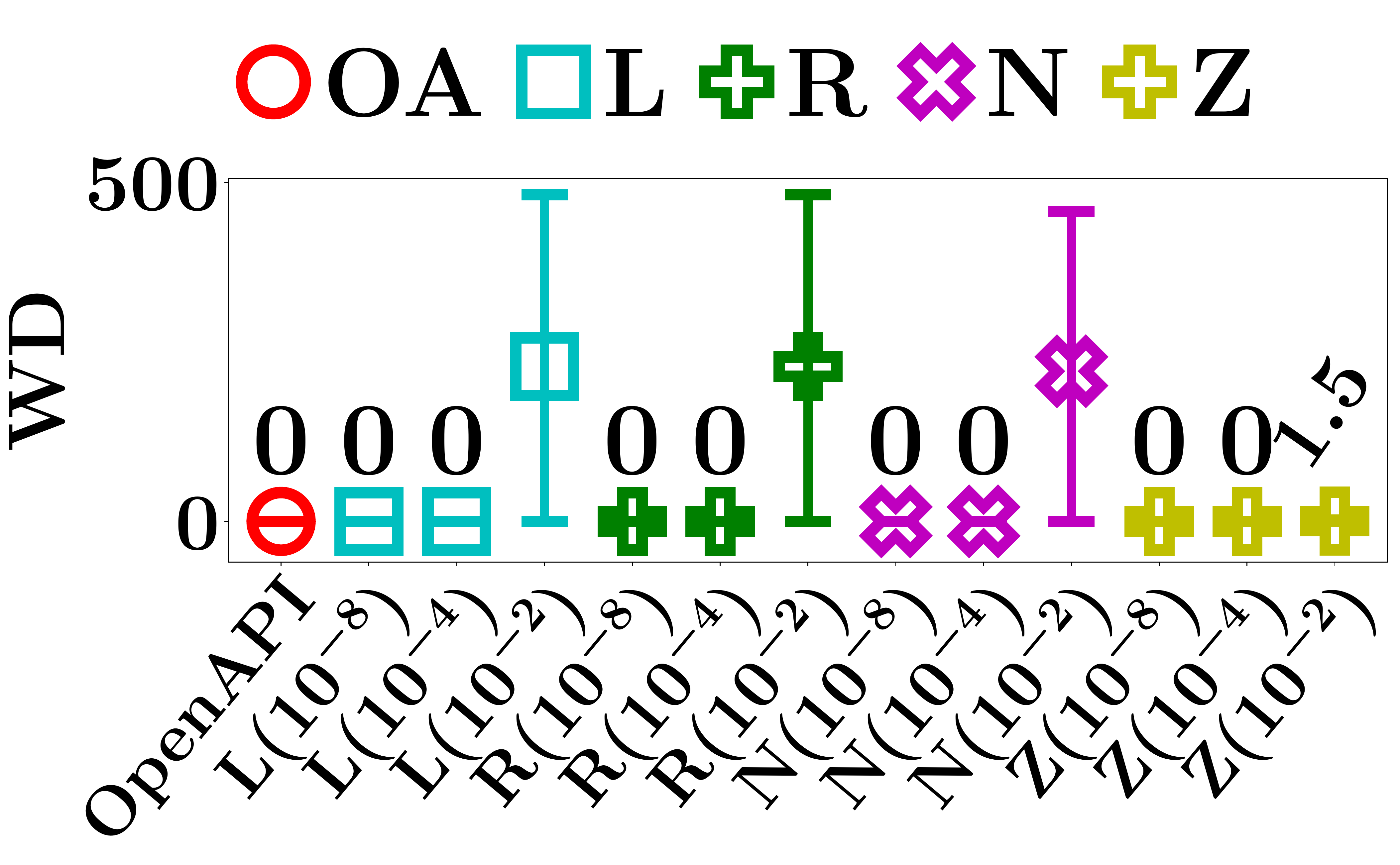}
}
\subfloat[MNIST (PLNN)]{
\includegraphics[page=2, width=\figwidthone\linewidth]{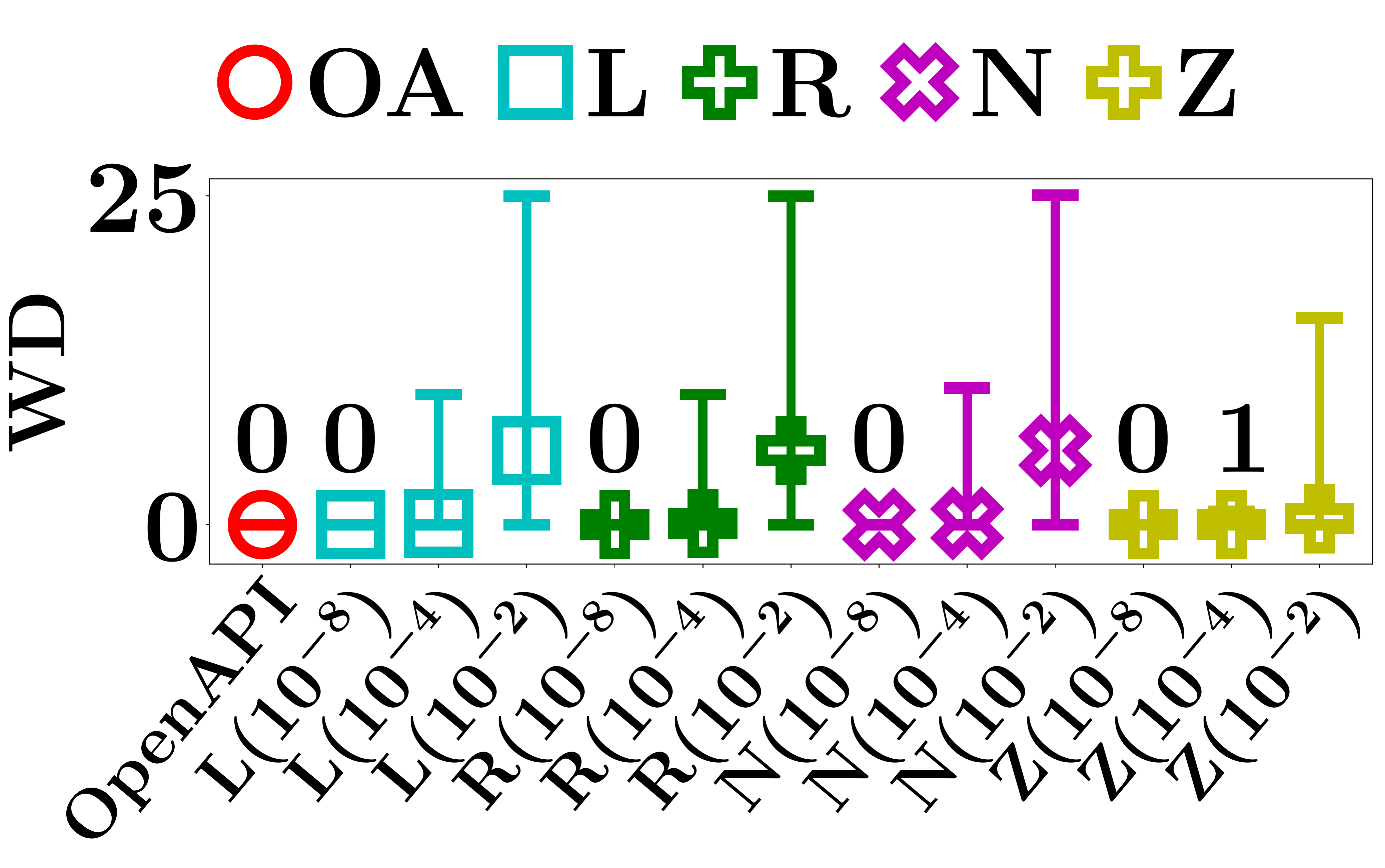}
}


\centering

\caption{The average RD of all methods. $N(h)$, $Z(h)$, $L(h)$, and $R(h)$ are the performance measures of the naive method, \texttt{ZOO}, \texttt{Linear Regression LIME}, and \texttt{Ridge Regression LIME} with respect to perturbation distance $h$, respectively. \nop{$Z(h)$ is the performance of \texttt{ZOO} with respect to perturbation distance $h$.} We give the average RD values on top of some ticks for the ease of reading.}
\label{fig:wsrd}
\end{figure}

Since the RD of\nop{\texttt{ZOO}} the baseline methods drops to 0 when $h$ is small, it is appealing to ask whether we can fix $h$ to a small value such that\nop{\texttt{ZOO}} the baseline methods can always find a good sample of instances.
Unfortunately, this is impossible.  The volume of locally linear regions varies significantly for different PLMs.
For example, 
as shown in Figure~\ref{fig:wsrd}, when the perturbation distance $h=10^{-4}$, the RD of \texttt{ZOO} is $0$ for LMT, but it is $0.005$ and $0.01$ for PLNN on FMNIST and MNIST, respectively.
Thus, $h=10^{-4}$ is good for LMT, but not good enough for PLNN.
One may argue that conservatively $h$ can take an extremely small value in the hope that it works for both LMT and PLNN.
However, since the number of locally linear regions of a PLNN is exponential with respect to the number of hidden units~\cite{montufar2014number, chu2018exact, pascanu2013number}, the volume of some locally linear regions of a large PLNN can be arbitrarily close to zero. For any fixed value of $h$, one can always construct a counter example that $h$ is still too big for PLNN.
Even for the same PLM, the good perturbation distance may still vary significantly for different input instances, and can be arbitrarily small.

\nop{
Therefore, it is difficult for a heuristically fixed value of $h$ to work well for all input instances.
}

\nop{
As demonstrated by the WD performance in Figure~\ref{fig:wswd}, for any $h\geq 10^{-2}$, there is a big difference between the maximum and minimum values of WD.

This is because the distance between an input instance and the boundary of a locally linear region varies significantly for different input instances.

an input instance $\mathbf{x}^0$ can be infinitely close to the boundary of a locally linear region, 
}

\begin{figure}[t]
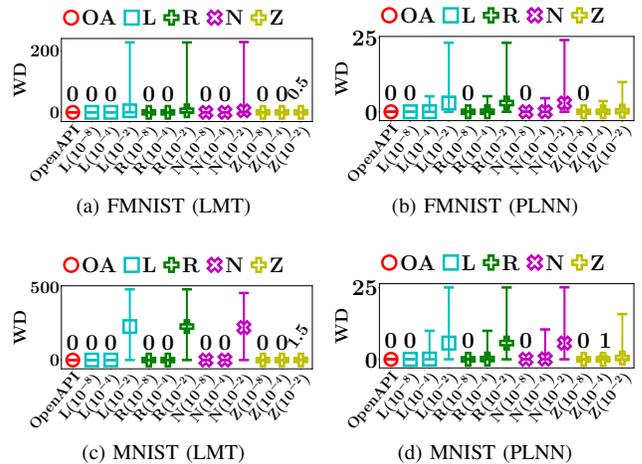

\centering

\subfloat[FMNIST (LMT)]{
\includegraphics[page=1, width=\figwidthone\linewidth]{Figures/LMT_FMNIST_100_EXP1.pdf}
}
\subfloat[FMNIST (PLNN)]{
\includegraphics[page=1, width=\figwidthone\linewidth]{Figures/MLP_FMNIST_100_EXP1.pdf}
}

\qquad
\subfloat[MNIST (LMT)]{
\includegraphics[page=1, width=\figwidthone\linewidth]{Figures/LMT_MNIST_100_EXP1.pdf}
}
\subfloat[MNIST (PLNN)]{
\includegraphics[page=1, width=\figwidthone\linewidth]{Figures/MLP_MNIST_100_EXP1.pdf}
}






\centering

\caption{WD of all methods. 
The upper and lower ends of an error bar show the maximum and minimum WD, respectively, for all testing instances.
A marker represents the mean of WD in the corresponding bar. 
We give the maximum of WD values on top of some ticks for the ease of reading. 
$N(h)$, $Z(h)$, $L(h)$, and $R(h)$ have the same meanings as in Figure~\ref{fig:wsrd}.
}
\label{fig:wswd}
\end{figure}

Recall that we can only access the API of a PLM, we have no knowledge about the size of the locally linear regions of the PLM. This makes it even harder to initialize an optimal value of perturbation distance $h$ that works universally on all PLMs and input instances. 
A much better method is to sample a set of good instances in an adaptive manner, just as what \texttt{OpenAPI} does.

As shown in Figures~\ref{fig:wsrd} and~\ref{fig:wswd}, the average RD and WD of \texttt{OpenAPI} are\nop{very small} $0$ on all data sets. This demonstrates the superior capability of \texttt{OpenAPI} in adaptively sampling a set of good instances.
\nop{The RD of \texttt{OpenAPI} is not exactly 0 for PLNN, because the core parameters of the instances in the neighboring locally linear regions of a PLNN can be very similar to each other. Therefore, if a sampled instance $\mathbf{x}^i$ is not contained in the same locally linear region as the input instance $\mathbf{x}^0$, \texttt{OpenAPI} can still compute $D_c$ with a high accuracy if the core parameters of $\mathbf{x}^i$ and $\mathbf{x}^0$ are very similar to each other.
As a result, we can see from Figures~\ref{fig:wswd}(b) and~\ref{fig:wswd}(d) that the WD of \texttt{OpenAPI} is very small on PLNN.}

\nop{
the optimal perturbation distance to sample a set of good instances can be as small as 0.
}

\nop{
Since an input instance $\mathbf{x}^0$ can be infinitely close to the boundary of a locally linear region, the optimal perturbation distance to sample a set of good instances can be as small as 0.

In summary, there is no golden value of the perturbation distance $h$ that works universally on all PLMs, and a better way to sample a set of good instances is to sample them in an adaptive manner, just like \texttt{OpenAPI}.
}

\nop{
As a result, there is no golden value of the perturbation distance $h$ that works on all PLMs.
}

\nop{
, and a better way to sample a set of good instances is to sample them in an adaptive manner, just like \texttt{OpenAPI}.
}

\nop{
we can see from Figure~\ref{fig:wsrd}(a) that the RD of $Z(10^{-4})$ is 0 for LMT, however, as shown in Figure
}

\nop{
because a PLNN partitions the input space into an exponential large number of locally linear regions, and the core parameters of instances in neighboring locally linear regions can be very similar.
}

\nop{
Since the published code of \texttt{LIME} does not support returning the sampled instances, it is infeasible to evaluate the performance of \texttt{LIME}.
}

\nop{
selected samples and the target instance. It is defined as the average of L1 distances between the true features of $\mathbf{x}^j \in S$ and $\mathbf{x}^0$ on the sample set $S$.
    \begin{equation*}
        WD(S) = \frac{\sum_{\mathbf{x}^j \in S} || D_c^{0} - D_c^{j} ||_{1}}{|S|}
    \end{equation*}
     $WD$ is non-negative and is $0$ when all samples have the same true decision features as the target instance.
}

\nop{
we first demonstrate that the instances sampled by \texttt{OpenAPI} are contained in the same locally linear region by computing the region difference 

In this subsection, we demonstrate the above claims by computing the difference between the locally linear regions of 

evaluating the following metrics.
}

\nop{
To validate the above claims, we design the following experiment.
}

\nop{
, such that all sampled instances are proper instances contained in the same locally linear region.
}

\nop{
in Theorem~\ref{thm:impossible} that when the , the set of sampled instances will have the same set of core parameters as the input instance.

As proved in Theorem~\ref{thm:impossible},

 sampling a set of  samples is a key step to obtain a correct interpretation. 
 
 To obtain the exact interpretation for a target instance $\mathbf{x}^0$, the sample set $S$ used by an interpretation method should have the property that $\forall x_j \in S,\  \forall c \in C,\ D_c^0 = D_c^j$. 
}

\nop{
The perturb distances of \texttt{\texttt{ZOO}} are set as 0.9, 0.7, 0.5, 0.3, $10^{-1}$, $10^{-2}$, $10^{-4}$, and $10^{-16}$.
}

\nop{
In this subsection, we experimentally validate Theorem~\ref{thm:impossible} by comparing the instances used by \texttt{\texttt{ZOO}}.  The perturb distances of \texttt{\texttt{ZOO}} are set as 0.9, 0.7, 0.5, 0.3, $10^{-1}$, $10^{-2}$, $10^{-4}$, and $10^{-16}$.  The experiments are applied on the testing data sets as shown in Table~\ref{tab:data_desc}.  For each target instance, we obtain a set of samples $S$ to compute the local classifier. Then, we retrieve the true input region as well as the true decision features of each sample in $S$ as well as $\mathbf{x}^0$. Finally, we evaluate the correctness of the sample selection by two metrics: region difference($RD$) and weight difference ($WD$).
\begin{itemize}
    \item \textbf{$RD$} is used to measure the consistency of the locally linear regions. $RD$ is 0 if all selected samples are from the same region of the target instance; otherwise, $RD$ is 1.
    \item \textbf{$WD$} is used to measure the difference of decision features between selected samples and the target instance. It is defined as the average of L1 distances between the true features of $\mathbf{x}^j \in S$ and $\mathbf{x}^0$ on the sample set $S$.
    \begin{equation*}
        WD(S) = \frac{\sum_{\mathbf{x}^j \in S} || D_c^{0} - D_c^{j} ||_{1}}{|S|}
    \end{equation*}
     $WD$ is non-negative and is $0$ when all samples have the same true decision features as the target instance.
\end{itemize}
}

\nop{
The results are shown in the Figure~\ref{fig:wsrd} and~\ref{fig:wswd}.  Figure~\ref{fig:wsrd} illustrates the average $RD$ on each testing data set.  Meanwhile, Figure~\ref{fig:wswd} shows the average $WD$ on each testing data set, as well as the minimum and maximum values.
}

\nop{
The results of \texttt{OpenAPI} demonstrate Theorem~\ref{thm:impossible} very well.  \mc{wanglj $\rightarrow$ Zicun: needs exact data to explain the statement, are all RDs zero? and all WDs zero?}.
}

\nop{
Moreover, we observe that \texttt{\texttt{ZOO}} performs the same as \texttt{OpenAPI} when the perturb distance $h$ is set below an upper bound value.  However, \textcolor{blue}{wanglj: double check the next sentence}it is not easy to obtain a well predefined $h$ for a new model, because 1) given the same data set, for different PLMs, the upper bounds of perturb distances are different, and 2) given the same PLM, for different data sets, the upper bounds of perturb distances are also different.  Take FMNIST-2 as an example, a perturb distance as $10^{-4}$ can obtain a set of samples within the same local region for the LMT,  but it does not work for the PLNN. Meanwhile, considering the PLNNs trained by FMNIST-1 and FMNIST-2, \mc{wanglj: need data to support}. More specifically, for different target instances, the perturb distances to obtain an exact interpretation are also different.  For example, as shown in Figure~\ref{fig:wsrd} \mc{wanglj: better to refer the subfigure}, when we set the perturb distance as $10^{-2}$, \textcolor{blue}{**\%} data obtains their exact local regions.  This phenomenon is consistency with the results reported by previous interpretation studies \textcolor{red}{wanglj $\rightarrow$ Lingyang: confirm and references?}. To sum up, \texttt{OpenAPI} proposed in this study obtains the exact local classifiers adaptively, which avoids the parameter selection issue of \texttt{\texttt{ZOO}}.  
}

\nop{
Meanwhile, given the same PLM, for different data set,   decreasing the perturb distance cannot effectively increase the quality of \texttt{ZOO}'s sample sets unless the distance is smaller than a specific value. That is to say, to obtain accurate interpretations, \texttt{ZOO} requires users to carefully select the perturb distance. However, the proper distance is very hard to find due to two reasons. First, the value varies from difference models and data sets.  For example, the perturb distance $0.0001$ is good enough for the LMT on FMNIST-1, but too larger for the PLNN on FMNIST-2. Second, we can compare the performance of $h$ because we extract locally linear regression models on leafs. But without accessing such internal information of the API, there is no way to check if the perturb distance is good.
}

\nop{wanglj: refer last paragraph of Page 5; emphasize on the adaptive property of \texttt{OpenAPI}; 1) for different models (combining PLMs and data sets), different h; 2) for different instance, different h by comparing RDs }

\nop{
In contrast, \texttt{OpenAPI} does not suffer from the problem of the parameter selection.  \texttt{OpenAPI} is able to find a proper region automatically. Even though, as shown in Figure \textcolor{red}{\ref{} I think we need to annotate the value on the figure}, there are a few special case that \texttt{OpenAPI} does not select samples from the same input region. However, as indicated by the very small average of the $WSWD$, we understand the selected samples have very similar locally linear classifiers.
}
 
\nop{
To sum up, \texttt{OpenAPI} uses the samples whose LLCs have the same weight difference to compute the interpretation for the target instance. Also, comparing with \texttt{ZOO}, users do not need to look for an optimal parameter to compute the interpretation. 
}


%

\begin{figure}[t]
\centering

\subfloat[FMNIST (LMT)]{
\includegraphics[page=1, width=\figwidthone\linewidth]{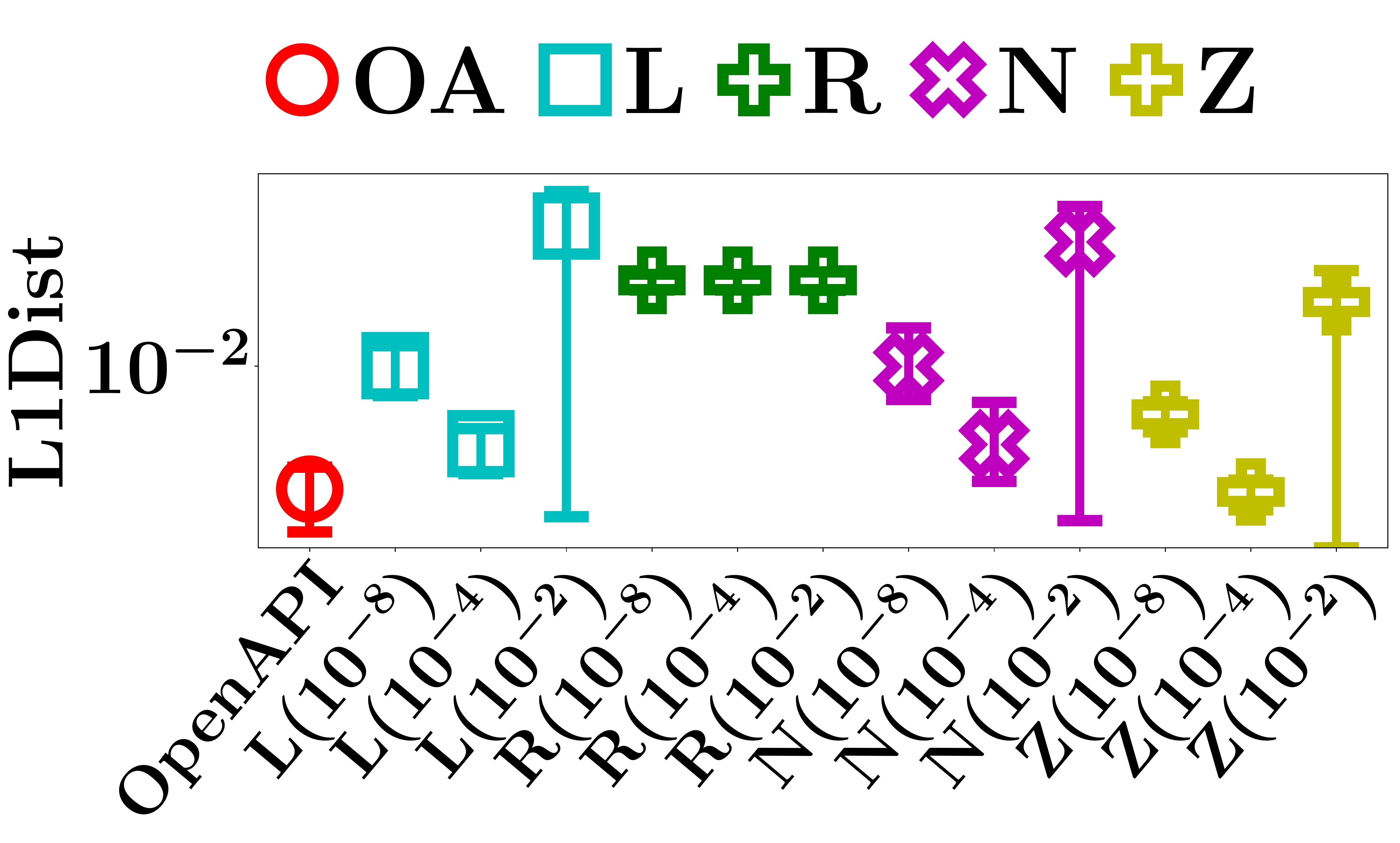}
}
\subfloat[FMNIST (PLNN)]{
\includegraphics[page=1, width=\figwidthone\linewidth]{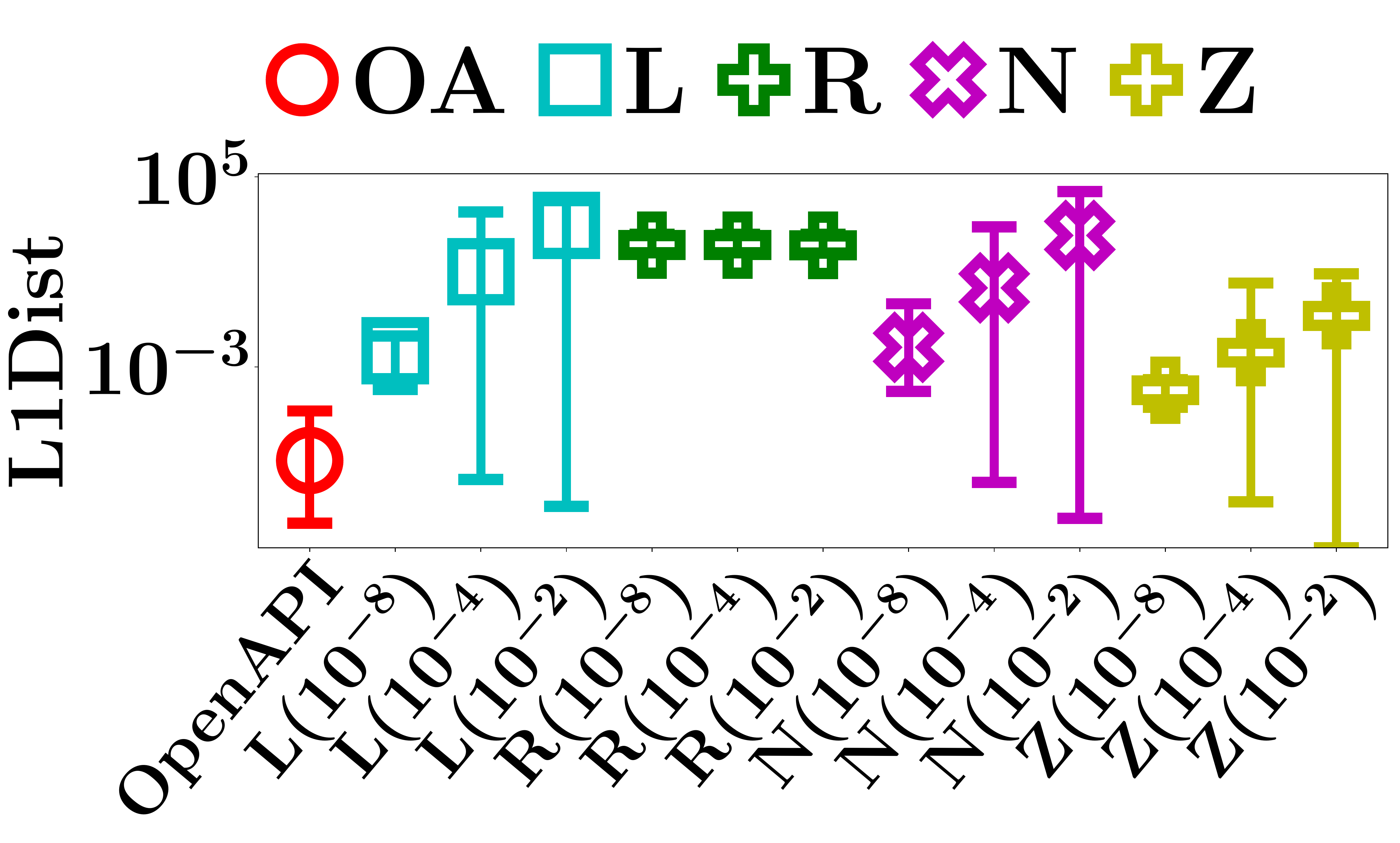}
}

\qquad
\subfloat[MNIST (LMT)]{
\includegraphics[page=1, width=\figwidthone\linewidth]{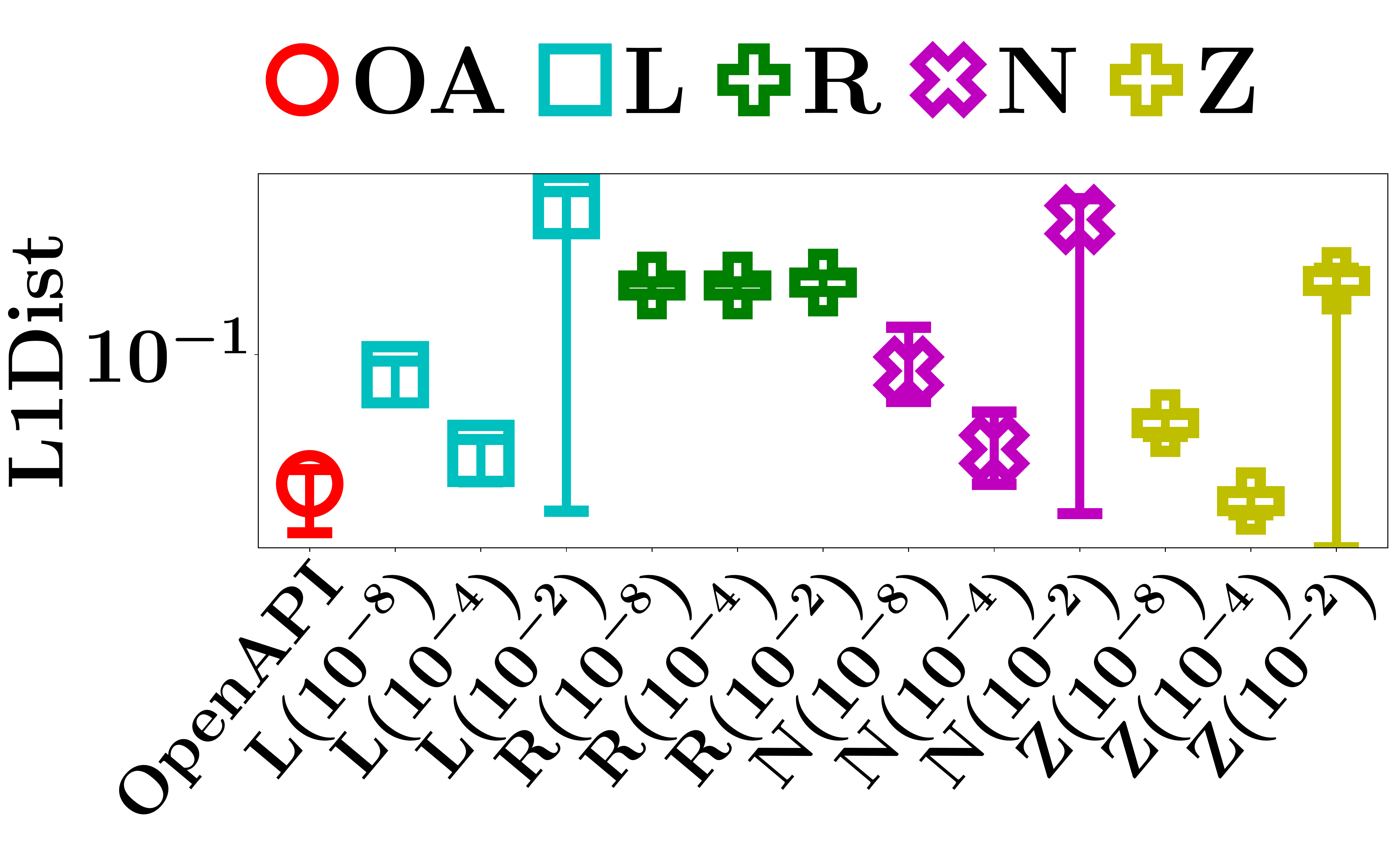}
}
\subfloat[MNIST (PLNN)]{
\includegraphics[page=1, width=\figwidthone\linewidth]{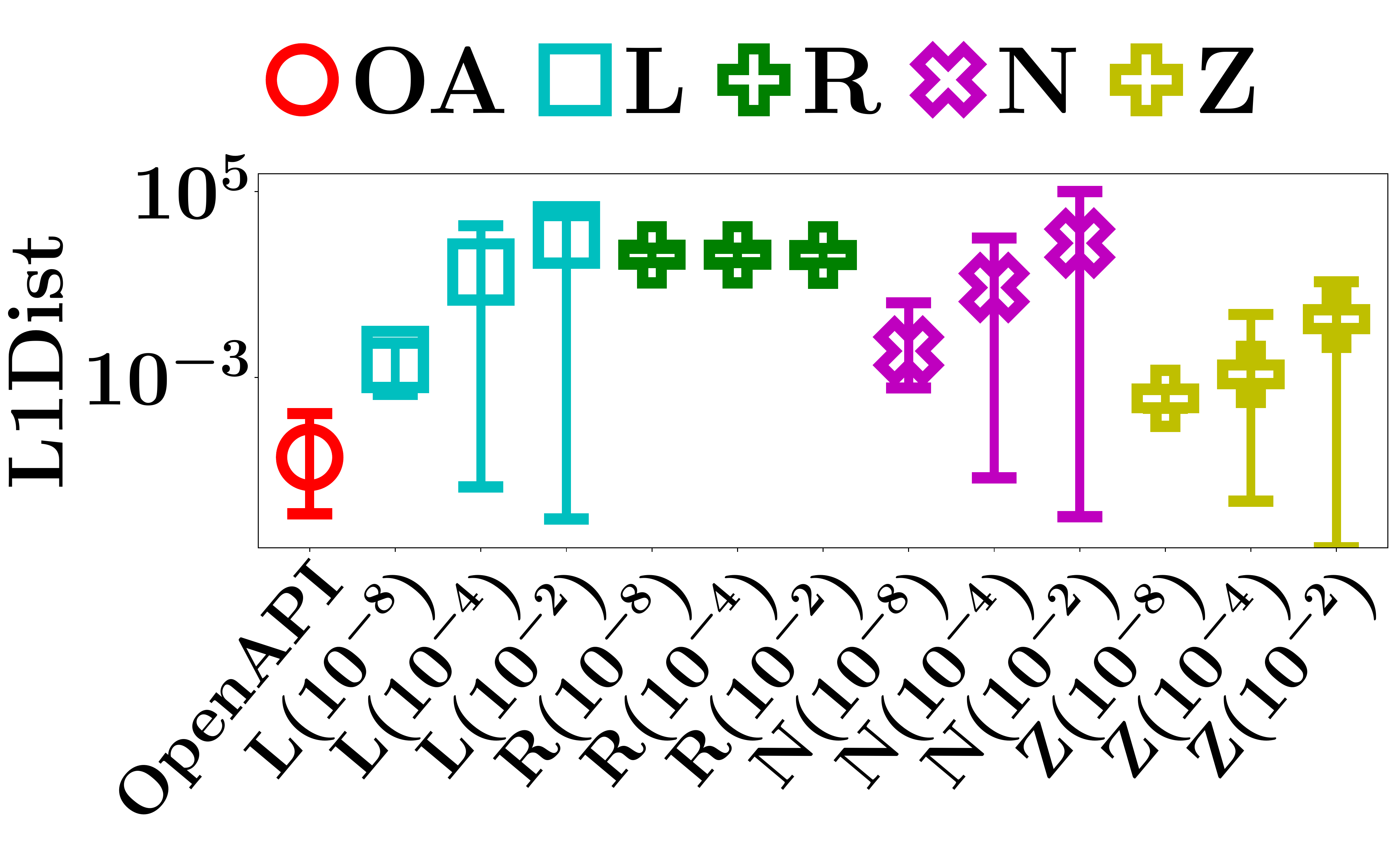}
}






\centering

\caption{
 L1Dist of all methods.
The upper and lower ends of an error bar show the maximum and minimum L1Dist, respectively, for all testing instances.
A marker represents the mean of L1Dist. 
The L1Dist of the methods are plotted in logarithmic scale. $N(h)$, $Z(h)$, $L(h)$, and $R(h)$ have the same meaning as in Figure~\ref{fig:wsrd}.}
\label{fig:l1dist}
\end{figure}

\subsection{Are the Interpretations Exact?}

In this subsection, we systematically study the exactness of interpretations by comparing the ground truth of the decision features of a PLM with the decision features identified by \texttt{ZOO}, \texttt{Linear Regression LIME}, \texttt{Ridge Regression LIME}, the naive method, and \texttt{OpenAPI}.


Denote by ${D}_c$ the ground truth of decision features of a PLM in classifying an input instance as class $c$, and by $D^*_c$ the decision features computed by an interpretation method. 
We measure the exactness of an interpretation by  \textbf{L1Dist}, the L1 distance between $D_c$ and $D_c^*$.
Obviously, a smaller L1Dist indicates a higher exactness of an interpretation.

We evaluate L1Dist of the four baseline methods and \texttt{OpenAPI} on the testing data sets. \nop{FMNIST-1 and FMNIST-2}
For each data set, we use every testing instance as the input instance, and evaluate L1Dist of the interpretations.
The average, minimum and maximum L1Dist of all testing instances are reported in Figure~\ref{fig:l1dist}.

The large L1Dist of \texttt{Ridge Regression LIME} on all datasets indicates that $D_c^*$ computed by the method is significantly different from $D_c$ of the PLMs. By carefully investigating the learned classifiers of \texttt{Ridge Regression LIME}, we find that when the perturbed distances are very small, the linear function used to approximate the predictions always converges to a constant function that always outputs the expected value of the predictions. The poor exactness of \texttt{Ridge Regression LIME} is mainly caused by the mis-selected approximate model. As a comparison, \texttt{Linear Regression LIME}, which has no constraints on its coefficient matrix, performs much better than its counterpart with ridge regression.

\nop{L1Dist of \texttt{LIME} is large on all data sets, which means $D_c^*$ computed by \texttt{LIME} is significantly different from $D_c$ of the PLMs. The poor exactness of \texttt{LIME} is anticipated, because \texttt{LIME} heuristically applies large perturbations to sample a set of low quality instances, which may be mostly not contained in the same locally linear region of the input instance. As comparison, the }

\nop{The minimum and maximum L1Dist of \texttt{ZOO} and \mc{the naive method} with $h=10^{-2}$ are different significantly. As shown in Figure~\ref{fig:wsrd}, for some testing instances, the perturbation distance $h=10^{-2}$ is already small enough that all perturbed instances are from the same locally linear regions. Therefore, for those instances, their decision features can be accurately computed. However, for the inputs whose perturbed instances are not from the same local region, the computed decision features could be very different from the true ones. The experiment results are consistent with Theorem~\ref{thm:impossible} in Section~\ref{method:naive}.}

The L1Dist of the other baseline methods increases significantly when the perturbation distance $h$ becomes larger than a critical value. Since a smaller $h$ leads to a better quality of the sampled instances, it usually increases the accuracy of most baseline methods in computing $D_c^*$.
However, as discussed in Section~\ref{sec:sample}, the critical value of $h$ varies significantly for different models and instances, thus it is impossible to find a golden value of $h$ that always achieves the best L1Dist in computing the decision features of all models and instances.

We can also see that when $h$ becomes extremely small, L1Dist increases. The reason is that all methods suffer from the classical problem of softmax saturation. When an input instance $\mathbf{x}^0$ is classified with a probability extremely close to $1$ and the perturbed distance $h$ becomes extremely small, the PLMs have almost the same predictions on the perturbed instances and the original instance. As a result, the computation of the decision features becomes unstable, which goes beyond the limited precision of Python in stably manipulating floating point numbers. Also, extremely small perturbations lead to linear equation systems with large condition numbers, which are hard to solve numerically. Due to the above reasons, extremely small perturbations hurt the exactness of all methods.

The computation of the decision features becomes unstable due to two reasons. 

\nop{We can also see that when $h$ becomes extremely small, L1Dist of \texttt{ZOO} becomes very large.
The reason is that \texttt{ZOO} computes the decision features by  the symmetric difference quotient~\cite{lax2014calculus}, which requires to compute the reciprocal of $h$. When $h$ becomes extremely small, the computation of the reciprocal of $h$ becomes unstable, which goes beyond the limited precision of python in stably manipulating floating point numbers.  This is a subtle issue in \texttt{ZOO}.}

In contrast, since \texttt{OpenAPI} is able to find the exact decision features of a PLM with probability $1$, it achieves the best L1Dist performance on all data sets. In addition, as shown in Table \ref{tab:avg_iter}, \texttt{OpenAPI} can find the exact interpretations with only a small number of iterations.

\begin{table}[]
    \centering
    \begin{tabular}{|c|c|c|}
    \hline
        Data Set & FMNIST & MNIST\\
    \hline
        LMT & 6.0 & 8.6 \\
    \hline
        PLNN & 10.3 & 10.8 \\
    \hline
    \end{tabular}
    \caption{The average number of iterations of \texttt{OpenAPI} to compute the interpretations for the models}
    \label{tab:avg_iter}
\end{table}

\nop{
Figure~\ref{fig:dc} shows the decision features computed by OpenAPI for a typical image in each class.
The decision features of LMT are more sparse than the decision features of PLNN,  because we train LMT with sparse constraints.
However, since both LMT and PLNN are trained on the same training data, 
the decision features learnt by LMT highlight similar image patterns as the decision features of PLNN.
This demonstrates the robustness of \texttt{OpenAPI} in accurately interpreting general PLMs.
}

\nop{
These experimental results well demonstrate our claim in Theorem~\ref{thm:main}, that is, \texttt{OpenAPI} finds the exact decision features of a PLM with probability equal to 1.
}

\nop{
The L1Dist of \texttt{OpenAPI} outperform both \texttt{LIME} and \texttt{ZOO} on all data sets.

We can also see that 

The results show us that \texttt{\texttt{OpenAPI}} outperforms all other interpretation methods on providing exact interpretations. For example, the maximum value of \texttt{\texttt{OpenAPI}} is the smallest one among all interpretation methods in every case shown in Figure~\ref{fig:l1dist}.  That is to say, even for the worst case, \texttt{\texttt{OpenAPI}} achieves a better exactness of the interpretation. 
}

\nop{
the limited 

Interestingly, we can also see that the L1Dist of \texttt{ZOO} increases when the $h$ becomes extremely small. 

between every pair of sampled instances that are $2h$ away from each other.

This is due to the limited accuracy of python in computing floating-point numbers. When the scale of perturbation distance is as small as $10^{-16}$,

 then increases when $h$ becomes extremely small.
We can also see that t

We can also see form Figure~\ref{fig:l1dist} that, when the perturbation distance decreases, the L1Dist of \texttt{ZOO} first drops 
}

\nop{
samples instances by applying large perturbations, thus most of the sampled instances are not in the same locally linear region as the input instance.
}

\nop{
We evaluate the RD and WD of \texttt{ZOO} and \texttt{OpenAPI} on the testing data sets of FMNIST-1 and FMNIST-2 for both PLNN and LMT.
For each data set, we use every test instance as the input instance $\mathbf{x}^0$, and evaluate the RD and WD of the corresponding set of sampled instances. The average RD and WD of all test instances are reported in Figure~\ref{fig:wsrd} and Figure~\ref{fig:wswd}, respectively.
}


\nop{
Besides, we also observe that the optimal case of \texttt{\texttt{ZOO}} is comparable with \texttt{\texttt{OpenAPI}}.  This phenomenon is normal because the hypercubes selected by \texttt{\texttt{OpenAPI}} adaptively are probably similar with the regions generated by an optimal perturb distance $h$.  However, as stated in section \textcolor{blue}{\ref{sec:}}, \texttt{\texttt{OpenAPI}} has an advantage of obtaining the local region adaptively.   
}

\section{Conclusions}
\label{sec:con}
In this paper, we tackle the challenge of interpreting a PLM hidden behind an API.  In this problem, neither model parameters nor training data are available.
By finding the closed form solutions to a set of overdetermined equation systems constructed using a small set of sampled instances, we develop \texttt{OpenAPI}, a simple yet effective and efficient method accurately identifying the decision features of a PLM with probability $1$.
We report extensive experiments demonstrating the superior performance of \texttt{OpenAPI} in producing exact and consistent interpretations.
As future work, we will extend our work to reverse engineer PLMs hidden behind  APIs.


%
\bibliographystyle{abbrv}
\bibliography{OpenAPI}

\end{document}